\documentclass[11pt]{article}
\usepackage[margin=1.1in]{geometry}

\usepackage{amsmath, amssymb, amsfonts, amsthm, mathrsfs, amsbsy}
\usepackage{graphicx, epsfig}
\usepackage{tikz}
\usepackage[export]{adjustbox}
\usepackage{subcaption}

\usepackage{algorithm, algpseudocode}

\usepackage{array}
\usepackage{lipsum}

\DeclareMathOperator*{\argmin}{arg\,min}


\newcommand{\KL}{\mathrm{D}_\mathrm{KL}}
\newcommand{\tphi}{\tilde{\phi}}

\newcommand{\prox}{\text{prox}}

\newtheorem{theorem}{Theorem}
\newtheorem{prop}[theorem]{Proposition}
\newtheorem{lemma}[theorem]{Lemma}

\usepackage{enumitem}
\usepackage{xcolor}
\usepackage{hyperref}
 
\title{Sparse Transformer Architectures via \\Regularized Wasserstein Proximal Operator with $L_1$ Prior}
\author{
  Fuqun Han\thanks{Department of Mathematics, University of California, Los Angeles, CA, USA. \texttt{fqhan@math.ucla.edu}.  F. Han is partially supported by AFOSR YIP award No. FA9550-23-1-0087.}
  \and
  Stanley Osher\thanks{Department of Mathematics, University of California, Los Angeles, CA, USA. \texttt{sjo@math.ucla.edu}. 
   S. Osher is partially supported by DARPA under grant HR00112590074, NSF under grants 2208272 and 1554564, AFOSR under MURI grant N00014-20-1-278, and ARO under grant W911NF-24-1-015.
}
  \and
  Wuchen Li\thanks{Department of Mathematics, University of South Carolina, Columbia, SC, USA. \texttt{wuchen@mailbox.sc.edu}. W. Li is partially supported by AFOSR YIP award No. FA9550-23-1-0087, NSF DMS-2245097, and NSF RTG: 2038080.}
}
\date{}
 
\begin{document}
\maketitle
\begin{abstract}
In this work, we propose a sparse transformer architecture that incorporates prior information about the underlying data distribution directly into the transformer structure of the neural network. The design of the model is motivated by a special optimal transport problem, namely the regularized Wasserstein proximal operator, which admits a closed-form solution and turns out to be a special representation of transformer architectures. Compared with classical flow-based models, the proposed approach improves the convexity properties of the optimization problem and promotes sparsity in the generated samples. Through both theoretical analysis and numerical experiments, including applications in generative modeling and Bayesian inverse problems, we demonstrate that the sparse transformer achieves higher accuracy and faster convergence to the target distribution than classical neural ODE–based methods.
\end{abstract}

\section{Introduction}

Modern generative models, such as neural ordinary differential equations (neural ODEs) \cite{neural_ode}, transformers \cite{vaswani2017attention}, and diffusion models \cite{song2020score}, have demonstrated remarkable ability to learn and generate samples from complex, high-dimensional probability distributions.
These architectures have achieved broad success in scientific computing, image processing, and data science, offering scalable frameworks for data-driven modeling.
However, training and sampling in such spaces remain expensive and highly sensitive to architectural and optimization choices.

Despite these advances, the curse of dimensionality continues to present a fundamental challenge in many real-world applications.
Fortunately, numerous problems in scientific computing exhibit intrinsic structures, such as sparsity, low-rank representations, or approximate invariances, that can be interpreted as prior information about the underlying data or operators.
Leveraging such priors within generative models offers a promising avenue to improve both computational efficiency and generalization.

A classical way to incorporate prior information, such as sparsity or piecewise regularity, is through Bayesian modeling, where the posterior combines a prior distribution encoding structural knowledge with a likelihood function derived from observations. Examples include the $L_1$ prior in compressive sensing \cite{candes2006robust} and the total-variation prior in image reconstruction \cite{rudin1992nonlinear}.
Yet embedding such priors into modern neural networks is challenging: most architectures are general-purpose approximators, and sampling from complex priors (e.g., TV) is computationally prohibitive.

To address these challenges, we embed prior information into the neural architecture through the regularized Wasserstein proximal operator (RWPO).
Classical transformer and diffusion models are largely data-driven, learning attention or score functions from data with limited connection to underlying transport dynamics.
In contrast, our approach originates from the backward regularized Wasserstein proximal (BRWP) sampling algorithm \cite{han2025splitting}, developed to stabilize deterministic probability flows via alternating transport and semi-implicit proximal steps.
We scale up this sampling framework by embedding each BRWP update into a network layer: the transport step is parameterized by a learnable drift potential, and the proximal step becomes a transformer-type interaction derived from the RWPO kernel.

To formalize this idea, consider the problem of approximating a target distribution $\rho^* \in \mathbb{P}_2(\mathbb{R}^d)$, representing a probability density with finite second moment, given training data drawn from $\rho^*$. For a time horizon $T>0$, we define the variational problem
\begin{equation}
\label{rho_T_opt}
\rho_T = 
\arg\min_{\rho_T}\min_{\tphi_t}
\Bigg\{ 
\KL(\rho^*\|\rho_T)
+\frac{1}{2}
\int_0^T\!\!\int 
\|\nabla\tphi_t\|_2^2\,\rho_t\,dx\,dt
\Bigg\}\,,
\end{equation}
where $\KL(\rho^*\|\rho_T)$ denotes the Kullback–Leibler (KL) divergence between the data distribution $\rho^*$ and the model’s terminal distribution $\rho_T$.  
The inner minimization describes the most energy-efficient evolution of densities connecting $\rho_0$ and $\rho_T$ in the sense of Wasserstein-2 distance.  
The corresponding constrained dynamical system yields the Fokker–Planck equation, whose particle-level evolution is given by
\begin{equation}
\label{FPK_particle}
 dX_t = \nabla\tilde{\phi}_t(X_t) - \beta^{-1}\nabla\log\rho_t(X_t)\,,
\end{equation}
where $\tilde{\phi}_t:\mathbb{R}^d\to\mathbb{R}$ denotes the trainable drift potential, $X_t\sim\rho_t$, $\rho_t(x)=\rho(t,x)$ for $t\in[0,T]$, and $\beta>0$ controls the diffusion strength.
In this formulation, the particle dynamics generalize those of a neural ODE~\cite{neural_ode}. The drift term $\nabla\tilde{\phi}_t$ serves as the neural network–parameterized drift potential, while the additional score term $-\beta^{-1}\nabla\log\rho_t$ introduces a diffusion-driven regularization absent in classical neural ODEs. This score-based correction couples the flow with the Fokker–Planck evolution of the underlying density, acting as a diffusion-induced force that repels particles from high-density regions, prevents collapse, and balances the attractive drift $\nabla\tilde{\phi}_t$ toward the target distribution.

In high-dimensional settings, estimating the score function$\nabla\log\rho_t$ directly is computationally expensive and often unstable.  
The RWPO framework circumvents this difficulty by providing a first-order approximation of the density evolution in \eqref{FPK_particle}, while retaining a closed-form representation that approximates the score function.

More concretely, assuming $\tphi_t = \phi_t - \psi$, we take $\phi_t$ to be represented by a residual neural network and  
$\psi(x) = \lambda \|x\|_1$
to encode prior information on data sparsity in a time-independent potential.  
Denote the tokens at time $t_k$ as $X^k = \{x_j^k\}_{j=1}^N$.  
When the step size $h$ is small, by a first-order approximation of the RWPO kernel, the tokens (samples) evolve according to
\begin{equation}
\label{eqn_particle_intro}
\begin{cases}
x_j^{k+1/2} = x_j^k + h\nabla\phi_k(x_j^k)\,, \\
x_j^{k+1} = x_j^{k+1/2} + \tfrac{1}{2}\!\left[\,S_{\lambda h}(x_j^{k+1/2}) 
- \sum_{\ell=1}^N \operatorname{softmax}\!\left(U(x_j^{k+1/2},X^{k+1/2})\right)_{\ell}\,x_{\ell}^{k+1/2}\right]\,,
\end{cases}
\end{equation}
where $\phi_k = \phi_{t_k}$.  
Here
\[
S_{\lambda h}(x) = \operatorname{ReLU}(|x|-\lambda h) 
\quad\text{with}\quad 
\operatorname{ReLU}(z) = \max\{0,z\}
\]
is the proximal operator of the $\ell_1$-norm, and the interaction kernel is defined by
\[
U(x,y) := -\frac{\beta}{2}\left(\frac{\|x-y\|_2^2-\|S_{\lambda h}(x)-y\|_2^2}{2h}-\lambda\|S_{\lambda h}(y)\|_1\right), 
\]
with $\operatorname{softmax}(\omega) = \left(\frac{\exp(\omega_j)}{\sum_{\ell=1}^N \exp(\omega_{\ell})}\right)_{1\leq j \leq N}$.

The second update in \eqref{eqn_particle_intro} can be viewed as a transformer-type interaction layer, which we refer to as a sparse transformer layer.  
For comparison, recall that a standard transformer block (cf.~\cite{castin2025unified, geshkovski2023mathematical, sander2022sinkformers}) with value, query, and key matrices $V \in \mathbb{R}^{d\times d}$ and $Q,K \in \mathbb{R}^{m\times d}$ can be written in residual form as
\begin{equation}
\label{general_transformer}
x_j^{k+1} = x_j^{k} + h \sum_{\ell=1}^N 
\operatorname{softmax}\!\left((Qx_j^k \cdot KX^k)\right)_{\ell}\, V x_{\ell}^k\,,
\end{equation}
which can be interpreted as an interacting particle system with kernel $Qx_j^k \!\cdot\! Kx_\ell^k$.  
In contrast, the proposed sparse transformer layer replaces the linear dot-product kernel with a nonlinear RWPO-based kernel where $Qx\cdot Ky$ is replaced by $U(x,y)$.
This substitution preserves the attention-style weighted averaging but introduces a potential-driven interaction derived from optimal-transport dynamics.  
The term $S_{\lambda h}$ enforces sparsity, while the diffusion term enhances stability.  
Consequently, the layer acts as a physics-motivated analogue of transformer attention, embedding score-based diffusion into the network.

Classical approaches to generative modeling often rely on approximating score functions.  
Score matching \cite{hyvarinen2005estimation} and its denoising variant \cite{vincent2011} laid the foundation for score-based generative models and diffusion methods, which learn time-dependent score functions and generate samples via reverse-time dynamics \cite{song2019generative, song2020score}.  
Complementary to these stochastic formulations, continuous-time viewpoints interpret learning and generation through deterministic transport flows.  
Neural ODEs \cite{neural_ode} view residual networks as discretizations of continuous transformations between distributions, while Langevin-type dynamics and their discretizations provide theoretically grounded sampling schemes with convergence guarantees under log-concavity assumptions \cite{roberts1996exponential, durmus2019analysis}.  
These frameworks emphasize two central ingredients in modern generative modeling: score estimation and space-time discretizations of transport dynamics.

Our work departs from score-estimation methods and instead builds on regularized Wasserstein gradient flows.
The Jordan–Kinderlehrer–Otto (JKO) scheme \cite{jko1998} characterizes diffusion as an entropy gradient flow in Wasserstein space, linking sampling, dissipation, and variational regularization.
This viewpoint motivates Wasserstein proximal operators as stable, structure-preserving updates \cite{ambrosio2005gradient, santambrogio2017euclidean} and has recently inspired OT-based attention mechanisms \cite{sander2022sinkformers, geshkovski2023mathematical}.
From a Bayesian perspective, these proximal operators naturally encode priors via penalties such as the Lasso \cite{tibshirani1996regression} or total variation \cite{rudin1992nonlinear}.
Building on these ideas, our framework embeds the RWPO directly into the network, allowing priors to shape token evolution through each layer.
It unifies three principles:
(i) optimal-transport geometry via Wasserstein proximal steps approximating Fokker–Planck evolution;
(ii) Bayesian priors through explicit proximal penalties promoting sparsity; and
(iii) transformer-type interaction arising from the softmax normalization of the RWPO kernel.
The resulting sparse-transformer layer acts as a single proximal step in Wasserstein space with a closed-form attention update that preserves prior-induced structure.

An overview of the remaining sections is as follows.  
Section~\ref{sec_network} introduces the RWPO layer and sparse transformer architecture.  
Section~\ref{sec_analysis} provides the theoretical analysis of convergence and stability for the forward dynamics and optimization problem.  
Section~\ref{sec_NE} demonstrates performance in generative modeling and Bayesian inverse problems.

\section{Proposed network structure and PDE formulation} 
\label{sec_network}
\subsection{Probability flow and particle updates}
\label{sec:prob_flow}

For generative modeling, the central task is to generate new samples from a target distribution $\rho^*$, typically parameterized through a neural network.  
We begin by recalling the KL divergence between two distributions,
\[
\KL(\rho\|\rho^*) = \int \rho \log \tfrac{\rho}{\rho^*}\,dx\,.
\]
The Wasserstein gradient flow of the KL divergence yields the Fokker–Planck equation
\begin{equation}
\label{FPK}
    \partial_t\rho + \nabla\!\cdot(\rho\nabla \tphi) = \beta^{-1}\Delta \rho\,,
\end{equation}
where $\tphi = -\log\rho^*$.  
At the particle level, this corresponds to the probability flow 
\begin{equation}
\label{def_ode}
dX_t = \nabla \tphi(X_t) - \beta^{-1}\nabla \log \rho_t(X_t)\,,
\end{equation}
with $X_t\!\sim\!\rho_t$.  
Under standard smoothness and log-concavity assumptions on $\rho^*$, the dynamics \eqref{def_ode} converge exponentially fast to $\rho^*$ \cite{roberts1996exponential}.  

Equation~\eqref{def_ode} admits two complementary interpretations.  
In sampling, it represents forward stochastic dynamics that asymptotically draw samples from $\rho^*$ when the drift potential is known.  
For generative modeling, the same drift potential can be parameterized by a neural network, transforming a simple base distribution $\rho_0$ into $\rho^*$ through a sequence of deterministic updates.  
This viewpoint connects probability flows with neural network architectures: the drift potential $\phi_t$ acts as a trainable function governing density evolution, and each discretization step corresponds to a layer of the neural network.

In practice, two difficulties arise when discretizing \eqref{def_ode}:  
(i) computing or approximating $\nabla \log \rho_t$ is expensive and often unstable, and  
(ii) the dynamics are highly sensitive to the choice of step size.  
To address these challenges, \cite{han2025splitting} introduced the deterministic BRWP-splitting scheme for sampling from distributions of the form
\[
\rho^*(x) = \tfrac{1}{Z}\exp(\phi(x)-\psi(x)) = \tfrac{1}{Z}\exp(-\tphi(x))\,,
\]
where $\psi$ encodes prior regularization and $Z = \int \exp(-\tphi(x))dx < \infty$ is a normalizing constant.  
The corresponding regularized particle dynamic is
\begin{equation}
\label{def_ode_prior}
dX_t = \nabla \phi_t(X_t) - \nabla \psi(X_t) - \beta^{-1}\nabla \log \rho_t(X_t)\,.
\end{equation}
A forward-Euler discretization with operator splitting \cite{han2025splitting}  yields the following update
\begin{equation}
\label{def_split_particle}
\begin{cases}
x_j^{k+1/2}  = x_j^k + h\nabla \phi_k(x_j^k)\,,\\[3pt]
x_j^{k+1}  = x_j^{k+1/2} - h\nabla \psi(x_j^{k+1/2}) - h\beta^{-1}\nabla \log \rho_{k+1/2}(x_j^{k+1/2})\,,
\end{cases}
\end{equation}
where $X^k = \{x_j^k\}_{j=1}^N$ denotes the ensemble of particles at time step $k$.  
The first step applies the drift potential induced by $\phi_k$, and the second introduces the prior and diffusion corrections.  

Direct evaluation of the score function $\nabla\log\rho_t$ remains challenging in high dimensions.  
To stabilize the dynamics and avoid explicit score estimation, we replace the second step in \eqref{def_split_particle} with a semi-implicit update based on the RWPO.  
The resulting scheme, whose derivation is detailed below, evolves as
\begin{equation}
\label{eqn_particle}
\begin{cases}
x_j^{k+1/2} = x_j^k + h\nabla \phi(x_j^k)\,,\\[3pt]
x_j^{k+1} = x_j^{k+1/2} + \tfrac{1}{2}\!\left[\,\prox_{\psi}^h(x_j^{k+1/2}) 
- \sum_{\ell=1}^N \operatorname{softmax}\!\big(U(x_j^{k+1/2},X^{k+1/2})\big)_{\ell}\,x_{\ell}^{k+1/2}\right]\!,
\end{cases}
\end{equation}
where $\prox_{\psi}^h$ is the proximal operator of $\psi$,
\[
\prox_{\psi}^h(x) := \argmin_y \Big\{\psi(y)+\tfrac{\|x-y\|_2^2}{2h}\Big\},
\quad
U(x,y) := -\tfrac{\beta}{2}\!\left(\tfrac{\|x-y\|_2^2-\|\prox_{\psi}^h(x)-y\|_2^2}{2h}-\psi(\prox_{\psi}^h(y))\right)\!.
\]
This formulation combines closed-form proximal updates with a semi-implicit treatment of the score term, allowing large, stable time steps without explicitly computing $\nabla\log\rho_t$.  
The interaction kernel $U$ arises from a Laplace approximation of the RWPO operator, as described in the next subsection.

In the original BRWP scheme, the flow alternates between a transport step guided by the drift potential and a proximal step that enforces prior regularization, yielding a stable deterministic sampling algorithm.  
Here, we reinterpret each BRWP update as a neural network layer: the transport step becomes a learnable map $\nabla\phi_k$ parameterized by a ResNet, while the proximal step is realized as a transformer-type interaction derived from the RWPO kernel.  
This construction converts the sampling algorithm into a deep, scalable architecture for generative modeling.  
To emphasize the connection with attention mechanisms, the particles $x_j^k$ are referred to as \emph{tokens}. The first update in \eqref{eqn_particle} acts as a learnable transport map, and the second enforces prior-induced regularization with diffusion.

The overall network structure is illustrated in Fig.\,\ref{fig_diagram}, showing how the learned drift potential $\phi$ is coupled with an RWPO-based sparse-transformer layer that enforces sparsity through proximal updates.  
A detailed derivation follows.
\begin{figure}[h]
    \centering
    \includegraphics[width=1\linewidth]{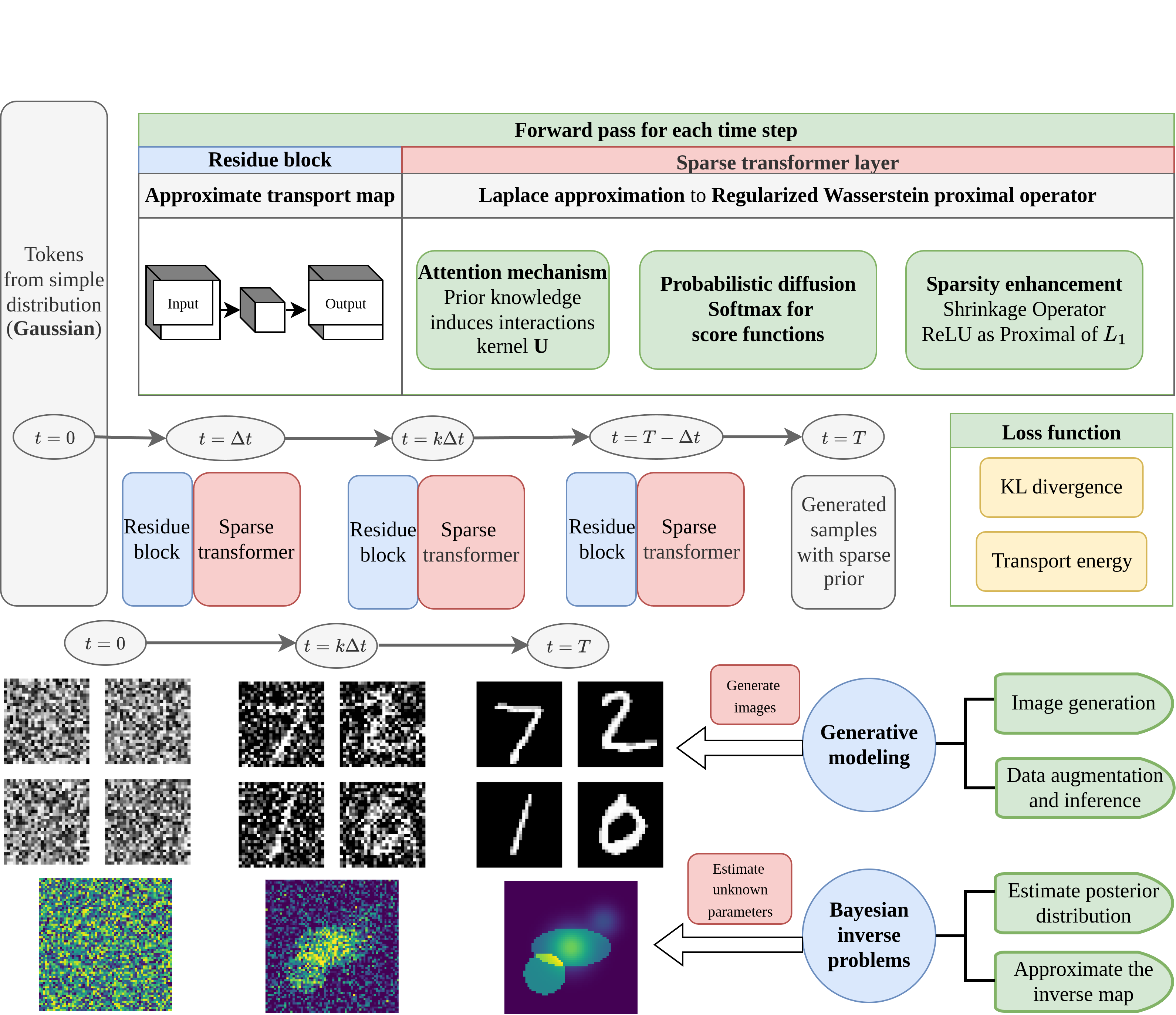}
    \caption{Network structure: the drift potential $\phi$ is approximated by a ResNet, followed by a sparse-transformer (RWPO) layer that enforces sparsity through proximal updates.}
    \label{fig_diagram}
\end{figure}

\subsection{Regularized Wasserstein proximal operator layer}
\label{sec:RWPO_layer}

We now detail the derivation of the system \eqref{eqn_particle} and update rule underlying Fig.\,\ref{fig_diagram}.  
The main numerical difficulty in \eqref{def_split_particle} lies in the unstable approximation of the score function $\nabla\log\rho_t$.  
To mitigate this, we adopt a semi-implicit strategy by evaluating the score at the next time step $\rho_{k+1}$, replacing the second update in \eqref{def_split_particle} with
\begin{equation}
\label{def_split_particle_2}
x_j^{k+1} = x_j^{k+1/2} - h \nabla \psi(x_j^{k+1/2}) - h\beta^{-1}\nabla \log \rho_{k+1}(x_j^{k+1/2})\,.
\end{equation}
Formally, $\rho_{k+1}$ can be characterized through the JKO scheme, which requires solving a high-dimensional variational problem.  
To obtain a tractable approximation, we instead employ the \emph{regularized Wasserstein proximal operator}  \cite{kernel_proximal}, defined as
\begin{equation}
\label{eqn:Wprox}
\mathcal{K}^h_{\psi}(\rho_k)
:=
\argmin_{\rho_h}\,\min_{v,\rho}
\int \psi(x)\,\rho_h(x)\,dx
+\tfrac{1}{2}\int_0^h \!\!\int \|v(x,t)\|_2^2 \rho(x,t)\,dx\,dt,
\end{equation}
subject to the Fokker–Planck equation \eqref{FPK} with $\nabla\tphi$ replaced by $v$, initial condition $\rho(x,0)=\rho_k(x)$, and terminal condition $\rho_h(x)=\rho(x,h)$.  
We assume the minimization admits a solution under standard regularity conditions on $\rho_k$ and $\psi$.

From the analysis in \cite{han2024convergence}, the RWPO provides a second-order accurate approximation of the JKO update:
\[
\mathcal{K}^h_{\psi}(\rho_{k+1/2}) = \rho_{k+1} + \mathcal{O}(h^2)\,.
\]
This justifies replacing $\rho_{k+1}$ in \eqref{def_split_particle_2} with its RWPO approximation, leading to a stable and implementable update.

When $\psi$ is nonsmooth (e.g., $\psi(x)=\lambda\|x\|_1$), we denote by $\partial\psi$ its subdifferential.  
The gradient step then becomes the proximal relation
\[
\frac{x - \prox^h_{\psi}(x)}{h} \in \partial \psi(x)\,,
\]
which ensures consistency with the nonsmooth regularization.

Combining the drift and proximal corrections yields the complete token updates
\begin{equation}
\label{def_split_particle_implicit}
\begin{cases}
x_j^{k+1/2} = x_j^k + h\nabla \phi_k(x_j^k)\,,\\[3pt]
x_j^{k+1}   = \prox_{\psi}^h(x_j^{k+1/2}) - h\beta^{-1}\nabla \log \mathcal{K}^h_{\psi}\rho_{k+1/2}(x_j^{k+1/2})\,,
\end{cases}
\end{equation}
which defines a semi-implicit update integrating prior regularization and the regularized Wasserstein proximal operator.  

\subsection{Sparse transformer with \texorpdfstring{$L_1$}{L1} prior}
\label{sec:L1layer}

We now specialize the RWPO layer to the case where the target distribution exhibits sparsity, a common scenario in Bayesian inverse problems and image reconstruction.  
A classical choice of sparsity-promoting prior is the $L_1$ potential $\psi(x) = \lambda \|x\|_1.$

For this choice, the proximal operator takes the soft-thresholding form
\[
S_{\lambda h}(x_i) = \prox_{\lambda\|\cdot\|_1}^h(x)_i =
\begin{cases}
x_i - \lambda h & \text{if } x_i > \lambda h,\\
0 & \text{if } |x_i| \leq \lambda h,\\
x_i + \lambda h & \text{if } x_i < -\lambda h\,,
\end{cases}
= \mathrm{ReLU}(|x_i|-\lambda h)\,.
\]

Next, we recall a general closed-form expression for the RWPO operator, which serves as the foundation for constructing token-interaction kernels.  
The result follows from applying the Hopf–Cole transform, whose proof can be found in the Appendix.

\begin{lemma}
\label{lem:rwpo}
Let $\psi:\mathbb{R}^d\to\mathbb{R}$ be a convex potential, and let $\rho_k \in \mathbb{P}^2(\mathbb{R}^d)$.  
Then the regularized Wasserstein proximal operator \eqref{eqn:Wprox} admits the integral representation
\begin{equation}
\label{closed_RWPO}
\mathcal{K}^h_{\psi} \rho_k(x) 
= \int_{\mathbb{R}^d}
\frac{\exp\!\left[-\tfrac{\beta}{2}\!\left(\psi(x) + \tfrac{\|x-y\|_2^2}{2h}\right)\right]}
{\displaystyle \int_{\mathbb{R}^d}\!\exp\!\left[-\tfrac{\beta}{2}\!\left(\psi(z) + \tfrac{\|z-y\|_2^2}{2h}\right)\right]dz}\,
\rho_k(y)\,dy\,,
\end{equation}
where the kernel depends on $\psi$, the step size $h>0$, and the inverse temperature constant (diffusion strength) $\beta>0$. 
\end{lemma}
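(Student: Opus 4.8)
The plan is to linearize the RWPO variational problem \eqref{eqn:Wprox} via the Benamou--Brenier / Hopf--Cole correspondence. First I would write down the first-order optimality (KKT) system for the inner minimization over $(v,\rho)$ subject to the Fokker--Planck constraint $\partial_t\rho + \nabla\cdot(\rho v) = \beta^{-1}\Delta\rho$. Introducing a Lagrange multiplier $\Phi(x,t)$ for the constraint, stationarity in $v$ gives $v = \nabla\Phi$, and stationarity in $\rho$ yields a Hamilton--Jacobi--Bellman equation $\partial_t\Phi + \tfrac12\|\nabla\Phi\|_2^2 + \beta^{-1}\Delta\Phi = 0$ (the sign of the viscous term being the one conjugate to the forward Fokker--Planck), together with the terminal transversality condition $\Phi(x,h) = -\psi(x)$ coming from the $\int\psi\,\rho_h$ term and the free endpoint $\rho_h$. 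So the optimal flow is characterized by the coupled forward--backward system: $\rho$ solving the viscous continuity equation forward from $\rho_k$, and $\Phi$ solving the backward HJB with terminal data $-\psi$.

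Next I would apply the Hopf--Cole (Cole--Hopf) transform to both equations: set $\eta(x,t) = \exp(\tfrac{\beta}{2}\Phi(x,t))$ and $\hat\eta(x,t) = \rho(x,t)\exp(-\tfrac{\beta}{2}\Phi(x,t))$. A direct computation shows the nonlinear HJB for $\Phi$ becomes the linear backward heat equation $\partial_t\eta + \beta^{-1}\Delta\eta = 0$, while the forward viscous continuity equation for $\rho$ becomes the linear forward heat equation $\partial_t\hat\eta = \beta^{-1}\Delta\hat\eta$. The boundary data transform accordingly: $\eta(x,h) = \exp(-\tfrac{\beta}{2}\psi(x))$ and $\hat\eta(x,0) = \rho_k(x)\exp(-\tfrac{\beta}{2}\Phi(x,0))$, with $\rho(x,h) = \eta(x,h)\hat\eta(x,h) = \mathcal{K}^h_\psi\rho_k(x)$. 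Both linear equations are solved by convolution against the heat kernel $G_t(x) = (2\pi\beta^{-1}t)^{-d/2}\exp(-\tfrac{\beta\|x\|_2^2}{2t})$ of the operator $\partial_t \mp \beta^{-1}\Delta$; propagating $\eta$ backward from time $h$ to time $t$ and $\hat\eta$ forward from $0$ to $t$, then multiplying, and finally evaluating at $t=h$, collapses the intermediate Gaussian integrals.

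Then I would assemble the endpoint density. Writing $\rho(x,h) = \eta(x,h)\hat\eta(x,h)$ and unwinding the convolutions: $\hat\eta(x,h)$ involves $\int G_h(x-y)\rho_k(y)\exp(-\tfrac{\beta}{2}\Phi(y,0))\,dy$ and $\exp(-\tfrac{\beta}{2}\Phi(y,0)) = 1/\eta(y,0) = 1/\int G_h(y-z)\exp(-\tfrac{\beta}{2}\psi(z))\,dz$, while $\eta(x,h) = \exp(-\tfrac{\beta}{2}\psi(x))$. Collecting the exponential factors $\exp(-\tfrac{\beta}{2}\psi(x))\cdot\exp(-\tfrac{\beta\|x-y\|_2^2}{2h})$ in the numerator and the normalizing integral $\int \exp(-\tfrac{\beta}{2}\psi(z) - \tfrac{\beta\|z-y\|_2^2}{2h})\,dz$ in the denominator (the $(2\pi\beta^{-1}h)^{-d/2}$ prefactors cancel between numerator and denominator), one arrives exactly at \eqref{closed_RWPO}. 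A sanity check that the kernel integrates to one in $x$ for each fixed $y$ confirms the normalization and that $\mathcal{K}^h_\psi\rho_k$ is a probability density.

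The main obstacle is justifying the formal steps rather than the algebra: the HJB equation for $\Phi$ need not have a classical solution when $\psi$ is merely convex (e.g.\ $\psi=\lambda\|\cdot\|_1$ is nonsmooth), so the derivation should be read at the level of viscosity/weak solutions, or one should first establish the identity for smooth strictly convex $\psi$ with controlled growth and then pass to the limit by approximation, using stability of the heat semigroup and dominated convergence to extend \eqref{closed_RWPO} to the general convex case. One must also verify the growth/integrability conditions ($\rho_k\in\mathbb{P}^2$, convexity of $\psi$ bounding it below by an affine function) that make all the Gaussian integrals finite and the interchange of integration valid; these are the hypotheses bundled into the standing regularity assumption stated before \eqref{eqn:Wprox}. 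I would relegate the rigorous version of the Hopf--Cole step and this limiting argument to the Appendix, keeping the main text at the level of the formal computation above.
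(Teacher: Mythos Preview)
Your proposal is correct and follows essentially the same route as the paper: introduce a Lagrange multiplier, derive the coupled forward Fokker--Planck / backward HJB system with terminal data $\Phi(\cdot,h)=-\psi$, apply the Hopf--Cole transform $\eta=\exp(\tfrac{\beta}{2}\Phi)$, $\hat\eta=\rho/\eta$ to decouple into a pair of heat equations, and solve by convolution with the heat kernel. Your write-up is in fact more explicit than the paper's about the final assembly (cancellation of the Gaussian prefactors) and about the regularity caveats for nonsmooth~$\psi$, which the paper handles only at the formal level.
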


The representation \eqref{closed_RWPO} involves a high-dimensional integral that is generally intractable.  
In practice, we approximate $\rho_k$ by the empirical measure
$\rho_k(x) \approx \tfrac{1}{N}\sum_{j=1}^N \delta_{x^k_j}(x)$
and apply the Laplace approximation to obtain a computable first-order estimate of the score function.

\begin{prop}
\label{prop:approx_score}
Let $\psi(x) = \lambda \|x\|_1$.  
As $h \to 0$, the score function associated with $\mathcal{K}^h_\psi$ satisfies the first-order approximation
\begin{align}
\label{approx_score}
\lim_{h\rightarrow 0}\!\left|\nabla \log \mathcal{K}_{\psi}^{h} \rho_{k+1/2}(x)
-\left[ - \frac{\beta}{2}\!\left( \frac{x - S_{\lambda h}(x)}{h}
+ \frac{\sum_{j=1}^N (x - x_j^{k+1/2}) \exp(U(x,x_j^{k+1/2}))}{h\sum_{j=1}^N \exp(U(x,x_j^{k+1/2}))} \right)\right]\!\right| = 0\,,
\end{align}
where $S_{\lambda h}(x)$ is the soft-thresholding operator and
\[
U(x,y) := -\frac{\beta}{2}\!\left(\frac{\|x-y\|_2^2-\|S_{\lambda h}(x)-y\|_2^2}{2h}-\lambda\|S_{\lambda h}(y)\|_1\right).
\]
Consequently, the second update step in \eqref{def_split_particle_implicit} reduces to
\begin{equation}
\label{approx_score_2}
x_j^{k+1} = x_j^{k+1/2} + \tfrac{1}{2}\!\left[S_{\lambda h}(x_j^{k+1/2})
- \sum_{\ell=1}^N \operatorname{softmax}(U(x_j^{k+1/2},\,X^{k+1/2}))_{\ell}\,x_{\ell}^{k+1/2}\right].
\end{equation}
\end{prop}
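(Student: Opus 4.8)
The plan is to start from the closed-form RWPO representation \eqref{closed_RWPO} with $\psi(x)=\lambda\|x\|_1$ and the empirical measure $\rho_{k+1/2}\approx\frac1N\sum_j\delta_{x_j^{k+1/2}}$. Substituting the empirical measure collapses the outer integral to a finite sum, so $\mathcal K^h_\psi\rho_{k+1/2}(x)$ becomes a ratio in which the numerator is $\sum_j \exp[-\tfrac{\beta}{2}(\lambda\|x\|_1+\tfrac{\|x-x_j\|_2^2}{2h})]\big/Z(x_j)$ and $Z(y)=\int \exp[-\tfrac{\beta}{2}(\lambda\|z\|_1+\tfrac{\|z-y\|_2^2}{2h})]\,dz$ is the normalizing integral. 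The first step is then to evaluate $Z(y)$ asymptotically by the Laplace method as $h\to0$: the exponent is minimized at $z=\prox^h_\psi(y)=S_{\lambda h}(y)$, and a second-order expansion around that minimizer yields $Z(y)\sim C\,h^{d/2}\exp[-\tfrac{\beta}{2}(\lambda\|S_{\lambda h}(y)\|_1+\tfrac{\|S_{\lambda h}(y)-y\|_2^2}{2h})]$ up to lower-order multiplicative factors (the Hessian prefactor is coordinatewise $1$ or $1/h$ depending on whether each coordinate is thresholded, but these $h$-powers and constants are common to all $j$ and cancel in the score). Writing $\lambda\|S_{\lambda h}(y)\|_1+\tfrac{\|S_{\lambda h}(y)-y\|_2^2}{2h}=\tfrac{\|y\|_2^2-\|S_{\lambda h}(y)\|_2^2}{2h}$ via the soft-threshold identity lets one absorb the exponent into a clean form; after the cancellations the numerator terms are proportional to $\exp[-\tfrac\beta2(\lambda\|x\|_1+\tfrac{\|x-x_j\|_2^2}{2h}-\tfrac{\|x_j\|_2^2-\|S_{\lambda h}(x_j)\|_2^2}{2h}+o(1))]$.

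Next I would take $\log$ and differentiate in $x$. Since $\lambda\|x\|_1$ appears in every term of the numerator sum identically, it factors out of the ratio and contributes $-\tfrac\beta2\,\lambda\,\partial\|x\|_1 = -\tfrac\beta2\cdot\tfrac{x-S_{\lambda h}(x)}{h}$ (using the proximal relation $\tfrac{x-S_{\lambda h}(x)}{h}\in\partial\psi(x)$), which is exactly the first bracketed term in \eqref{approx_score}. The remaining $x$-dependence is in $\sum_j \exp[-\tfrac\beta2\cdot\tfrac{\|x-x_j\|_2^2}{2h}+\tfrac\beta2\cdot\tfrac{\|x_j\|_2^2-\|S_{\lambda h}(x_j)\|_2^2}{2h}]$; to match the stated kernel $U(x,y)$ one rewrites $-\tfrac{\|x-x_j\|_2^2}{2h}+\tfrac{\|x_j\|_2^2-\|S_{\lambda h}(x_j)\|_2^2}{2h}$, and I would verify algebraically that modulo additive terms independent of $j$ (hence irrelevant to the softmax ratio) this equals $-\tfrac{\|x-x_j\|_2^2-\|S_{\lambda h}(x)-x_j\|_2^2}{2h}+\lambda\|S_{\lambda h}(x_j)\|_1$ up to the $\tfrac\beta2$ scaling in $U$; the $x$-only term $\|S_{\lambda h}(x)-x_j\|_2^2$ inserted here is $j$-dependent but its $x$-gradient must be accounted for, so one tracks it carefully. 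Differentiating $\log\sum_j \exp(U(x,x_j))$ in $x$ gives $\sum_j (\nabla_x U(x,x_j))\,\mathrm{softmax}(U(x,X))_j$; computing $\nabla_x U(x,x_j) = -\tfrac\beta2\cdot\tfrac{(x-x_j)-(S_{\lambda h}(x)-x_j)\nabla S_{\lambda h}(x)}{h}$ and observing that the Jacobian $\nabla S_{\lambda h}(x)$ is a diagonal $0/1$ matrix, the contribution organizes into $-\tfrac\beta2\cdot\tfrac{x-x_j}{h}$ plus a correction; the key simplification is that on the thresholded coordinates ($|x_i|\le\lambda h$) one has $S_{\lambda h}(x)_i=0$, and the analysis of \cite{han2025splitting} (or a direct $h\to0$ estimate) shows the correction term is $o(1)$, leaving precisely $-\tfrac\beta2\cdot\tfrac{\sum_j(x-x_j)\exp(U(x,x_j))}{h\sum_j\exp(U(x,x_j))}$.

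Finally, combining the two gradient contributions reproduces the bracketed expression in \eqref{approx_score}, and the limit statement follows once all $o(1)$ remainders (from the Laplace prefactor expansion and from the coordinatewise Jacobian correction) are shown to vanish as $h\to0$ uniformly on the relevant $x$. To get \eqref{approx_score_2}, substitute the approximate score into the second line of \eqref{def_split_particle_implicit}: $x_j^{k+1}=S_{\lambda h}(x_j^{k+1/2}) - h\beta^{-1}\nabla\log\mathcal K^h_\psi\rho_{k+1/2}(x_j^{k+1/2})$, replace $\nabla\log\mathcal K^h_\psi\rho_{k+1/2}$ by the bracket, and note $-h\beta^{-1}\cdot(-\tfrac\beta2)\cdot\tfrac{x-S_{\lambda h}(x)}{h}=\tfrac12(x-S_{\lambda h}(x))$, which combines with the leading $S_{\lambda h}(x_j^{k+1/2})$ to produce $x_j^{k+1/2}-\tfrac12(x_j^{k+1/2}-S_{\lambda h}(x_j^{k+1/2}))=x_j^{k+1/2}+\tfrac12 S_{\lambda h}(x_j^{k+1/2})-\tfrac12 x_j^{k+1/2}$; a short rearrangement together with the softmax term yields exactly \eqref{approx_score_2}.

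I expect the main obstacle to be the Laplace-approximation bookkeeping at the nonsmooth points of $\psi=\lambda\|\cdot\|_1$: when a coordinate of $y$ satisfies $|y_i|\le\lambda h$ the minimizer of $z\mapsto\lambda|z|+\tfrac{(z-y_i)^2}{2h}$ sits at the kink $z=0$, so the standard smooth Laplace expansion does not directly apply and one must instead evaluate a one-sided (or half-line) Gaussian-type integral and argue that its prefactor, while of a different $h$-order than in the smooth case, is still $j$-independent and cancels in the ratio defining the score. Handling this kink contribution rigorously — and confirming that the induced error is genuinely $o(1)$ after differentiation rather than $O(1)$ — is the delicate part; the rest is the deterministic algebra of completing squares to recognize the kernel $U$ and differentiating the softmax.
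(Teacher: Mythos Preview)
Your overall strategy---insert the empirical measure into \eqref{closed_RWPO}, Laplace-approximate the normalizing integral $Z(y)$, and read off the score from the resulting ratio---is exactly the paper's, whose entire proof is the one line ``the result follows by isolating the dominant contribution of the kernel integral in \eqref{closed_RWPO} as $h\to 0$; see \cite{han2025splitting} for details.'' You have correctly flagged the nonsmooth-Laplace issue at the $L_1$ kink as the delicate step, and your passage from \eqref{approx_score} to \eqref{approx_score_2} is clean.

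There is, however, a concrete gap in how you extract the second bracketed term of \eqref{approx_score}. You propose to get it by differentiating $\log\sum_j\exp(U(x,x_j))$ and asserting that $\nabla_x U(x,x_j)=-\tfrac{\beta}{2h}(x-x_j)+o(1)$. That assertion fails on every \emph{un}-thresholded coordinate: for $|x_i|>\lambda h$ one has $(\nabla S_{\lambda h}(x))_{ii}=1$, so
\[
(\nabla_x U)_i
=-\tfrac{\beta}{2h}\bigl[(x_i-x_{j,i})-(S_{\lambda h}(x)_i-x_{j,i})\bigr]
=-\tfrac{\beta\lambda}{2}\operatorname{sign}(x_i),
\]
which misses $-\tfrac{\beta}{2h}(x_i-x_{j,i})$ by the $O(1/h)$ quantity $-\tfrac{\beta}{2h}(S_{\lambda h}(x)_i-x_{j,i})$, not by $o(1)$. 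The repair is to swap the order of your two operations: differentiate the \emph{exact} empirical expression $\mathcal K^h_\psi\rho(x)=\tfrac1N\sum_j G(x,x_j)/Z(x_j)$ with $G(x,y)=\exp[-\tfrac{\beta}{2}(\psi(x)+\tfrac{\|x-y\|_2^2}{2h})]$ \emph{before} any approximation. Since $\nabla_x\log G(x,x_j)=-\tfrac{\beta}{2}\bigl(\nabla\psi(x)+\tfrac{x-x_j}{h}\bigr)$, this already yields
\[
\nabla_x\log\mathcal K^h_\psi\rho(x)
= -\tfrac{\beta}{2}\,\tfrac{x-S_{\lambda h}(x)}{h}
  -\tfrac{\beta}{2h}\sum_j(x-x_j)\,\pi_j(x),
\qquad
\pi_j(x)=\frac{G(x,x_j)/Z(x_j)}{\sum_\ell G(x,x_\ell)/Z(x_\ell)},
\]
with the factor $(x-x_j)/h$ appearing exactly---no correction to control. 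Only \emph{then} apply Laplace to $Z(x_j)$ to identify the normalized weights $\pi_j(x)$ with $\operatorname{softmax}(U(x,X))_j$, using that all $j$-independent multiplicative constants cancel in the ratio (this is precisely the ``normalization constants cancel'' remark following the paper's proof). With this ordering you never need to differentiate $U$, and the problematic $o(1)$ claim disappears.
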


\begin{proof}
The result follows by isolating the dominant contribution of the kernel integral in \eqref{closed_RWPO} as $h \to 0$; see \cite{han2025splitting} for details.
\end{proof}

If $\psi$ is $C^{2}$ with a non-degenerate minimizer, the approximation error in \eqref{approx_score} is $\mathcal{O}(h)$ by the standard Laplace method, with the leading correction determined by the local Hessian of $\psi$.  
When $\psi$ is only Lipschitz—as in the $L_{1}$ case—the limit in \eqref{approx_score} still holds because the score function can be expressed as $\nabla \log \mathcal{K}_{\psi}^{h} \rho = (\nabla \mathcal{K}_{\psi}^{h} \rho)/(\mathcal{K}_{\psi}^{h} \rho)$, where normalization constants cancel without requiring smoothness.  

The closed-form RWPO update \eqref{approx_score_2} thus defines the \emph{sparse-transformer layer}, in which the RWPO replaces the standard dot-product attention with an optimal-transport kernel as discussed in \eqref{general_transformer}, embedding structural priors directly into the network. Furthermore, because the RWPO update admits closed-form proximal and softmax evaluations, the computational cost per layer is comparable to standard transformer layer, ensuring scalability to large token sets.

\subsection{Learning objective and training procedure}

We now make precise the training objective used to learn the unknown drift potential $\phi$ with prior $\psi$.  The loss function we propose combines three components:
\begin{equation}
\label{objective}
\mathcal{J}(\phi)
:= \KL(\rho^* \| \rho_T) \;+\; \frac{1}{2}\int_0^T \!\int \|\nabla \tphi(x,t) \|_2^2 \,\rho(x,t)\,dx\,dt
\;+\; c\,R(T)\,,
\end{equation}
where
\begin{itemize}
  \item $\rho_T$ denotes the terminal density produced by the dynamics;
  \item $c\geq0$ is a weight; and $R(T)$ is an HJB residual regularizer (defined below) that penalizes deviations from the optimality conditions associated with the variational formulation.
\end{itemize}
For the continuous dynamics, the training objective can be formulated as
\[
\min_{\rho}\, \mathcal{J}(\phi)
\quad \text{subject to} \quad
dX_t = \nabla{\phi}_t(X_t) - \nabla{\tilde{\phi}}(X_t) - \beta^{-1}\nabla\log\rho_t(X_t)\,,
\]
where $X_t\sim \rho_t$ and $\nabla\tilde{\phi}$ is approximated by a proximal operator for non-smooth prior drift potential. 

We remark that, relative to the variational formulation \eqref{rho_T_opt}, the additional HJB regularizer vanishes at the optimal solution and thus primarily serves to accelerate convergence without modifying the minimizer. 
In the following, we explain the role of each term in \eqref{objective} and present the corresponding discrete Monte Carlo estimators used during training.

\medskip

\textbf{(I) The KL divergence and its evaluation along trajectories.} 
The KL divergence can be expressed as
\[
\KL(\rho^* \| \rho_T) 
= \int \rho^*(x)\log\frac{\rho^*(x)}{\rho_T(x)}\,dx
= \int \rho^*(x)\log\rho^*(x)\,dx - \int \rho^*(x)\log\rho_T(x)\,dx\,.
\]
Since the first term is independent of the model, minimizing the KL divergence is equivalent to maximizing the log-likelihood term 
\(\int \rho^*(x)\log\rho_T(x)\,dx\).

For the simplicity of notation, let
\[
v(x,t) = \nabla \phi(x,t) - \nabla\psi(x) - \beta^{-1}\nabla\log\rho(x,t)\,.
\]
To estimate the expectation with respect to $\rho^*$, we draw training samples at the terminal time,
\[
x_j^M \sim \rho^*, \qquad j=1,\dots,N\,,
\]
and evaluate \(\log\rho_T\) at these points. This requires recovering the full trajectory \(\{x_j^k\}_{k=0}^M\). Given \(x_j^M\) at time \(T\), we integrate the backward ODE
\[
\frac{d}{dt} x_t = -v(x_t,t)\,,\qquad x_T = x_j^M\,,
\]
from \(t=T\) down to \(t=0\), so that \(x_j^0\) is the corresponding preimage with density $\rho_0$.

For completeness, we recall the log-density evolution along forward trajectories \cite{neural_ode}.

\begin{prop}[Log-density evolution along trajectories]
\label{prop:log_density_evolution}
Let \(\rho\) satisfy the transport equation
\[
\partial_t \rho + \nabla\!\cdot(\rho v) = 0\,,
\qquad \frac{d}{dt}x_t = v(x_t,t)\,.
\]
Then along any forward trajectory \(x_t\) solving \(dx_t/dt =v(x_t,t)\), the material derivative of the log-density satisfies
\begin{equation}
\label{log_density_material}
D_t \log\rho(x_t,t) \;=\; -\nabla\!\cdot v(x_t,t)\,.
\end{equation}
In particular, if \(v\) is given by
\[
v(x,t) = \nabla \phi(x,t) - \nabla \psi(x) - \beta^{-1}\nabla\log\rho(x,t)\,,
\]
then
\[
D_t \log\rho(x_t,t)
= \nabla\!\cdot\big(-\nabla\phi(x_t,t) + \nabla\psi(x_t) + \beta^{-1}\nabla\log\rho(x_t,t)\big)\,.
\]
\end{prop}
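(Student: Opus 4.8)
The plan is to obtain \eqref{log_density_material} as a direct consequence of the continuity equation together with the chain rule, and then specialize. First I would expand the transport equation $\partial_t \rho + \nabla\!\cdot(\rho v) = 0$ with the product rule to get $\partial_t \rho + v\cdot\nabla\rho + \rho\,\nabla\!\cdot v = 0$. Next, along a trajectory $x_t$ solving $\dot x_t = v(x_t,t)$, the material derivative of $\rho$ is, by the chain rule, $D_t\rho(x_t,t) = \partial_t\rho(x_t,t) + \dot x_t\cdot\nabla\rho(x_t,t) = \partial_t\rho(x_t,t) + v(x_t,t)\cdot\nabla\rho(x_t,t)$. Combining the two displays eliminates the first two terms and yields $D_t\rho(x_t,t) = -\rho(x_t,t)\,\nabla\!\cdot v(x_t,t)$.

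To pass to the log-density, I would note that the identity just derived is, along each fixed characteristic, a linear scalar ODE in $t$ for $\rho(x_t,t)$, so its solution is $\rho(x_t,t) = \rho(x_0,0)\exp\!\big(-\!\int_0^t \nabla\!\cdot v(x_s,s)\,ds\big)$, which stays strictly positive whenever $\rho_0(x_0)>0$. Consequently $\log\rho(x_t,t)$ is differentiable in $t$ along the trajectory and $D_t\log\rho(x_t,t) = D_t\rho(x_t,t)/\rho(x_t,t) = -\nabla\!\cdot v(x_t,t)$, which is exactly \eqref{log_density_material}. The second assertion is then immediate: substituting $v = \nabla\phi - \nabla\psi - \beta^{-1}\nabla\log\rho$ into $-\nabla\!\cdot v$ and using linearity of the divergence gives $D_t\log\rho(x_t,t) = \nabla\!\cdot\big(-\nabla\phi(x_t,t) + \nabla\psi(x_t) + \beta^{-1}\nabla\log\rho(x_t,t)\big)$.

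The only genuine obstacle is ensuring that $\log\rho$ is well-defined and differentiable along trajectories, i.e., strict positivity of $\rho$ and enough regularity of $\rho$ and $v$ for the chain rule and the flow to make sense. For $v$ that is $C^1$ in space, continuous in time, and of at most linear growth, this is standard: the flow map $x_0\mapsto x_t$ is a diffeomorphism, $\rho(\cdot,t)$ is its pushforward of $\rho_0$, and positivity is preserved — so I would simply record this as a regularity hypothesis rather than dwell on it. The non-smooth-$\psi$ situation appearing elsewhere in the paper is not needed here, since in that setting $\nabla\psi$ is replaced by its proximal surrogate and the corresponding density evolution is treated separately. Everything else reduces to a one-line product rule and a one-line chain rule.
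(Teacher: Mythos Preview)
Your argument is correct and is the standard one: expand the continuity equation, recognize the first two terms as the material derivative of $\rho$, divide by $\rho$, and specialize. The paper does not supply its own proof of this proposition; it is stated ``for completeness'' with a citation to \cite{neural_ode}, so there is nothing to compare against, and your write-up (including the positivity remark justifying the passage to $\log\rho$) is entirely adequate.
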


By Proposition~\ref{prop:log_density_evolution}, for each trajectory we have the identity
\[
\log\rho_T(x_j(T)) \;=\; \log\rho_0(x_j(0)) \;-\; \int_0^T \nabla\!\cdot v(x_j(t),t)\,dt\,.
\]
Hence, the log-likelihood term can be approximated by a Riemann sum with step size \(h=T/M\),
\begin{align}
\label{KL_discrete}
\int \rho^*(x)\log\rho_T(x)\,dx
&\;\approx\; \frac{1}{N}\sum_{j=1}^N \log\rho_T(x_j^M) \notag\\
&\;=\; \frac{1}{N}\sum_{j=1}^N \Big[ \log\rho_0(x_j^0) - h\sum_{k=0}^{M-1} \big(\nabla\!\cdot v\big)(x_j^k,\,t_k)\Big]\,,
\end{align}
where \(x_j^M\sim\rho^*\), \(\{x_j^k\}_{k=0}^M\) are obtained by backward integration of the dynamics (e.g., via the splitting scheme~\eqref{def_split_particle}), and \(\rho_0\) is a known simple distribution, such as a Gaussian distribution. In practice, the divergence \(\nabla\!\cdot v\) is computed either by automatic differentiation or by closed-form formulas, depending on the parametrization of \(\phi\).

\textbf{(II) The transport regularizer.}
The second term in \eqref{objective},
\[
\frac{1}{2}\int_0^T \!\int \|\nabla\tphi(x,t)\|_2^2\,\rho(x,t)\,dx\,dt\,,
\]
penalizes the kinetic energy of the transportation vector field, which promotes smooth, energy-efficient token transports.  
For $\phi_k(x) = \phi(x,t_k)$, the time-space integral is approximated using a standard Monte Carlo Riemann-sum scheme:
\begin{equation}
\label{transport_discrete}
\frac{1}{2}\int_0^T\!\int \|\nabla\tphi\|_2^2\rho\,dx\,dt
\;\approx\; \frac{h}{2N}\sum_{k=0}^{M-1}\sum_{j=1}^N 
\big\lVert\nabla\tphi_k(x_j^k) \big\rVert_2^2\,.
\end{equation}

\textbf{(III) The HJB regularizer.}
The third term, $R(T)$, enforces consistency with the Hamilton–Jacobi–Bellman (HJB) equation \cite{ot_flow}, which arises from the optimality condition for $\rho_T$ in \eqref{rho_T_opt} (see \eqref{KKT_system_2} in Section~\ref{sec:theory_opt} for the derivation). The HJB residual at state $(x,t)$ is defined by
\[
\mathcal{R}_{\mathrm{HJB}}(x,t)
:= \partial_t \phi(x,t) + \tfrac{1}{2}\big\lVert\nabla\phi(x,t)-\nabla\psi(x)\big\rVert_2^2 + \beta^{-1}\big(\Delta\phi(x,t) - \Delta\psi(x)\big)\,,
\]
and the optimality condition of \eqref{rho_T_opt} implies $\mathcal{R}_{\mathrm{HJB}}\equiv 0$.  The HJB regularizer is then
\begin{equation}
\label{HJ_regularizer}
R(T) \;:=\; \int_0^T\!\int \big|\mathcal{R}_{\mathrm{HJB}}(x,t)\big|^2 \rho(x,t)\,dx\,dt\,,
\end{equation}
which we discretize analogously to the previous terms
\begin{equation}
\label{HJ_discrete}
R(T) \;\approx\; \frac{h}{N}\sum_{k=0}^{M-1}\sum_{j=1}^N \big|\partial_t \phi_k(x^k_j) + \tfrac{1}{2}\big\lVert\nabla\phi_k(x^k_j)-\nabla\psi(x^k_j)\big\rVert_2^2 + \beta^{-1}\big(\Delta\phi_k(x^k_j) - \Delta\psi(x^k_j)\big)\big|^2\,,
\end{equation}
where $\Delta\psi$ is evaluated by the smooth approximation of $\psi$, e.g., Huber function.
\medskip 

\noindent\textbf{Remarks.} 
\begin{itemize}
\item The spatial differential operators required for the transport and HJB regularizers are already computed when evaluating the KL divergence, so including these regularizers incurs no additional differentiation cost.
\item The objective \eqref{objective} combines three components: the KL divergence enforces likelihood matching, the transport regularizer encourages energy-efficient mapping consistent with optimal transport, and the HJB regularizer ensures dynamic consistency with the underlying variational formulation.
\end{itemize}

With discrete estimators for the loss function given in \eqref{KL_discrete}, \eqref{transport_discrete}, and \eqref{HJ_discrete}, algorithm \ref{alg:training} summarizes both the generation and training stages corresponding to the forward and backward RWPO updates.

\begin{algorithm}[H]
\caption{Sparse Transformer with RWPO Operator: Generation and Training}
\label{alg:training}
\begin{algorithmic}[1]
\Require Time stamps $\{t_k\}_{k=1}^M$, training data $\{x^T_j\}_{j=1}^N$, maximal training iterations $L_{\max}$.
\Statex \rule{\linewidth}{0.6pt}
\Statex \textbf{Generation phase (with trained drift potential $\phi$)}
\Statex \rule{\linewidth}{0.6pt}
\State Sample initial tokens $x_j^0 \sim \rho_0$.
\For{$k = 0, \dots, M-1$}
    \State Propagate tokens using RWPO splitting scheme:
    \begin{equation}
    \label{def_split_particle_algo}
    \begin{cases}
    x_j^{k+1/2} = x_j^k + h\,\nabla \phi_k(x_j^k)\,, \\
x_j^{k+1} = x_j^{k+1/2} + \tfrac{1}{2}\left[\,S_{\lambda h}(x_j^{k+1/2}) - \sum_{\ell=1}^N \operatorname{softmax}(U(x_j^{k+1/2},X^{k+1/2}))_{\ell}\,x_{\ell}^{k+1/2}\right]\,.
    \end{cases}
    \end{equation}
\EndFor

\Statex \rule{\linewidth}{0.6pt}
\Statex \textbf{Training phase}
\Statex \rule{\linewidth}{0.6pt}

\State Initialize $i \gets 1$.
\While{$i \leq L_{\max}$}
    \State Construct loss $\mathcal{L}(\phi)$ using the summation of \eqref{KL_discrete},  \eqref{transport_discrete}, \eqref{HJ_discrete}.
         \For{$k = M,\dots,1$}
            \State Update tokens via RWPO splitting:
            \begin{equation}
            \label{ODE_system_1}
            \begin{cases}
            x_j^{k-1/2} = x_j^{k} + \tfrac{1}{2}\left[\,S_{\lambda h}(x_j^{k}) - \sum_{\ell=1}^N \operatorname{softmax}(U(x_j^{k},X^{k}))_{\ell}\,x_{\ell}^{k}\right],\\
            x_j^{k-1}   = x_j^{k-1/2}
                        - h\,\nabla \phi_{k-1/2}(x_j^{k-1/2} ) \,.
            \end{cases}
            \end{equation}
            \State Using Prop.~\ref{prop:log_density_evolution}, update log-densities along trajectories:
            \begin{equation}
            \label{ODE_system_2}
            \begin{cases}
             \log \rho_{k-1/2}(x_j^{k-1/2})
                = \log \rho_{k}(x_j^{k})
                  -  \frac{1}{2}\nabla\cdot\left[\,S_{\lambda h}(x_j^{k}) - \sum_{\ell=1}^N \operatorname{softmax}(U(x_j^{k},X^{k}))_{\ell}\,x_{\ell}^{k}\right]\,,\\
            \log \rho_{k-1}(x_j^{k-1})
                = \log \rho_{k-1/2}(x_j^{k-1/2}) + h\,\nabla\cdot \nabla \phi_{k-1/2}(x_j^{k-1/2} )\,. \\ 
            \end{cases}
            \end{equation}
     \EndFor
    \State Compute gradient $\nabla_\theta \mathcal{L}(\phi)$ and update parameters with a first-order optimizer (such as Adam):
    \[
    \theta \gets \text{OptimizerStep}(\theta, \nabla_\theta \mathcal{L}).
    \]
    \State $i \gets i+1$.
\EndWhile

\end{algorithmic}
\end{algorithm}

\section{Convergence and stability of flow with sparse transformer}
\label{sec_analysis}

In this section, we provide a rigorous analysis of why the sparse transformer proposed in Algorithm~\ref{alg:training} achieves faster and more robust convergence toward sparse target distributions. Our analysis proceeds along two complementary directions. First, in Section~\ref{sec_long_time_analysis}, we study how the additional sparse prior directly accelerates long-time convergence of the dynamics, as measured by the  KL divergence. Second, in Section~\ref{sec:theory_opt}, we analyze the variational optimization problem \eqref{rho_T_opt} and reveal the stabilizing role of viscosity. The proof of results in this section can be found in the Appendix.

\subsection{Convergence with the sparse prior}
\label{sec_long_time_analysis}

We examine the effect of the sparse prior $
\psi^{\lambda}(x)=\lambda\|x\|_1$
on the evolution of
\begin{equation}\label{eq:FP}
\partial_t \rho \;=\; \nabla\!\cdot\!\big(\rho\,(-\nabla\phi+\nabla \psi^{\lambda})\big)\;+\;\beta^{-1}\Delta\rho\,.
\end{equation}
The analysis proceeds in two steps. First, we study the decay of the KL divergence to show that the sparse prior accelerates convergence to the target. Second, we investigate the evolution of the second moment to quantify distributional contraction; larger values of $\lambda$ enhance both effects.

We begin with KL decay under an exact drift potential. Suppose $\phi$ is chosen so that $\rho^*(x)\propto e^{\beta \phi(x)}$. Then the forward KL dissipation satisfies
\begin{equation}\label{eq:KL-diss}
\frac{d}{dt}\,\KL(\rho_t\|\rho^*)
= -\beta^{-1} \int \big\|\nabla \log \tfrac{\rho_t}{\rho^*}\big\|^2 \rho_t\,dx
  \;-\; \lambda \int \nabla \log \tfrac{\rho_t}{\rho^*} \cdot \operatorname{sign}(x) \,\rho_t\,dx\,,
\end{equation}
where $\operatorname{sign}(\cdot)$ denotes any measurable selection of the $L_1$ subgradient and is justified via smooth approximations. Thus the $L_1$ prior contributes the nonnegative term
\[
\mathcal D(\rho_t)\;:=\;\int \nabla \log \tfrac{\rho_t}{\rho^*} \cdot \operatorname{sign}(x) \,\rho_t\,dx\,,
\]
which accelerates convergence whenever $\mathcal D(\rho_t)\ge 0$.

In the following, we assume the following lower bound, referred to as \emph{directional consistency}:
\begin{equation}
\label{KL-Fisher}
\mathcal{D}(\rho_t)\;\ge\;\gamma\sqrt{\,I(\rho_t\|\rho^*)\,}\,,\quad \forall t\ge0\,, \quad  \text{ where }\qquad I(\rho_t\|\rho^*) := \int \|\nabla \log (\rho_t/\rho^*)\|_2^2 \rho_t\,dx\,,
\end{equation}
for some $\gamma \geq 0$ and $I(\rho_t\|\rho^*)$ is the relative Fisher information. The property \eqref{KL-Fisher} holds for several common parametric families (e.g., Laplace and Gaussian families), as illustrated below.

\begin{itemize}
    \item \emph{Laplace distributions.} For $\rho^*(x)=\tfrac{b}{2}e^{-b|x|}$ and $\rho_t(x)=\tfrac{a}{2}e^{-a|x|}$ with $a,b>0$,
    \[
    \mathcal D(\rho_t)=b-a\,,\qquad I(\rho_t\|\rho^*)=(a-b)^2\,,
    \]
    hence $\mathcal D(\rho_t)=\operatorname{sign}(b-a)\sqrt{I(\rho_t\|\rho^*)}$ and \eqref{KL-Fisher} holds with $\gamma=1$ exactly when $b\ge a$.
    \item \emph{Gaussian distributions.} For $\rho^*=\mathcal N(0,\sigma^2)$ and $\rho_t=\mathcal N(0,\tau^2)$, writing $c:=\tfrac{1}{\sigma^2}-\tfrac{1}{\tau^2}$ gives
    \[
    \mathcal D(\rho_t)=\sqrt{\tfrac{2}{\pi}}\,c\,\tau\,,\qquad
    I(\rho_t\|\rho^*)=c^2\tau^2,
    \]
    so $\mathcal D(\rho_t)=\sqrt{\tfrac{2}{\pi}}\,\operatorname{sign}(c)\sqrt{I(\rho_t\|\rho^*)}$ and \eqref{KL-Fisher} holds with $\gamma=\sqrt{2/\pi}$ whenever $\tau^2\ge\sigma^2$.
\end{itemize}

\begin{prop}\label{prop_KL_L1}
Assume $\rho^*$ satisfies a log-Sobolev inequality with constant $C_{\mathrm{LS}}>0$, and suppose \eqref{KL-Fisher} holds with some $\gamma\ge0$. Let
\[
a:=\tfrac{2}{C_{\mathrm{LS}}}\,,\qquad y(t):=\KL(\rho_t\|\rho^*)\,.
\]
Then, for all $t\ge0$ and denoting $(x)_{+} = \max\{x,0\}$, we have
\begin{equation}\label{eq:KL-upper}
y(t)\;\le\;\Bigg(\Big(\sqrt{y(0)}+\tfrac{\lambda\gamma}{\sqrt{a}}\Big)e^{-\tfrac{a}{2}t} -\tfrac{\lambda\gamma}{\sqrt{a}}\Bigg)_+^{2}\,.
\end{equation}
\end{prop}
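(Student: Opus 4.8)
The plan is to derive a differential inequality for $y(t) := \KL(\rho_t\|\rho^*)$ and then integrate it via a substitution that linearizes the resulting Riccati-type ODE. Starting from the KL dissipation identity \eqref{eq:KL-diss}, I would bound the first term using the log-Sobolev inequality: since $\rho^*$ satisfies LSI with constant $C_{\mathrm{LS}}$, we have $I(\rho_t\|\rho^*) \ge \frac{2}{C_{\mathrm{LS}}}\,\KL(\rho_t\|\rho^*) = a\,y(t)$, so $-\beta^{-1} I(\rho_t\|\rho^*) \le -\beta^{-1} a\, y(t)$. For the second term, the directional-consistency assumption \eqref{KL-Fisher} gives $-\lambda\,\mathcal D(\rho_t) \le -\lambda\gamma\sqrt{I(\rho_t\|\rho^*)} \le -\lambda\gamma\sqrt{a\,y(t)}$, using $I \ge a y$ once more. (I would absorb the $\beta^{-1}$ into the constant or note it is normalized to one, matching the statement's $a = 2/C_{\mathrm{LS}}$; if $\beta^{-1}$ is kept the same argument goes through with $a$ rescaled.) Combining yields
\[
\frac{d}{dt} y(t) \;\le\; -a\,y(t) \;-\; \lambda\gamma\sqrt{a}\,\sqrt{y(t)}\,.
\]

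Next I would substitute $u(t) := \sqrt{y(t)}$, so that $u' = \frac{y'}{2\sqrt{y}} \le \frac{1}{2}\bigl(-a u - \lambda\gamma\sqrt a\bigr) = -\frac{a}{2}u - \frac{\lambda\gamma\sqrt a}{2}$, valid wherever $y > 0$. This is a linear first-order inequality for $u$; by Grönwall's inequality (comparison with the linear ODE $w' = -\frac a2 w - \frac{\lambda\gamma\sqrt a}{2}$, $w(0)=u(0)$), its solution satisfies
\[
u(t) \;\le\; \Bigl(u(0) + \tfrac{\lambda\gamma}{\sqrt a}\Bigr) e^{-\frac a2 t} \;-\; \tfrac{\lambda\gamma}{\sqrt a}\,,
\]
where I used that the particular solution of $w' = -\frac a2 w - \frac{\lambda\gamma\sqrt a}{2}$ is the constant $-\lambda\gamma/\sqrt a$. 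Since $u(t) = \sqrt{y(t)} \ge 0$, I can replace the right-hand side by its positive part, and squaring gives exactly \eqref{eq:KL-upper}.

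The one point requiring care — and the main obstacle — is the behavior at times where $y(t) = 0$, since the substitution $u = \sqrt y$ is not differentiable there and the bound $y' \le -\lambda\gamma\sqrt a\sqrt y$ only gives $y'\le 0$ at such points. I would handle this by a standard argument: either (i) work on the maximal interval $[0, t^*)$ where $y > 0$ and observe that the comparison bound forces $u$ (hence $y$) to reach $0$ no later than the time the right-hand side of the displayed bound hits $0$, after which $y$ stays at $0$ by nonnegativity and the dissipation being $\le 0$; or (ii) regularize by comparing with solutions of $u_\varepsilon' = -\frac a2 u_\varepsilon - \frac{\lambda\gamma\sqrt a}{2}$ started slightly above and pass to the limit. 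A secondary technical point is the justification of the subgradient selection $\operatorname{sign}(x)$ and the validity of \eqref{eq:KL-diss} itself for the nonsmooth $L_1$ drift; as the excerpt notes, this is done via smooth (e.g. Huber) approximations of $\psi^\lambda$ and passing to the limit, and I would invoke that here rather than re-derive it. Everything else is routine ODE comparison.
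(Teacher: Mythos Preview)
Your proposal is correct and matches the paper's proof essentially step for step: both derive the differential inequality $y' \le -ay - \lambda\gamma\sqrt{a}\sqrt{y}$ from the KL dissipation identity, the log-Sobolev bound $I\ge ay$, and the directional-consistency assumption, then linearize via the square-root substitution and solve the resulting first-order linear ODE. Your handling of the $y=0$ case and the $\beta^{-1}$ normalization is in fact more careful than the paper's own treatment, which passes over both points silently.
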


To summarize, the additional term $-\lambda\gamma/\sqrt{a}$ in \eqref{eq:KL-upper} introduces an extra linear damping effect on $\sqrt{y(t)}$, thereby accelerating the exponential decay of the KL divergence. 
Larger values of $\lambda$ correspond to stronger sparse priors, which in turn yield faster contraction of the dynamics toward the target distribution $\rho^*$. 
When $\lambda=0$, the estimate \eqref{eq:KL-upper} reduces to the classical exponential decay rate implied by the log-Sobolev inequality without prior regularization.
Below, we give more examples of when the directional consistency condition \eqref{KL-Fisher} holds.

\begin{enumerate}
\item \emph{Separable / product families.} If $\rho^*$ and $\rho_t$ are products of 1D laws and directional consistency holds in each coordinate with constants $\kappa_j$, then
\[
\mathcal D(\rho_t)\ge (\min_j \kappa_j)\sqrt{I(\rho_t\|\rho^*)} = \gamma \sqrt{I(\rho_t\|\rho^*)}\,.
\]

\item \emph{General principle.} Directional consistency occurs when the prior drift $\nabla\psi$ and the score function $\nabla\log(\rho_t/\rho^*)$ point roughly in the same correcting direction. If $\rho_t$ is more spread out than $\rho^*$, both vectors point inward, yielding $\mathcal D(\rho_t)>0\,. $ However, misalignment can lead to negative correlation, which breaks the condition \eqref{KL-Fisher}.
\end{enumerate}

These examples show that directional consistency naturally holds when the target distribution is concentrated. Increasing $\lambda$ strengthens the linear drift potential in the ODE for $\sqrt{y(t)}$, which accelerates convergence.

Beyond convergence in KL divergence, next, we show that the same $L_1$ prior introduces additional mass contraction, which we quantify via the second moment.
\begin{prop}\label{prop:moment-ordering}
Let $\rho_0 \in \mathcal P_2(\mathbb R^d)$ and denote  $\rho_t^{\lambda}$ as the solution to \eqref{eq:FP} with $\psi(x) = \lambda \|x\|_1$.
\begin{enumerate}[leftmargin=1.8em,labelsep=0.5em]
  \item[\textnormal{(i)}] For any fixed drift potential $\phi$, 
  \begin{equation}\label{eq:moment-evolution}
    \frac{d}{dt} \int \|x\|_2^2 \rho_t^{\lambda}(x)\,dx 
    = -2 \lambda \int \|x\|_1 \rho_t^{\lambda}(x)\,dx
      +2 \int x \cdot \nabla \phi(x)\, \rho_t^{\lambda}(x)\,dx
      + 2 d \beta^{-1}\,.
  \end{equation}
  In particular, the first term implies that a larger $\lambda$ accelerates the decay of the second moment.

  \item[\textnormal{(ii)}] If $\lambda_1 \ge \lambda_2 \ge 0$ and $\phi(x) = -\tfrac{1}{2}x^{\top}A x$ with $A \succeq 0$, then
  \begin{equation}\label{eq:moment-ordering}
    \int \|x\|_2^2 \rho_t^{\lambda_1}(x)\,dx 
    \;\le\;
    \int \|x\|_2^2 \rho_t^{\lambda_2}(x)\,dx\,, 
    \qquad \forall\, t \ge 0\,.
  \end{equation}
\end{enumerate}
\end{prop}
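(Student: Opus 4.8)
\textbf{Proof proposal for Proposition~\ref{prop:moment-ordering}.}

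For part (i), the plan is a direct computation. I would write $m(t):=\int \|x\|_2^2\,\rho_t^{\lambda}(x)\,dx$ and differentiate under the integral sign, substituting the PDE \eqref{eq:FP}. After integrating by parts twice (once for the transport term $\nabla\!\cdot(\rho(-\nabla\phi+\nabla\psi^\lambda))$, once for the diffusion term $\beta^{-1}\Delta\rho$) and discarding boundary terms at infinity (justified by $\rho_t\in\mathcal P_2$ and decay of $\rho_t$), I would use $\nabla(\|x\|_2^2)=2x$ and $\Delta(\|x\|_2^2)=2d$. The transport term contributes $2\int x\cdot(\nabla\phi-\nabla\psi^\lambda)\,\rho_t\,dx$; since $\nabla\psi^\lambda(x)=\lambda\,\mathrm{sign}(x)$ and $x\cdot\mathrm{sign}(x)=\|x\|_1$, this yields the first two terms of \eqref{eq:moment-evolution}, and the diffusion term contributes $2d\beta^{-1}$. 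The nonsmoothness of $\|x\|_1$ is handled exactly as elsewhere in the paper, via a smooth (e.g.\ Huber) approximation $\psi_\varepsilon$ and passage to the limit, using $x\cdot\nabla\psi_\varepsilon(x)\to\|x\|_1$ pointwise with a dominating bound.

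For part (ii), the plan is a comparison argument on the coupled evolution of the two second moments. With $\phi(x)=-\tfrac12 x^\top A x$ we have $\nabla\phi(x)=-Ax$, so part (i) gives, for each $i\in\{1,2\}$,
\[
\frac{d}{dt} m_i(t) = -2\lambda_i\!\int\!\|x\|_1\rho_t^{\lambda_i}\,dx \;-\; 2\!\int\! x^\top A x\,\rho_t^{\lambda_i}\,dx \;+\; 2d\beta^{-1},
\]
where $m_i(t):=\int\|x\|_2^2\rho_t^{\lambda_i}\,dx$. Both flows start from the same $\rho_0$, so $m_1(0)=m_2(0)$. Set $\delta(t):=m_2(t)-m_1(t)$; I would like to show $\delta(t)\ge0$. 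The obstacle is that the right-hand side above is not a closed functional of $m_i$ alone — it involves $\int\|x\|_1\rho_t^{\lambda_i}$ and $\int x^\top A x\,\rho_t^{\lambda_i}$ — so one cannot immediately close a scalar ODE/Grönwall inequality in $\delta$.

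To overcome this, I would use a coupling / monotone-comparison argument at the level of the SDEs. Realize $\rho_t^{\lambda_i}$ as the law of $X_t^{(i)}$ solving $dX_t^{(i)} = -AX_t^{(i)}\,dt - \lambda_i\,\mathrm{sign}(X_t^{(i)})\,dt + \sqrt{2\beta^{-1}}\,dW_t$, driven by the \emph{same} Brownian motion $W$ and the same initial condition $X_0^{(1)}=X_0^{(2)}\sim\rho_0$. Because $A\succeq0$, $x\mapsto -Ax$ is monotone, and $x\mapsto-\lambda\,\mathrm{sign}(x)$ is monotone (decreasing) coordinatewise; moreover for $\lambda_1\ge\lambda_2$ the extra drift $-(\lambda_1-\lambda_2)\mathrm{sign}(\cdot)$ is a contracting restoring force toward the origin. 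Working coordinatewise (the dynamics decouple into $d$ one-dimensional SDEs up to the linear coupling through $A$; if $A$ is not diagonal one diagonalizes $A$ first, noting $\|x\|_1$ is not rotation-invariant — so instead I would argue directly in the original coordinates using that $(x-y)^\top A(x-y)\ge0$ and $(\mathrm{sign}(x)-\mathrm{sign}(y))\cdot(x-y)\ge0$). Applying Itô's formula to $\|X_t^{(1)}\|_2^2 - \|X_t^{(2)}\|_2^2$, or more cleanly comparing $\|X_t^{(1)}\|_2$ against $\|X_t^{(2)}\|_2$ via a synchronous coupling, the stochastic integrals cancel (same $W$), and the drift difference has a sign that forces $\|X_t^{(1)}\|_2 \le \|X_t^{(2)}\|_2$ pathwise, whence $m_1(t)\le m_2(t)$ after taking expectations. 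I expect the main obstacle to be making the coupling comparison rigorous at the nonsmooth points of $\mathrm{sign}$ and in the multidimensional non-diagonal case; the clean route is to establish it first for smooth $\psi_\varepsilon$ (where standard comparison theorems for monotone SDE drifts apply) and then pass $\varepsilon\to0$ using stability of the flows in $\mathcal P_2$.
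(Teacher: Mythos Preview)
Your proposal matches the paper's proof essentially line for line: part~(i) by integrating the PDE against $\|x\|_2^2$ and using $x\cdot\mathrm{sign}(x)=\|x\|_1$, and part~(ii) by a synchronous SDE coupling with shared Brownian motion and initial data to obtain a pathwise coordinatewise comparison $|X_t^{\lambda_1}|\le|X_t^{\lambda_2}|$, then taking expectations. If anything, you are more careful than the paper about the non-diagonal $A$ case and the smoothing of $\mathrm{sign}$, both of which the paper's short proof treats heuristically.
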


\noindent\textbf{Remarks.} 
The $L_1$ term serves as an additional damping strength in the dynamics. 
The first term on the right-hand side of \eqref{eq:moment-evolution},
\[
-2\lambda \int \|x\|_1 \rho_t^{\lambda}(x)\,dx\,,
\]
acts as a dissipative force that contracts trajectories toward the origin, thereby reducing the spread of $\rho_t^{\lambda}$ and accelerating the decay of its second moment. 
Larger values of $\lambda$ amplify this contraction. 
Together with the KL dissipation analysis, this demonstrates that the sparse $L_1$ prior not only accelerates convergence in a distributional sense but also enforces a tighter concentration of mass.

\subsection{Stability of the optimization problem}
\label{sec:theory_opt}

We now examine the variational well-posedness of the learning problem \eqref{rho_T_opt} and its associated token dynamics \eqref{FPK_particle}. 
In particular, we show that the viscosity term arising from the score function prevents gradient blow-up in the backward HJB equation, ensuring the stability of both the dynamics and the variational formulation.

To reformulate \eqref{rho_T_opt} as an unconstrained problem, we introduce a Lagrange multiplier \(\Phi\). The constrained formulation then becomes
\begin{align}
\label{eqn_uncon_1}
\argmin_{\rho_T}\;\min_{\Phi,\tphi,\rho}\;
& \KL(\rho^* \| \rho_T) 
+ \frac{1}{2}\int_0^T\!\int \|\nabla \tphi\|_2^2 \,\rho \, dx\,dt \\
&\quad + \int_0^T\!\int \Phi \Bigl[\partial_t \rho + \nabla \cdot (\rho \nabla \tphi)  - \beta^{-1}\Delta \rho \Bigr] dx\,dt \,.\notag
\end{align}

To connect the PDE dynamics with the training objective, we derive the first-order optimality conditions of \eqref{eqn_uncon_1}, which yield a coupled system
\begin{equation}
\label{KKT_system}
\begin{cases}
 \tfrac{1}{2}\|\nabla \tphi\|_2^2 - \partial_t \Phi - \nabla \Phi \cdot \nabla \tphi  - \beta^{-1}\Delta \Phi = 0\,, \\
 \partial_t \rho + \nabla \cdot (\rho \nabla \tphi)  - \beta^{-1}\Delta \rho = 0\,, \\
 \rho(\nabla \tphi - \nabla \Phi) = 0\,, \\
 \Phi_T = \tfrac{\rho^*}{\rho_T}\,,
\end{cases}
\end{equation}
where the derivation can be found in the appendix.
The third condition enforces \(\nabla \tphi = \nabla \Phi\), which reduces the system to
\begin{equation}
\label{KKT_system_2}
\begin{cases}
 \partial_t \tphi + \tfrac{1}{2}\|\nabla \tphi\|_2^2 + \beta^{-1}\Delta \tphi = 0\,, \\
 \partial_t \rho + \nabla \cdot (\rho \nabla \tphi)   - \beta^{-1}\Delta \rho = 0\,, \\
 \tphi_T = \tfrac{\rho^*}{\rho_T}\,.
\end{cases}
\end{equation}

\medskip
\noindent\textbf{Remark (Viscosity regularization).}
The coupled system \eqref{KKT_system_2} consists of a backward HJB equation for $\tphi$ with terminal condition $\tphi_T$, and a forward Fokker–Planck equation for $\rho$ with initial condition $\rho_0$. 
The Laplacian (score) term in the HJB equation acts as a viscosity term that prevents gradient blow-up. 
In the inviscid case $\beta^{-1}=0$, the equation 
\[
\partial_t \tphi + \tfrac{1}{2}\|\nabla\tphi\|_2^2 = 0
\] 
generically develops gradient singularities in finite backward time, so the KKT system \eqref{KKT_system} is well-defined only up to the first shock time. 
In contrast, when $\beta^{-1}>0$, the viscous HJB equation 
\[\partial_t \tphi + \tfrac{1}{2}\|\nabla\tphi\|_2^2 + \beta^{-1}\Delta\tphi = 0\]
admits a unique smooth solution $\tphi \in C^\infty([0,T]\times\Omega)$ for all finite $T$, provided the terminal data $u_T=\exp(\tfrac{\beta}{2}\tphi_T)$ are smooth and strictly positive, ensuring bounded gradients and no blow-up.

\medskip
\noindent\textbf{Example (1D explicit blow-up).}  
In one dimension, setting $v=\partial_x\Phi$ reduces the inviscid HJB equation to Burgers’ equation $\partial_t v + v\,\partial_x v = 0$.  
For quadratic terminal data $\Phi_T(x)=-\tfrac{a}{2}x^2$ with $a>0$, the solution 
\[
v(t,x) = -\frac{a\,x}{1-a\,(T-t)} \quad \text{for} \quad T-\tfrac{1}{a}<t\le T
\]
blows up as $t\downarrow T-1/a$.  
Thus, the inviscid KKT system is well-posed only up to this first shock time, while the viscous case avoids blow-up due to the smoothing effect of the heat operator.  
This highlights the stabilizing role of the score function $\beta^{-1}\nabla\log\rho_t$ in the token evolution \eqref{FPK_particle}.

\section{Numerical experiments}
\label{sec_NE}
To solve the optimization problem \eqref{rho_T_opt}, we parameterize the unknown drift potential $\phi$ using a residual neural network. Following the implementation in \cite{ot_flow}, we write
\[
\phi(z) = z^T N(z,\theta) + \frac{1}{2} z^T A^T A z + b^T z + c\,,
\]
where $z = [x,t]^T$, $N(z,\theta)$ is a residual network with parameters $\theta$ that captures the nonlinear component of the dynamics, while the remaining terms account for the linear part. This formulation allows both $\nabla \phi$ and $\nabla\cdot\nabla \phi$ to be computed explicitly. Combined with the explicit score function update through RWPO, this yields closed-form updates for the iterative steps in \eqref{ODE_system_1} and \eqref{ODE_system_2}.

In the experiments below, we primarily compare samples generated by the standard OT-flow \cite{ot_flow} with those from our proposed flow model incorporating the sparse transformer in Algorithm \ref{alg:training}. Additionally, the parameters $\lambda$ and $\beta$ are treated as trainable, allowing the model to select them implicitly during training. Details on model initialization, training parameters, and experimental settings are provided in Appendix~\ref{sec_detail}. \footnote{The code is in GitHub with the link \url{https://github.com/fq-han/RWPO-sparse-transformer}.}  

\subsection{Benchmark examples}

For this example, we generate new samples for several standard benchmark distributions, including double moon, rings,  2-spirals,  8-Gaussian distributions, and the checkerboard.
\begin{figure}[htbp]
\centering
\begin{adjustbox}{max width=\textwidth}
\begin{tabular}{ccccc}
\includegraphics[width=0.18\textwidth]{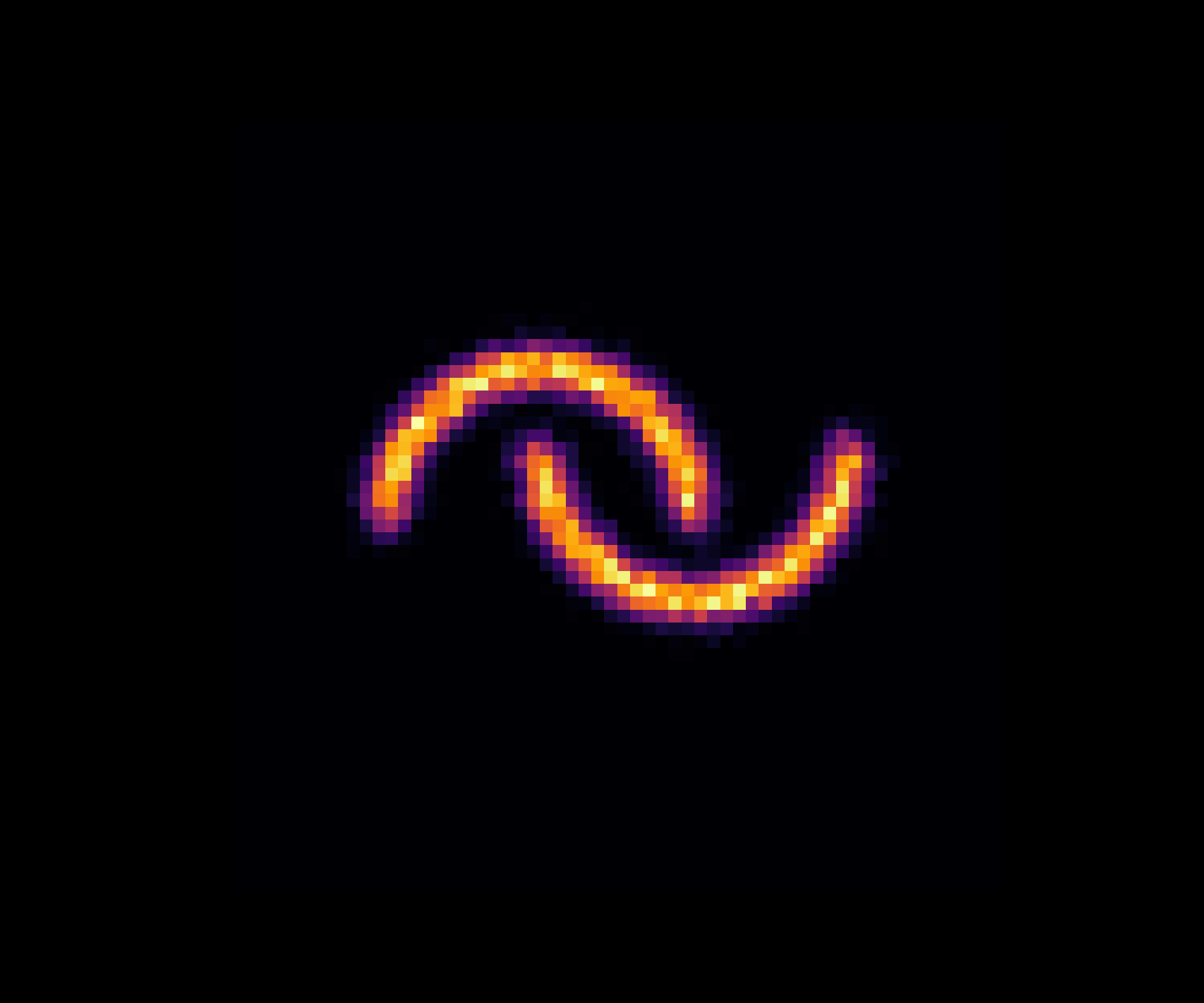} &
\includegraphics[width=0.18\textwidth]{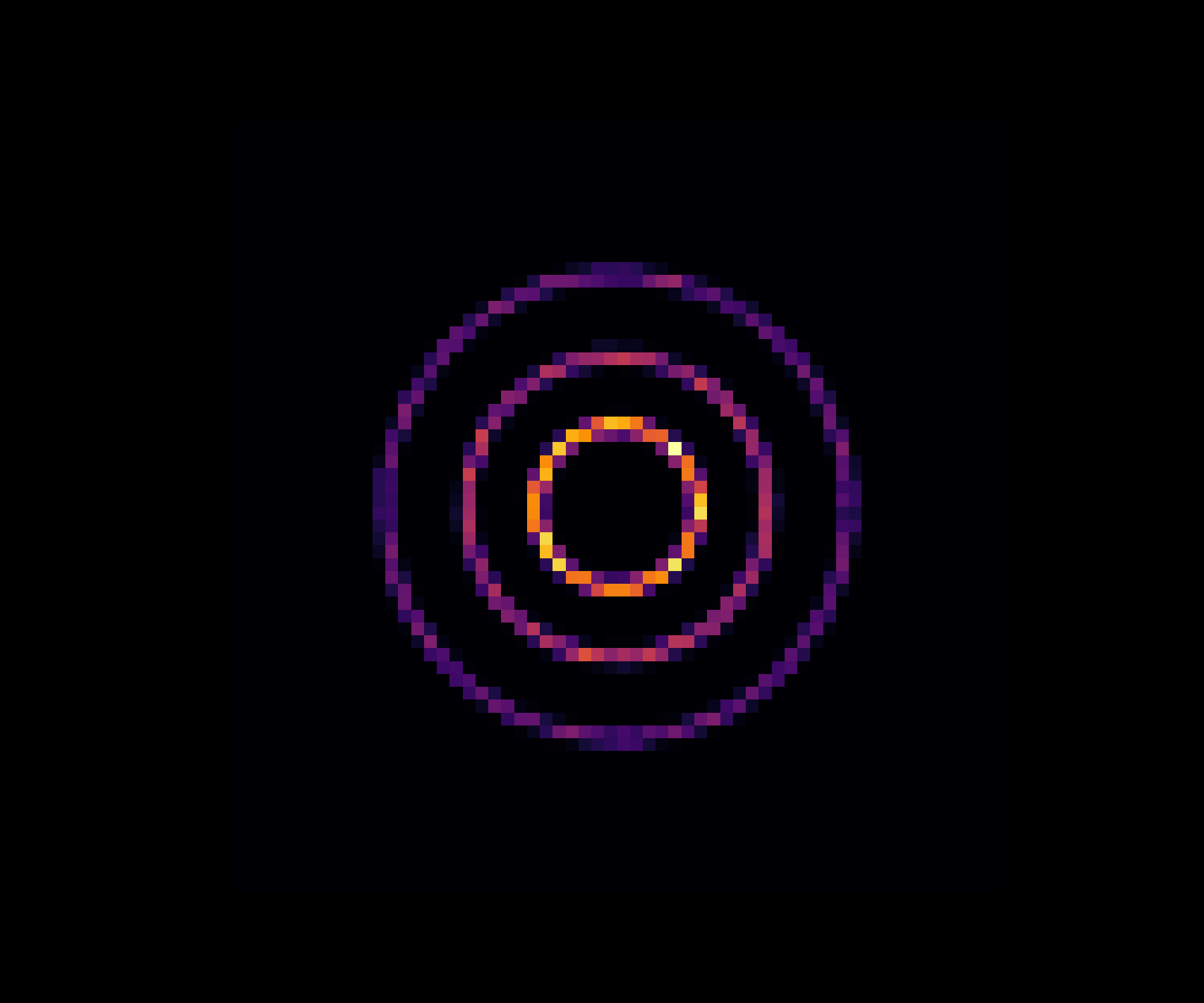} &
\includegraphics[width=0.18\textwidth]{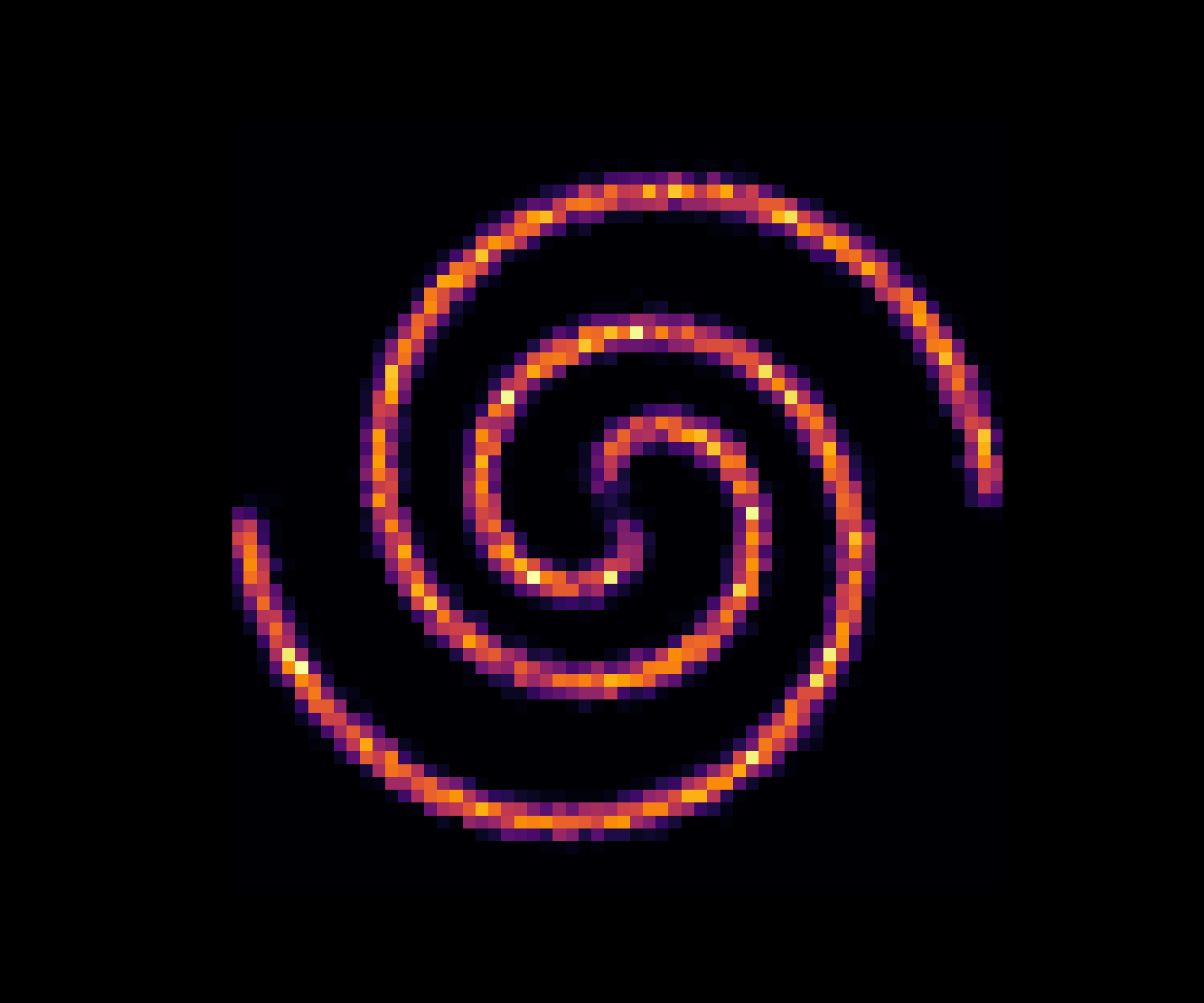} &
\includegraphics[width=0.18\textwidth]{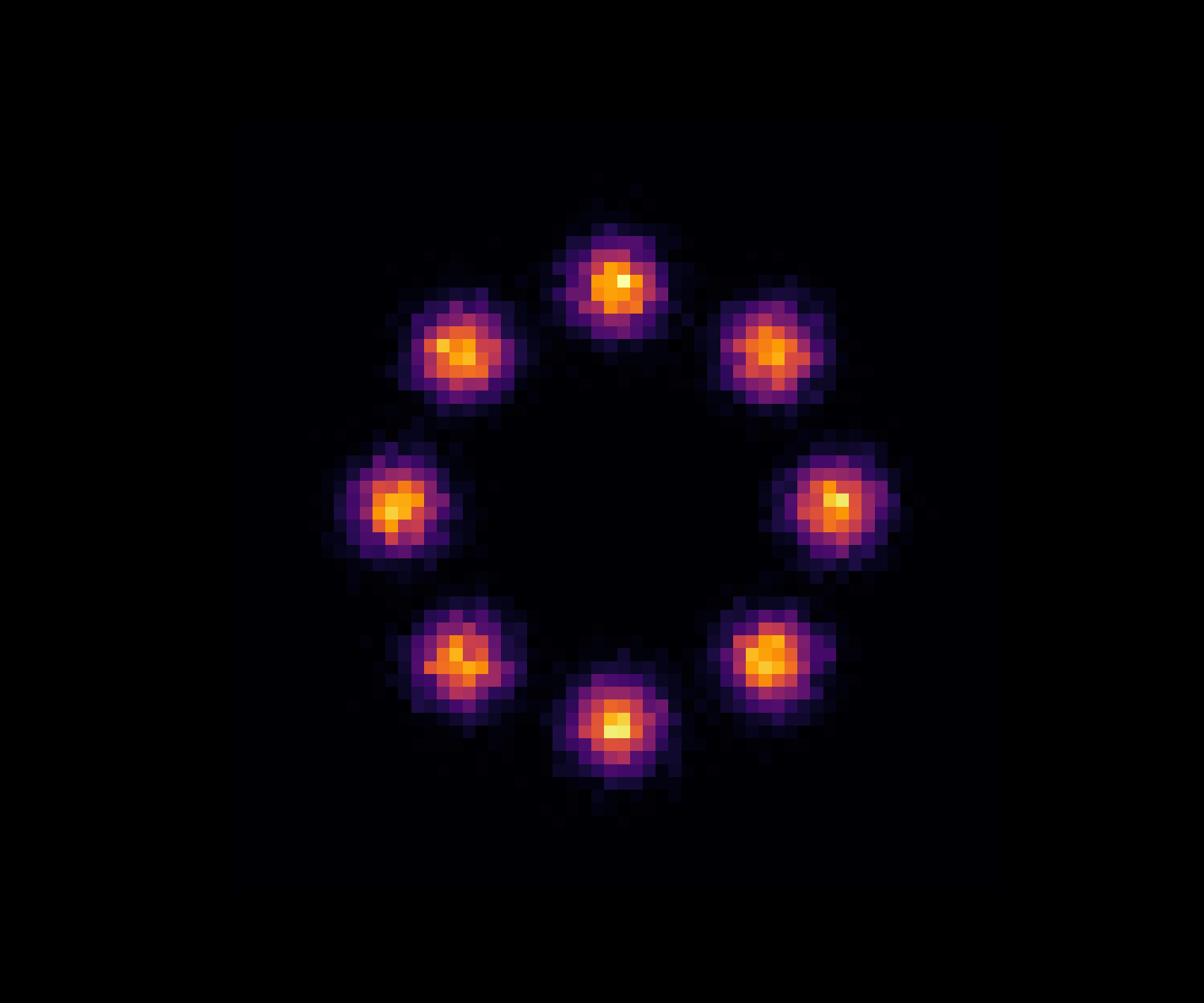} &
\includegraphics[width=0.18\textwidth]{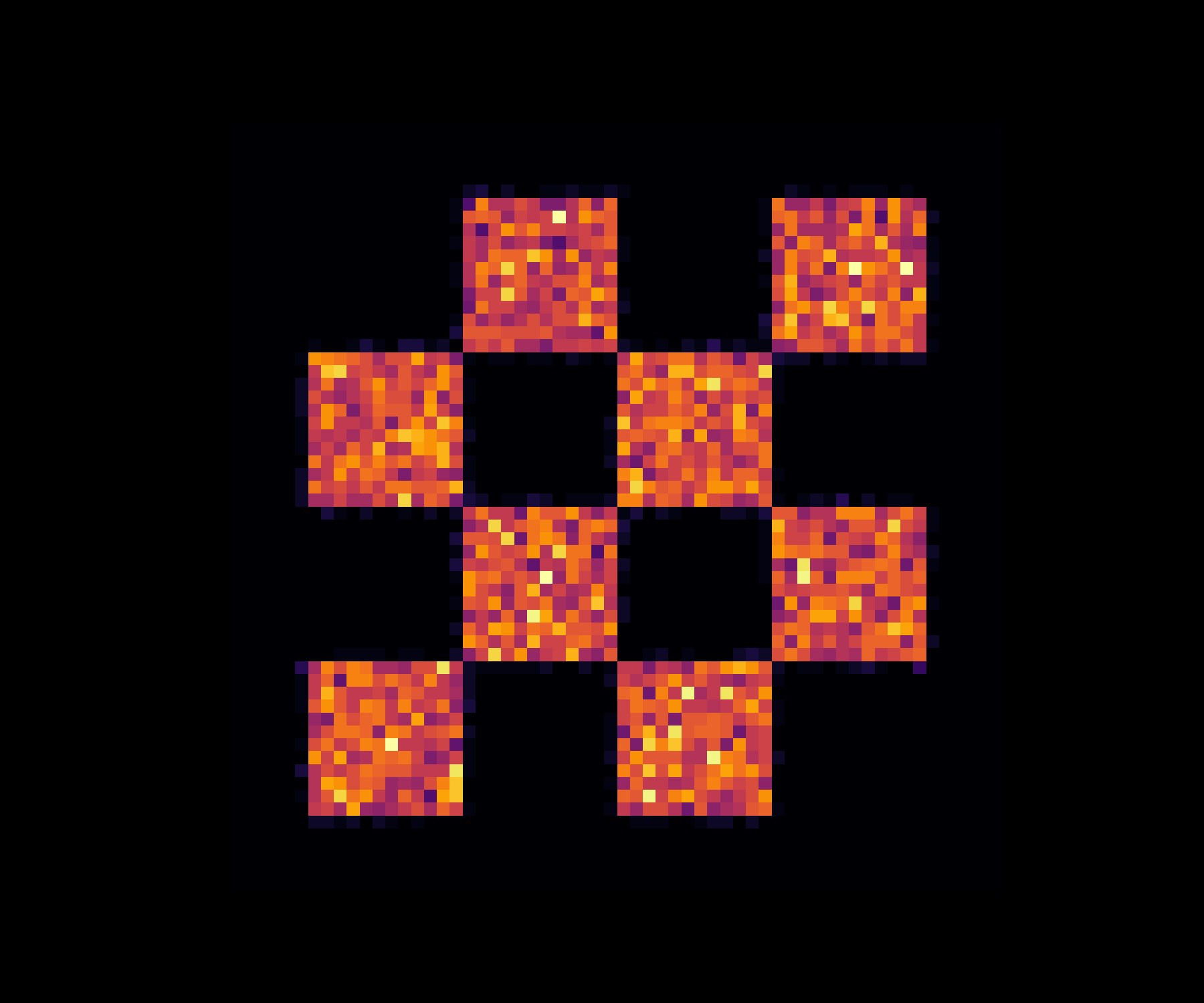} \\
\includegraphics[width=0.18\textwidth]{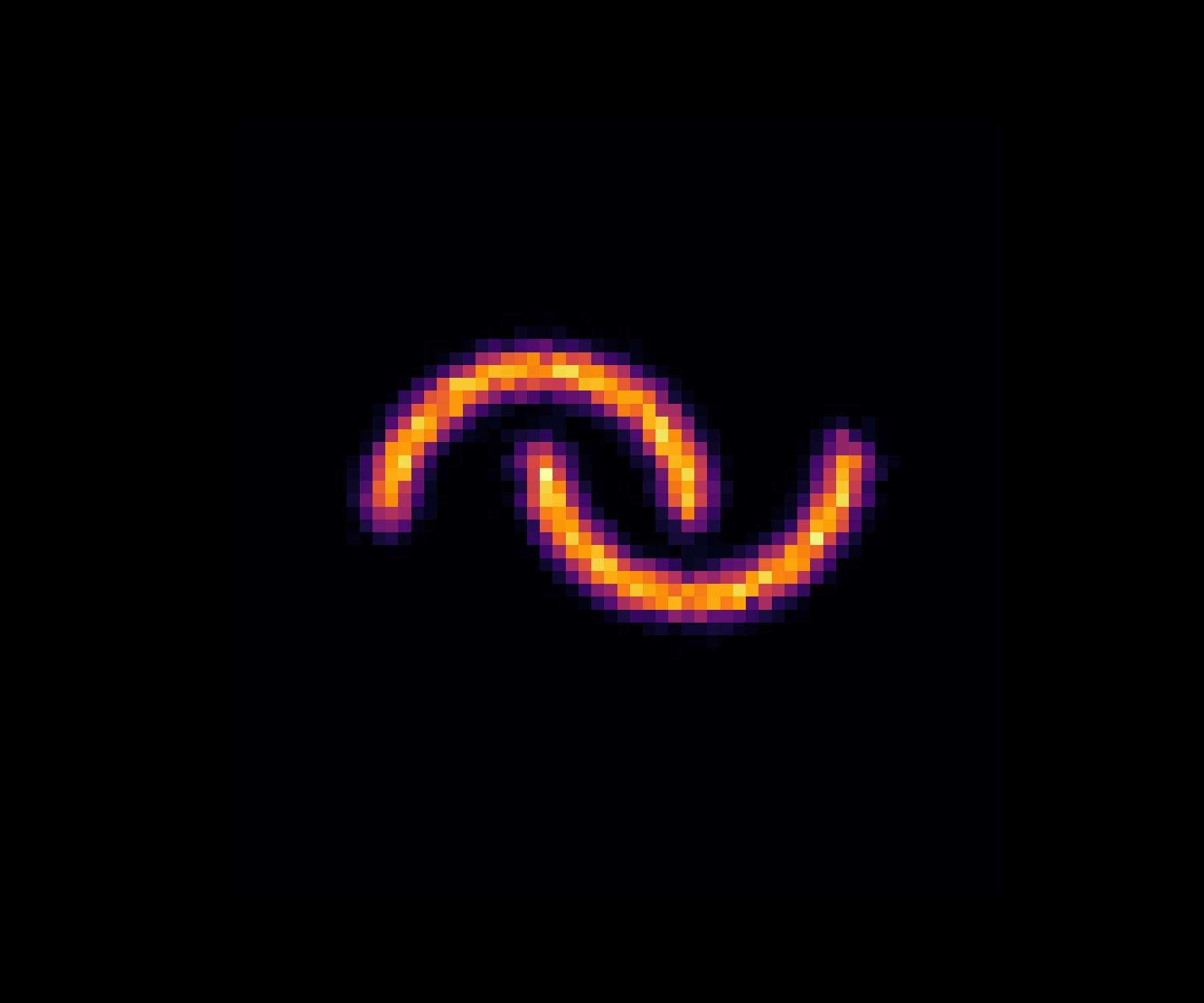} &
\includegraphics[width=0.18\textwidth]{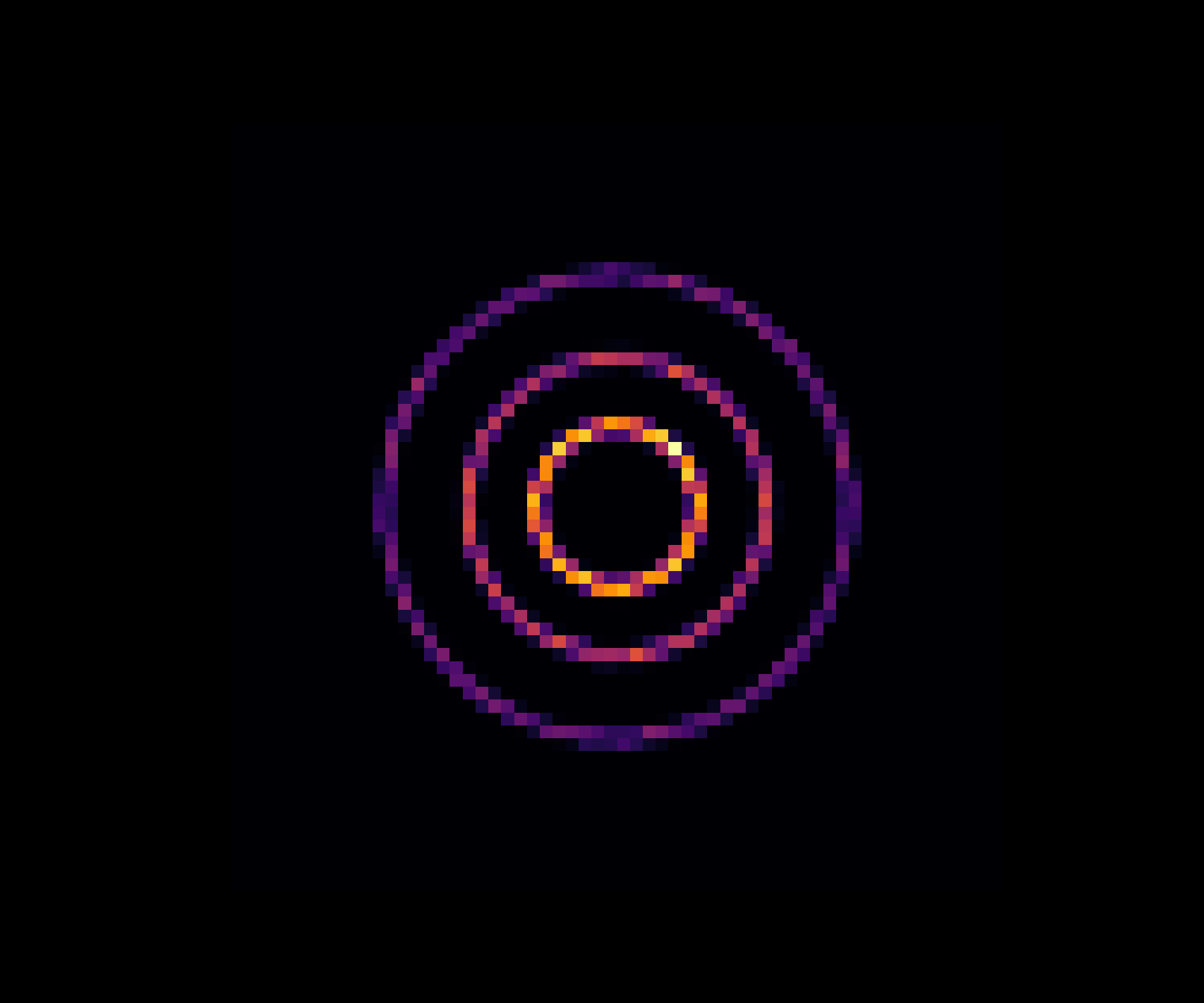} &
\includegraphics[width=0.18\textwidth]{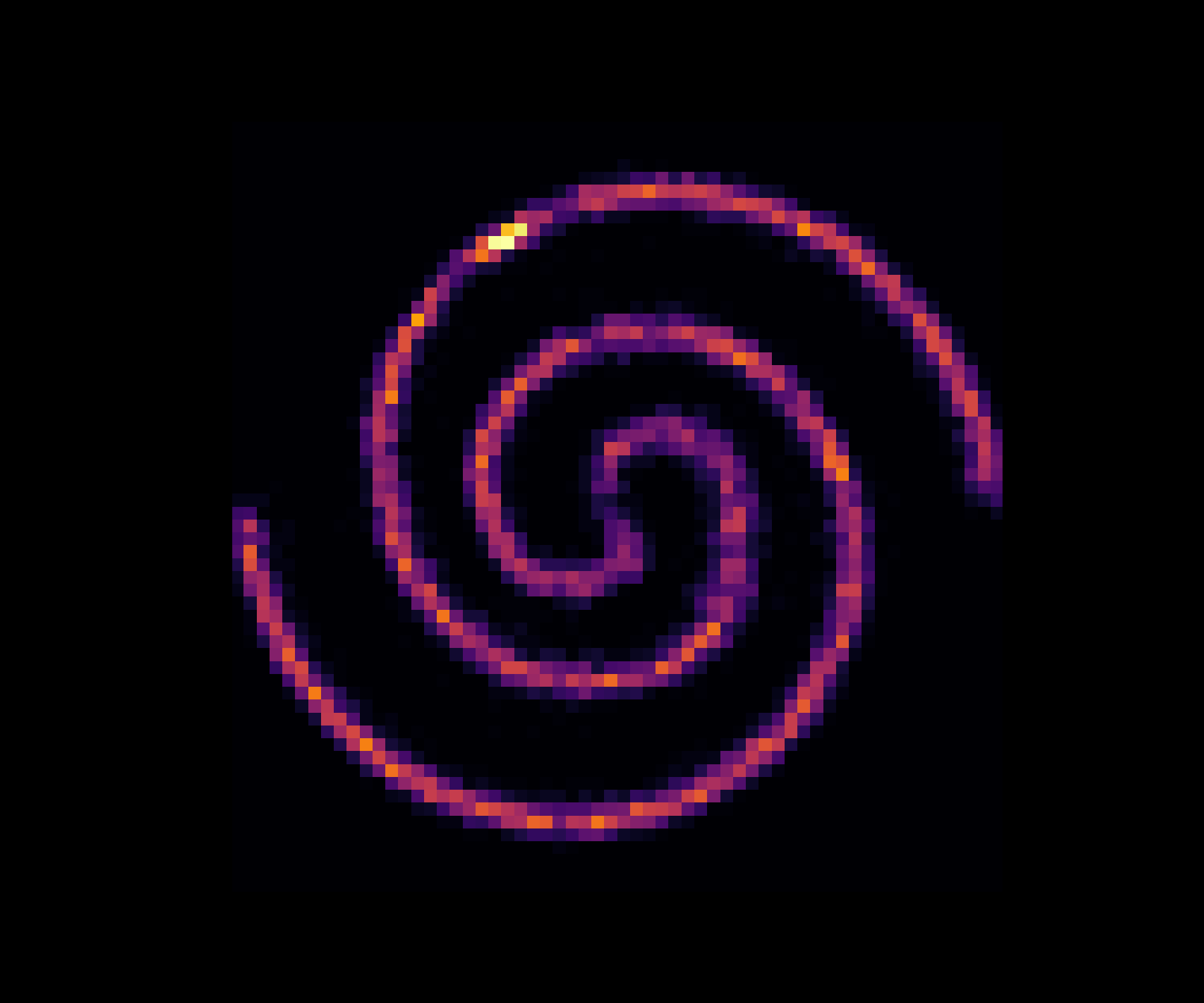} &
\includegraphics[width=0.18\textwidth]{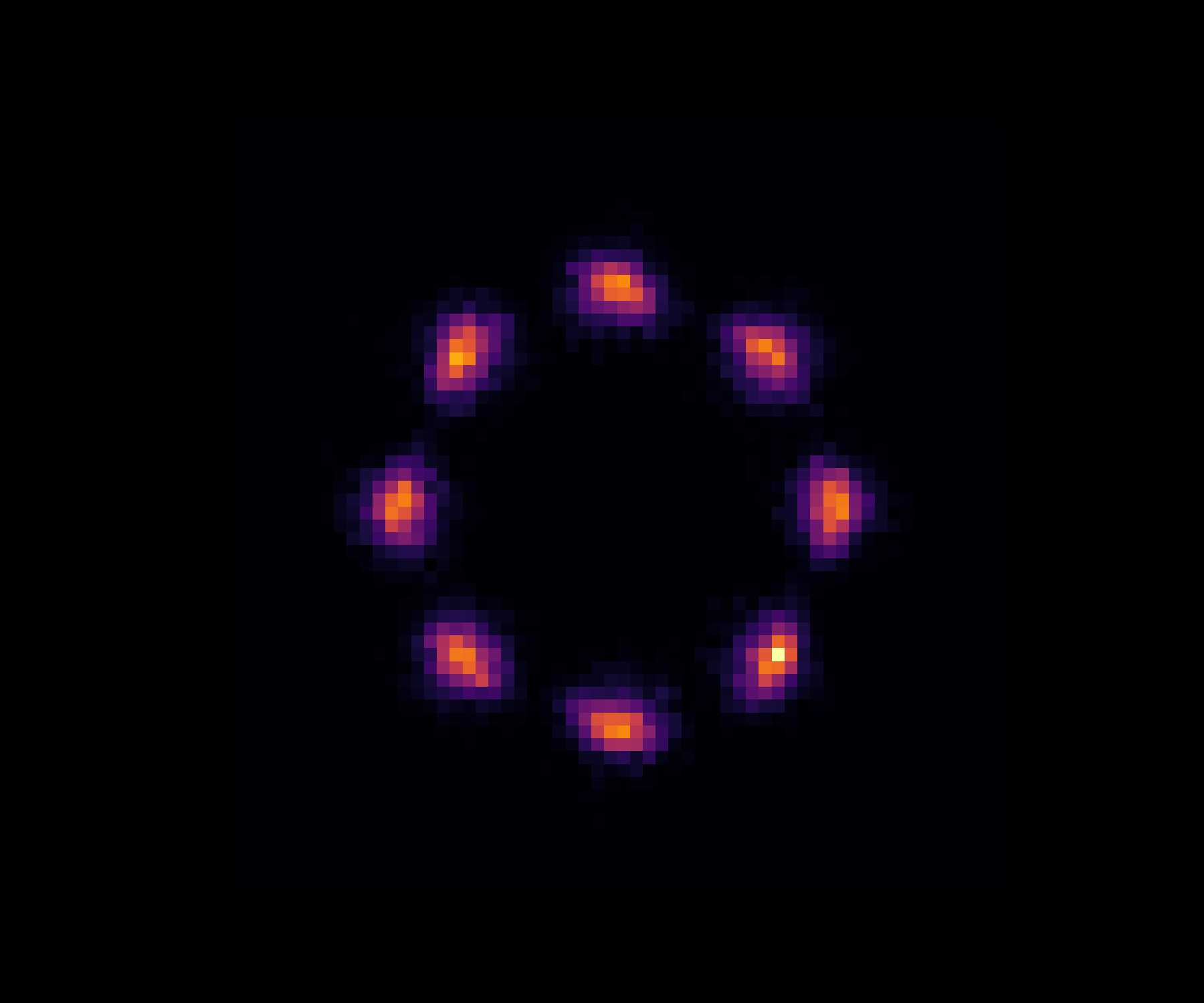} &
\includegraphics[width=0.18\textwidth]{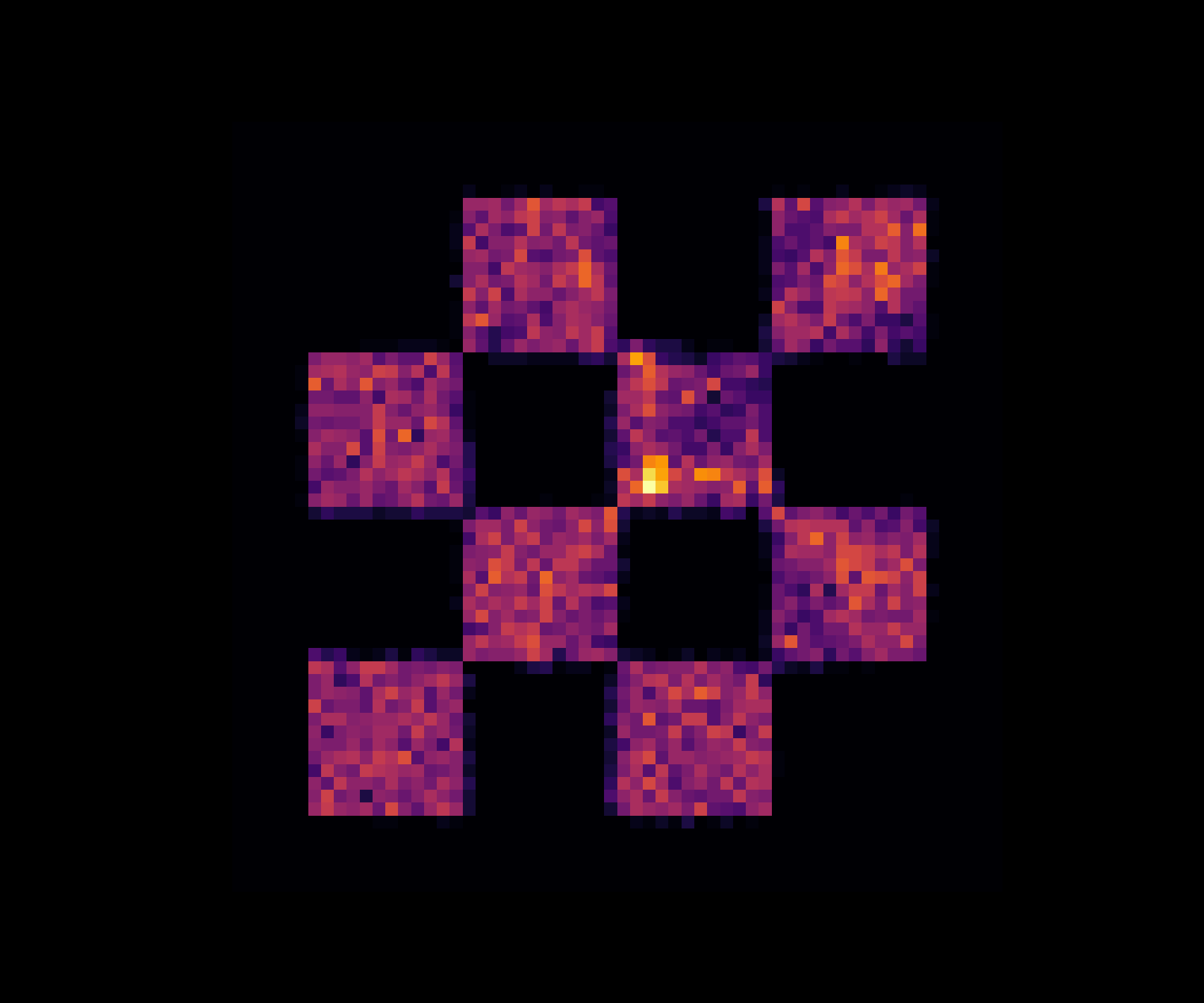} \\
\includegraphics[width=0.18\textwidth]{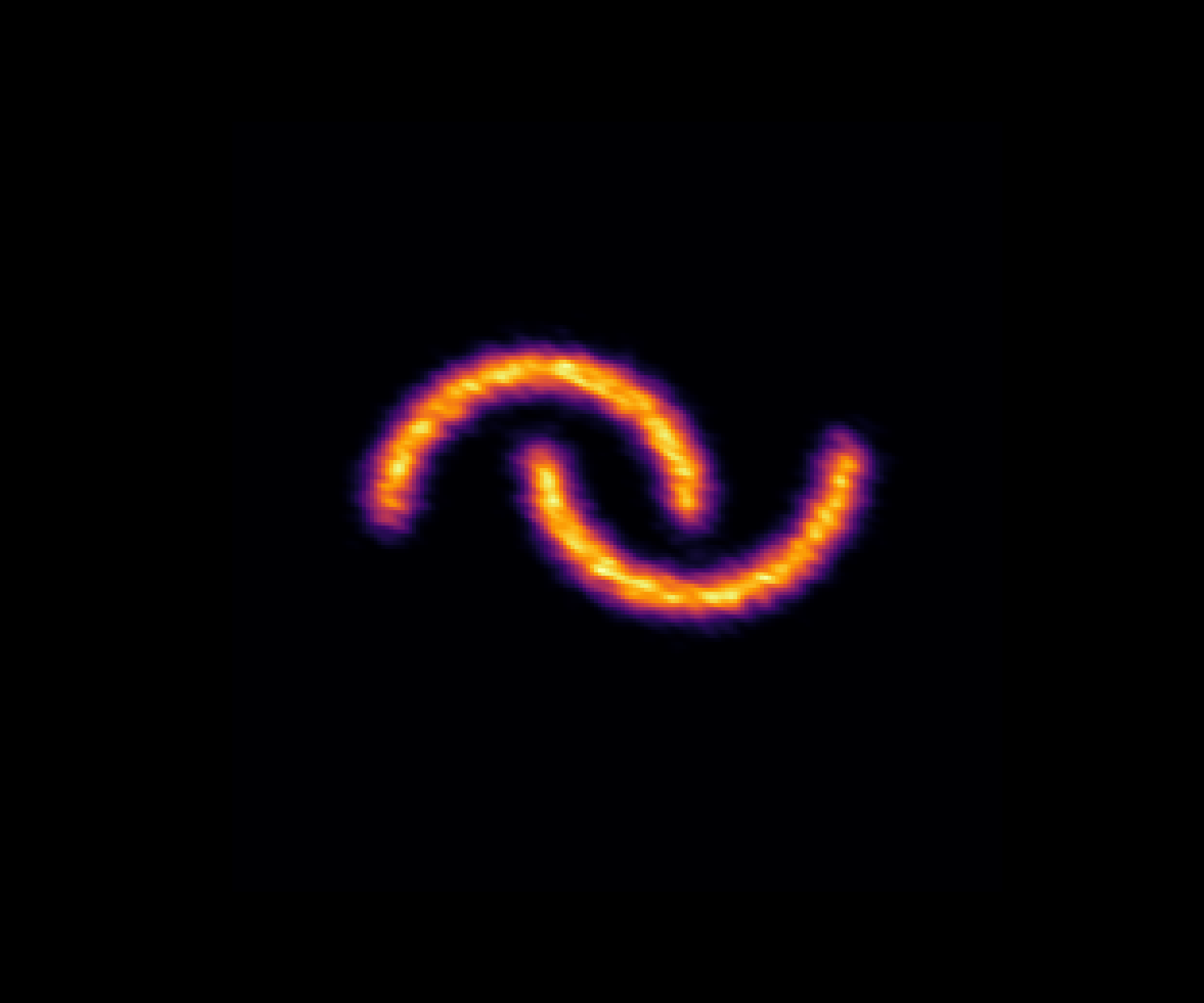} &
\includegraphics[width=0.18\textwidth]{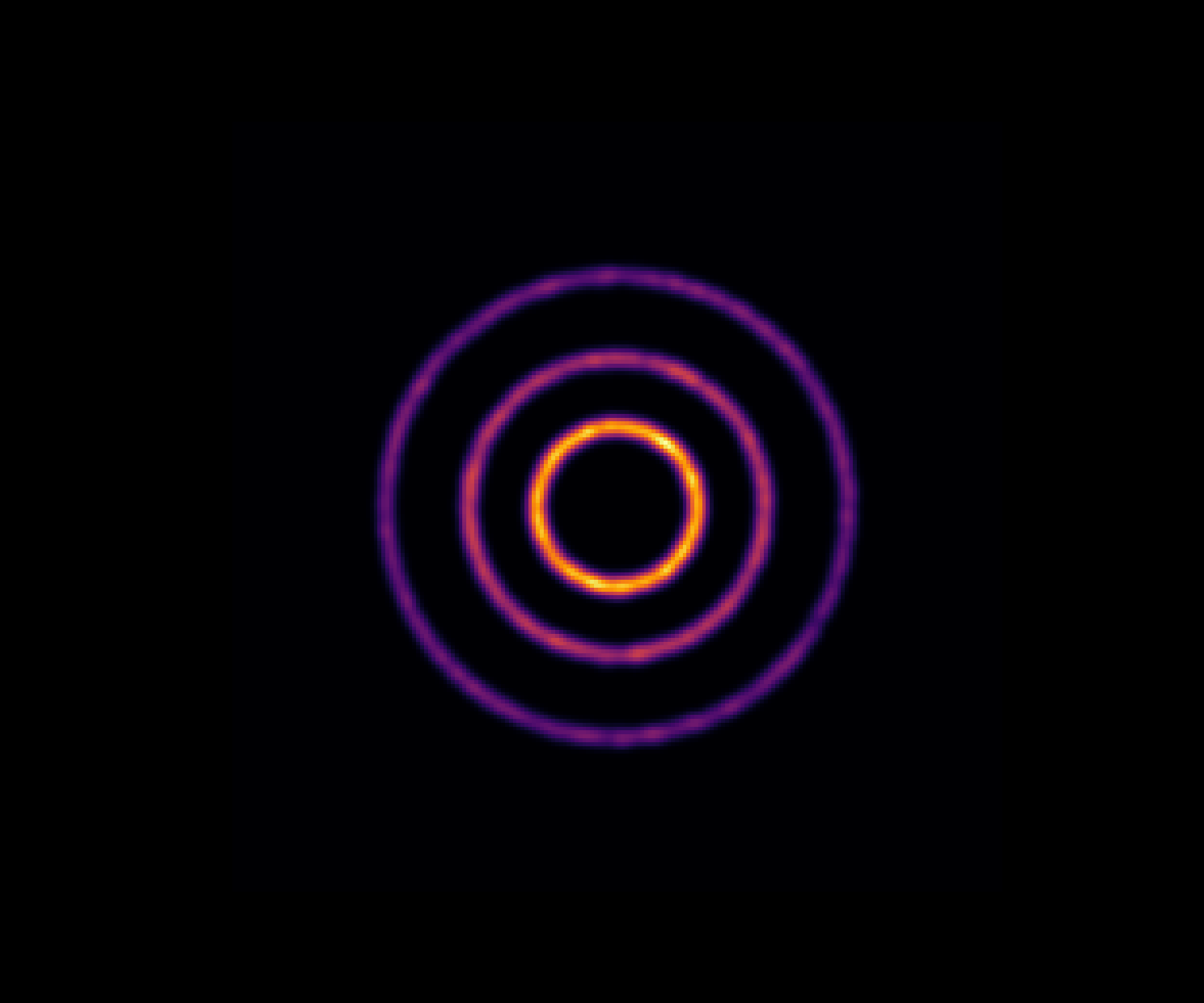} &
\includegraphics[width=0.18\textwidth]{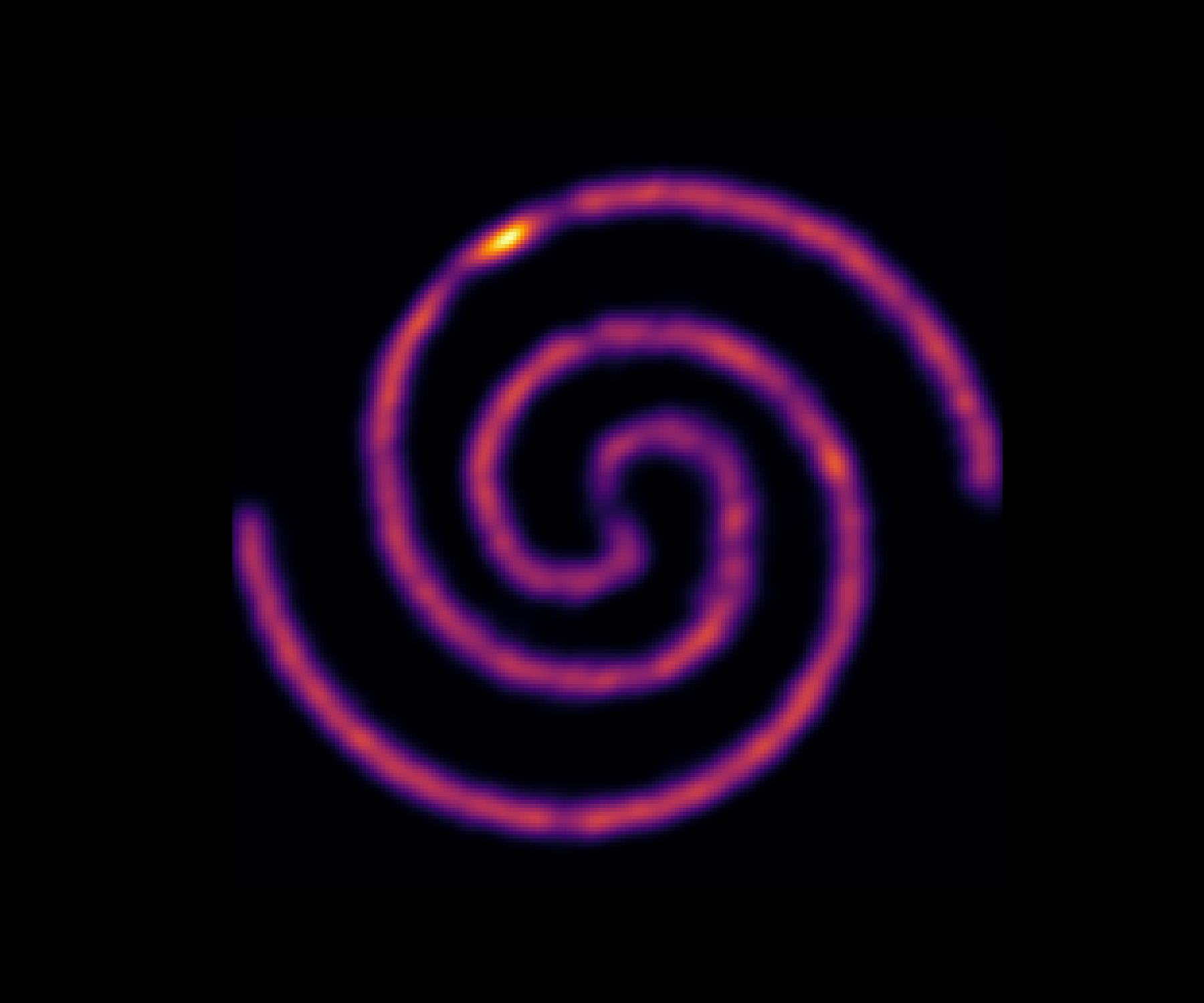} &
\includegraphics[width=0.18\textwidth]{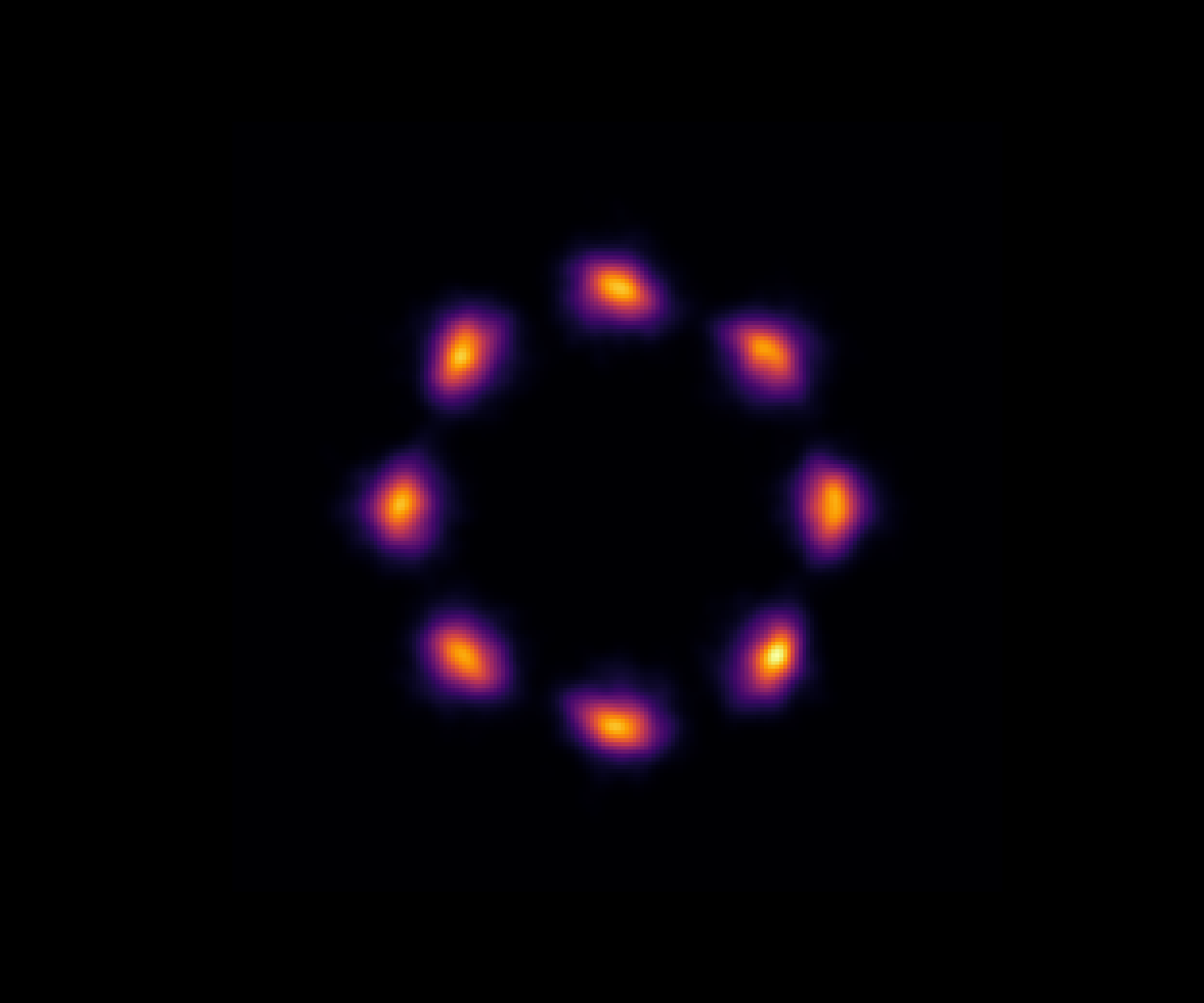} &
\includegraphics[width=0.18\textwidth]{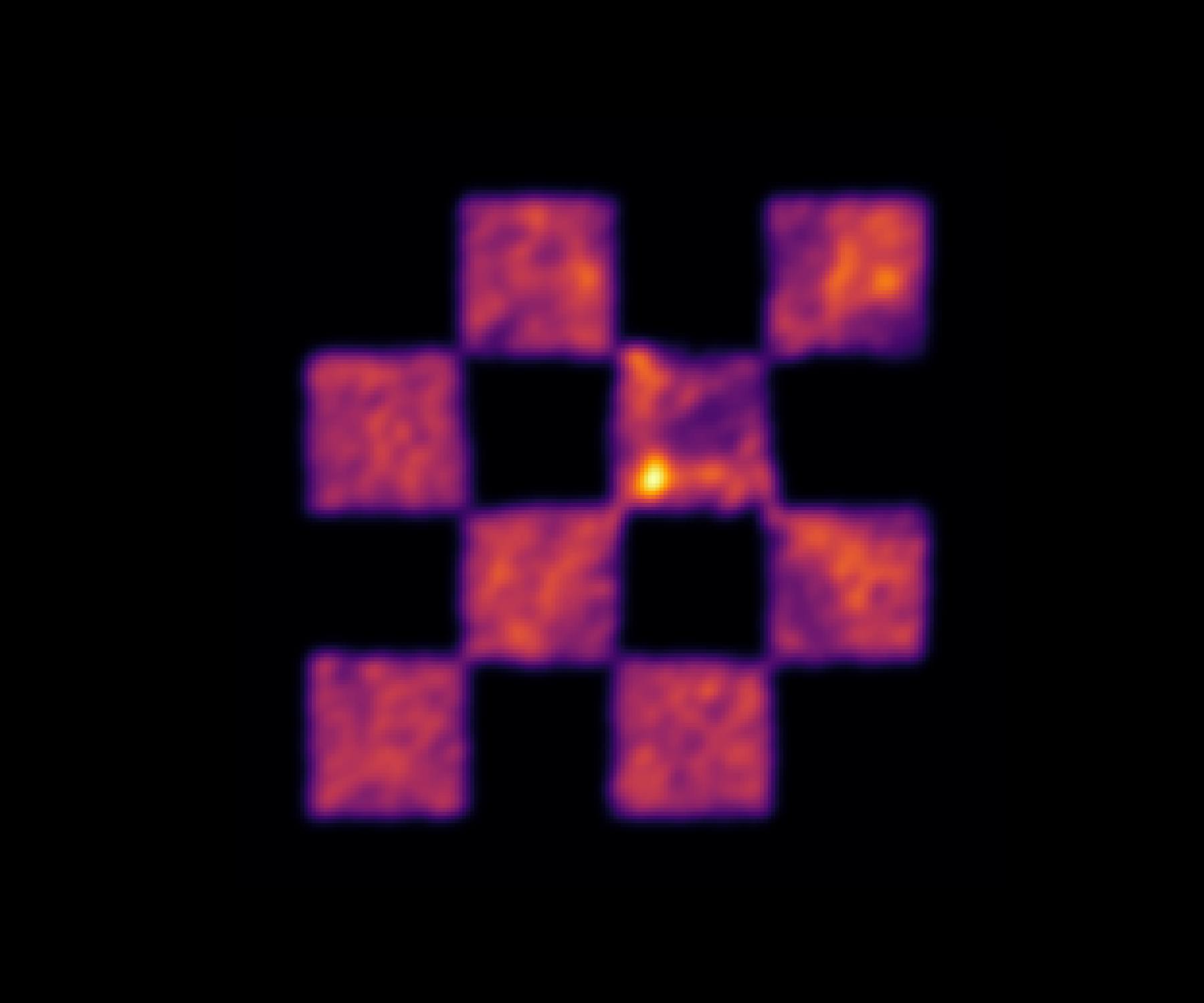} \\
Moons & Rings & 2-Spirals & 8-Gaussians & Checkerboard
\end{tabular}
\end{adjustbox}
\caption{Comparison of generated samples across benchmark datasets. Each column corresponds to a different dataset. 
Top row: target density $\rho_0$. Middle row: generated samples. Bottom row: KDE-estimated distribution using a Gaussian kernel.}
\label{fig:experiments}
\end{figure}

As shown in Fig.~\ref{fig:experiments}, the proposed flow model with the sparse transformer accurately and robustly recovers the target benchmark distributions.

\begin{figure}[htbp]
\centering
\begin{tabular}{ccc}
\includegraphics[width=0.25\textwidth]{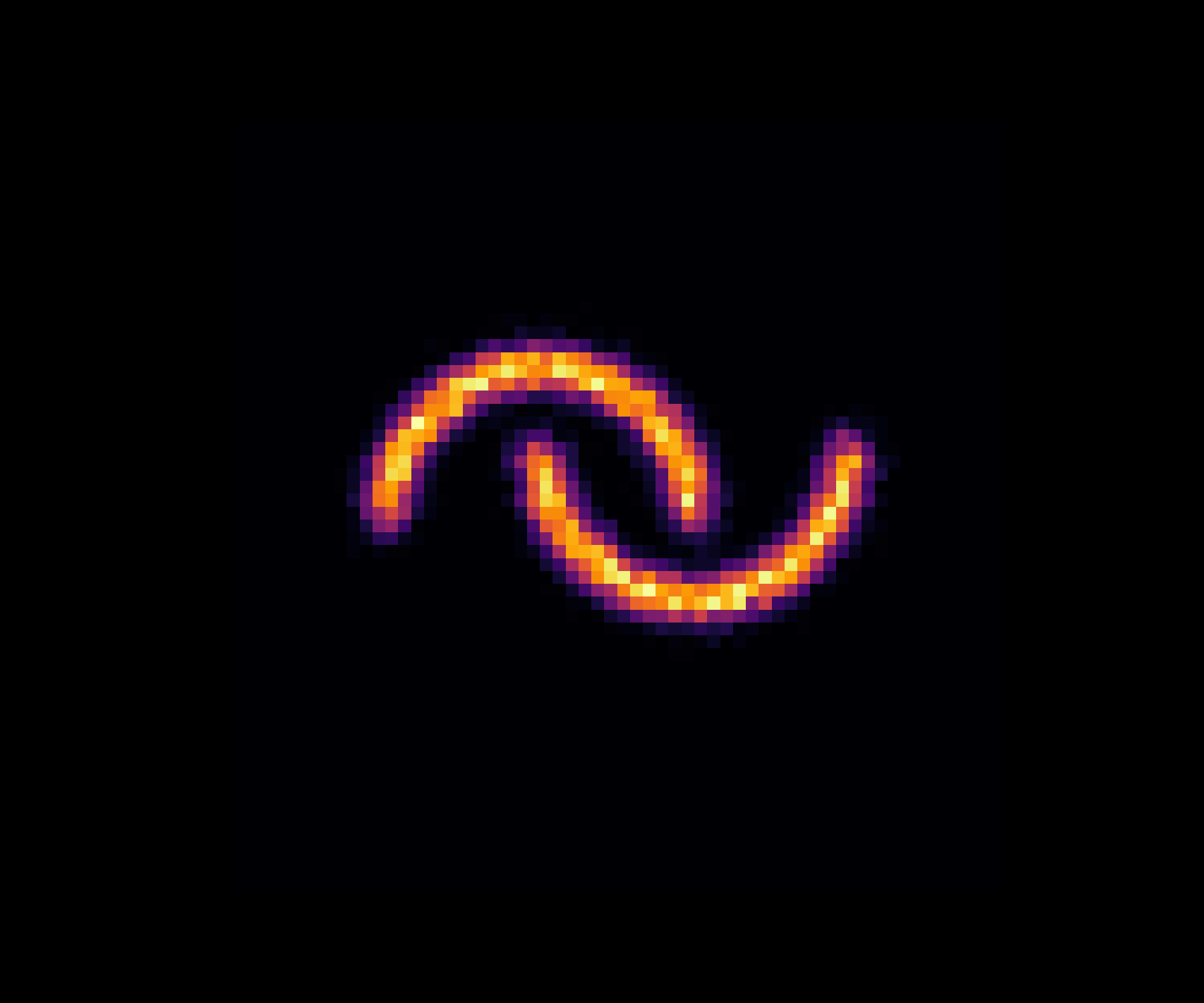} &
\includegraphics[width=0.25\textwidth]{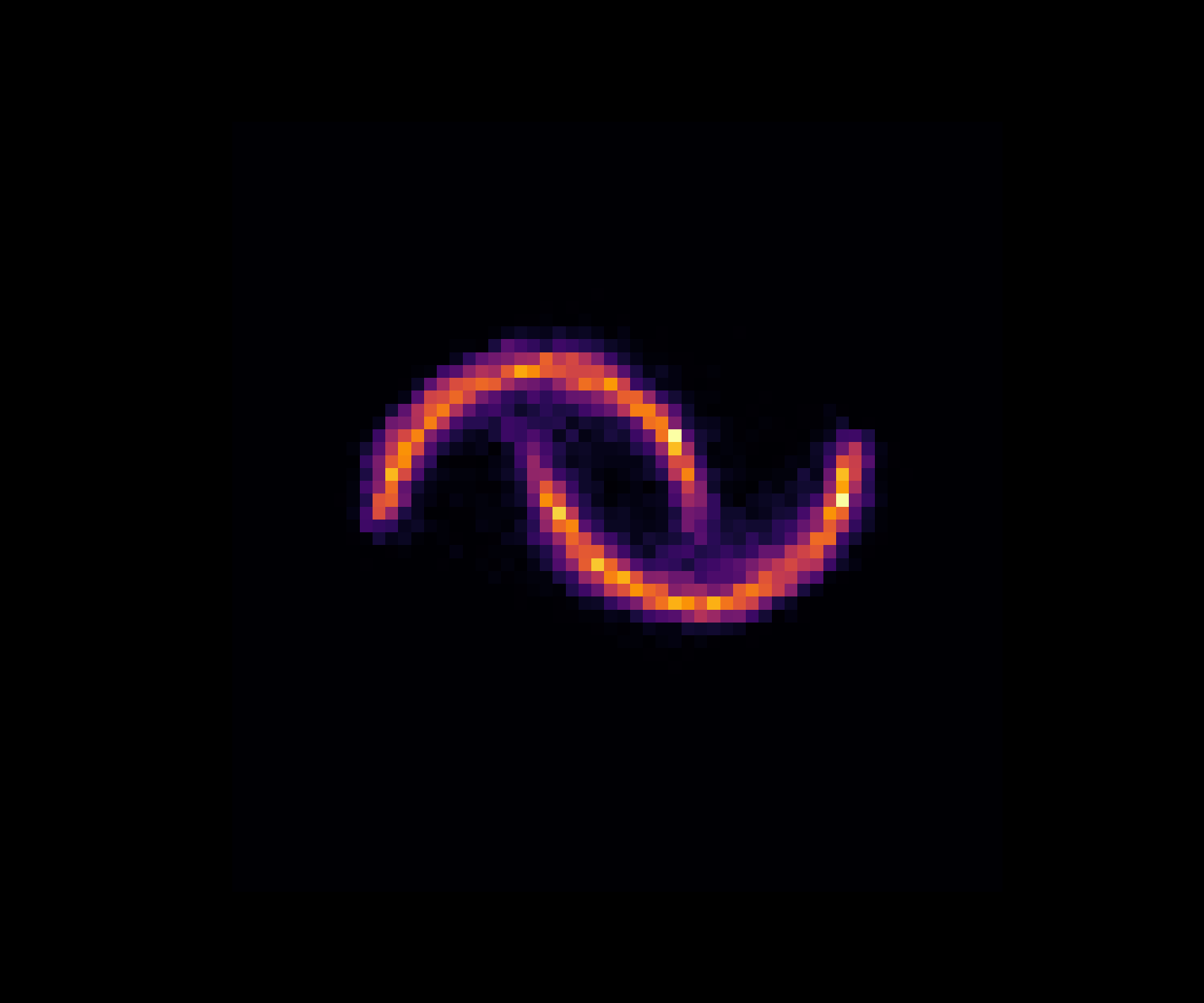} &
\includegraphics[width=0.25\textwidth]{New_figs/_moons_alph20_1_m48_lambdak2_checkpt_generated_density.png} \\
\includegraphics[width=0.25\textwidth]{New_figs/_2spirals_alph10_1_m64_lambdak2_checkpt_target_density.png} &
\includegraphics[width=0.25\textwidth]{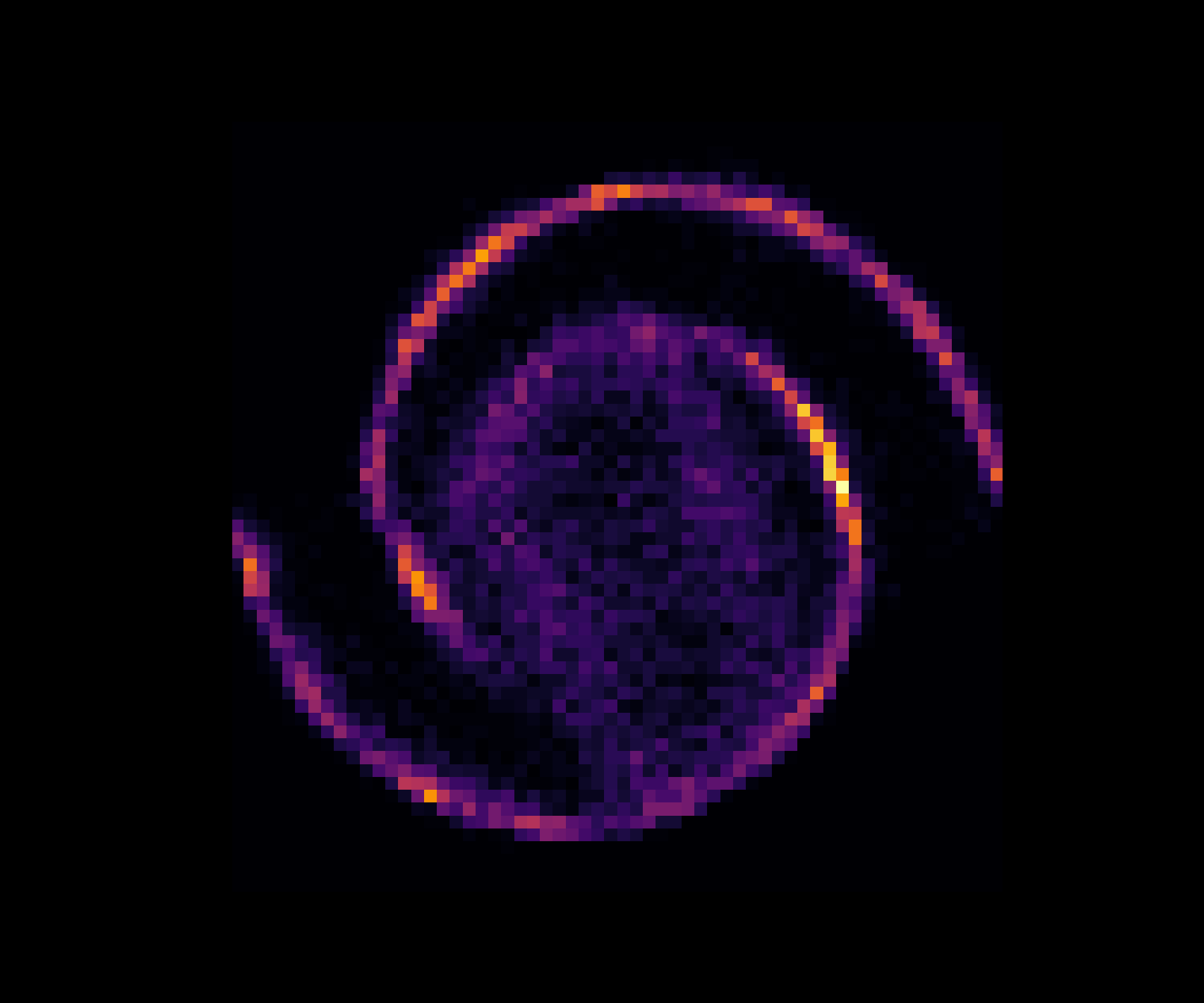} &
\includegraphics[width=0.25\textwidth]{New_figs/_2spirals_alph10_1_m64_lambdak2_checkpt_generated_density.png} \\
Target &  Generation without & Generation with\\
&sparse transformer & sparse transformer
 \end{tabular}
\caption{Top row: Moons dataset. Bottom row: 2-Spirals dataset. 
Columns show the target density and generated samples without and with the sparse transformer, respectively.}
\label{fig:moons_spirals}
\end{figure}
From Fig.~\ref{fig:moons_spirals}, we observe that the flow with the sparse transformer provides a closer approximation to the benchmark distributions, demonstrating the positive effect of the sparse prior on sample quality.

\subsection{Bayesian inverse problems}

We now apply the proposed flow model to solve Bayesian inverse problems \cite{stuart2010inverse}.  
Let $x$ denote the parameter of interest and $y$ the corresponding measurement, modeled as
\[
y = F(x) + \epsilon\,, 
\qquad \epsilon \sim \mathcal{N}(0,\sigma^2)\,,
\]
where $F$ is a known forward operator. The objective is to approximate the posterior distribution
\[
p(x \mid y) \;\propto\; p(y \mid x)\,p(x)\,,
\]
with likelihood function $p(y \mid x)$ and prior $p(x)$.  

We employ a conditional flow to approximate the posterior.  
Given training pairs $\{(x_i,y_i)\}_{i=1}^N$, let $\rho_T(x \mid y)$ denote the learned conditional density.  
The training objective is to minimize the KL divergence 
\[
\KL\!\big(\rho_T(x \mid y)\,\|\,p(x \mid y)\big) 
= \int \rho_T(x \mid y)\,\log \frac{\rho_T(x \mid y)}{p(x \mid y)} \, dx\,.
\]
Using Bayes’ rule and omitting the constant term $\log p(y)$, we have
\begin{align*}
\KL\!\big(\rho_T(x \mid y)\,\|\,p(x \mid y)\big)
&= \int \rho_T(x \mid y)\,\log \rho_T(x \mid y)\,dx 
- \int \rho_T(x \mid y)\,\log p(y \mid x)\,dx \\
&\quad - \int \rho_T(x \mid y)\,\log p(x)\,dx\,.
\end{align*}

Approximating the integrals by Monte Carlo sampling with $x_i^T \sim \rho_T(x \mid y)$ yields the empirical loss
\[
\mathcal{L} \;\approx\; \frac{1}{N} \sum_{i=1}^N 
\Big[ 
\log \rho_T(x_i^T \mid y) 
- \log p(y \mid x_i^T) 
- \log p(x_i^T) 
\Big]\,,
\]
where $x_i^0 \sim \rho_0$ from a simple base distribution, and $x_i^T$ is obtained via the evolution \eqref{def_split_particle_implicit} with the drift potential $\phi_y(x,t)$.  
For this application, the training step in Algorithm~\ref{alg:training} corresponds to integrating forward in time from $t = 0$ to $T$, so the system in \eqref{ODE_system_1}-\eqref{ODE_system_2} is replaced by its forward-time counterpart.  

In the experiments below, we consider a Gaussian likelihood function and a Laplace prior:
\[
p(y \mid x) \;\propto\; \exp\!\left(-\frac{1}{2\sigma^2}\,\|y - F(x)\|_2^2\right),
\qquad
p(x) \;\propto\; \exp(-\lambda \|x\|_1)\,.
\]
In this case, the empirical loss simplifies to
\[
\mathcal{L} \;\approx\; \frac{1}{N} \sum_{i=1}^N 
\Big[
\log \rho_T(x_i \mid y) 
+ \frac{1}{2\sigma^2}\|y - F(x_i)\|_2^2
- \log p(x_i)
\Big] + C\,,
\]
where $C$ is independent of $x_i$.  

To accelerate training, we incorporate a supervised term $\|x_i^T - x_i\|_2^2$ into the loss function.  
This encourages the flow to map samples more accurately while retaining the theoretical properties established in Section~\ref{sec_analysis}.  
Moreover, the transport cost and the HJB regularizer in \eqref{objective} remain unchanged.

Training the conditional flow with the drift potential $\phi_y$ thus transports the base distribution $\rho_0$ to the conditional law $\rho_T(x \mid y)$, enabling efficient posterior sampling for new measurements $y$.  
In the following, we illustrate this approach on three representative inverse problems.  

\begin{enumerate}

\item \textbf{Elliptic inverse problem.}  
We first consider the problem of recovering a function $u$ from observations $w$ governed by the elliptic PDE
\[
-\alpha \Delta w + w = u \quad \text{in } \Omega\,, 
\qquad \frac{\partial w}{\partial \nu} = 0 \quad \text{on } \partial \Omega\,,
\]
where $\alpha > 0$ is a fixed parameter. The measured data are
\[
y = w|_\Gamma + \zeta = (F u)|_\Gamma + \zeta\,,
\]
with $\Gamma \subset \Omega$ denoting the observation points.  
The inverse problem consists of reconstructing $u$ from these noisy, partial observations of $w$.  

To simplify the recovery, we assume that $u$ admits a Gaussian bump basis expansion
\[
u(x) = \sum_{j=1}^d \exp\!\Big(-\frac{|x-x_j|^2}{2\sigma^2}\Big)\,,
\]
with $\sigma = 0.1$ and $x_j$ uniformly distributed in $[0,1]$.  
We then apply the proposed sparse transformer to learn the posterior distribution of $u$.

\begin{figure}[ht]
    \centering
    \begin{subfigure}[t]{0.26\textwidth}
        \centering
        \includegraphics[width=\textwidth,trim=0 0.2cm 0 0.8cm,clip]{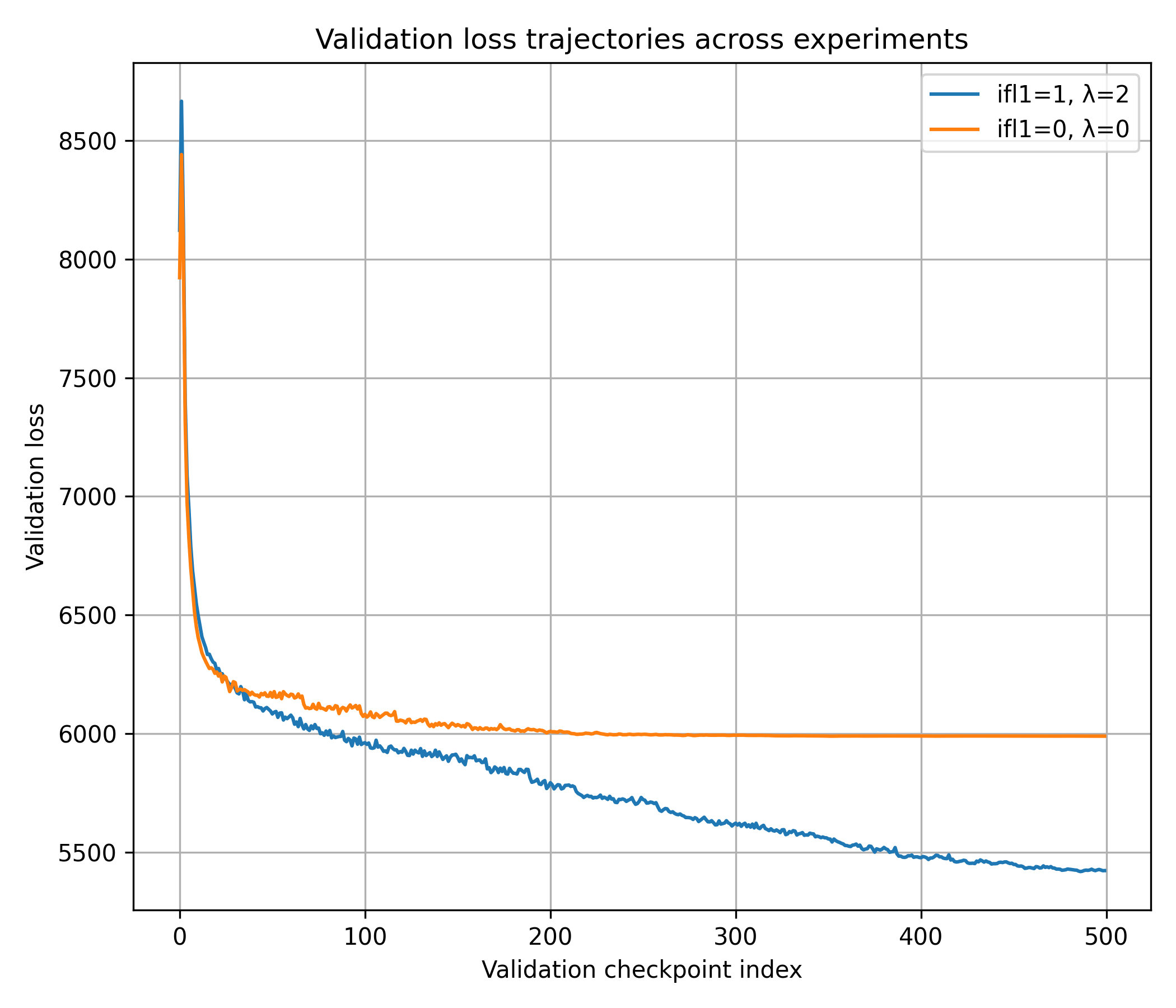}
        \caption{Validation loss trajectory.}
        \label{fig:validation_loss}
    \end{subfigure}
    \hfill
    \begin{subfigure}[t]{0.32\textwidth}
        \centering
        \includegraphics[width=\textwidth,trim=1cm 0cm 0.3cm 0.8cm,clip]{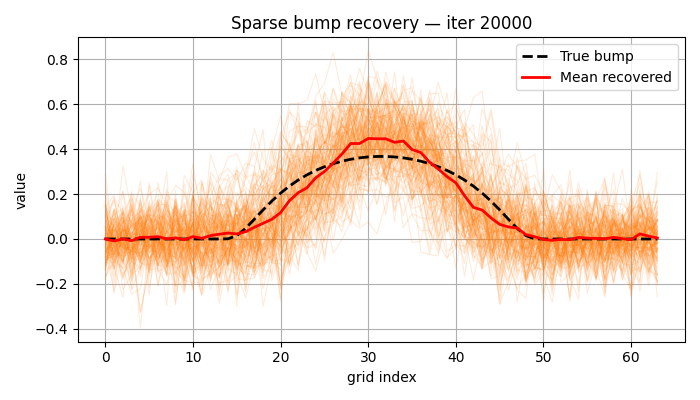}
        \caption{Recovery without sparse transformer.}
        \label{fig:bump_lambda0}
    \end{subfigure}
    \hfill
    \begin{subfigure}[t]{0.32\textwidth}
        \centering
        \includegraphics[width=\textwidth,trim=1cm 0cm 0.3cm 0.8cm,clip]{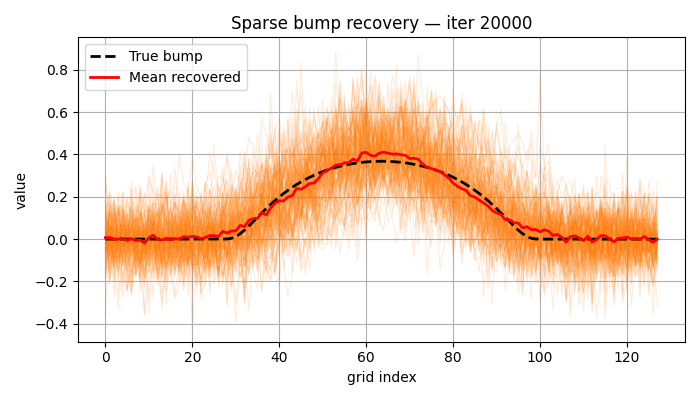}
        \caption{Recovery with sparse transformer.}
        \label{fig:bump_lambda2}
    \end{subfigure}
 
    \caption{Comparison of recovery results. The first plot shows validation loss trajectories, where blue and orange correspond to with and without the sparse transformer. The last two plots show reconstructions without and with the sparse transformer. Dotted: true coefficients; red: sample mean; orange: individual samples.  }
    \label{fig:conditioned_inverse}
\end{figure}

From Fig.\,\ref{fig:conditioned_inverse}, the first plot implies the validation loss decreases more rapidly when using the sparse transformer than without it. Moreover, we see that the flow with the sparse transformer provides a more accurate approximation for the solution of the inverse problem. The distribution of the recovered samples in the last plot is also closer to Gaussian, consistent with the underlying noise distribution, as reflected in the spread of the orange curve.

\item \textbf{Lorenz-63 system.}  
We examine our flow model on parameter inference for the Lorenz-63 chaotic dynamical system following the setup in \cite{huang2022iterated}:
\[
\begin{cases}
\frac{d}{dt}x_1 = \sigma(x_2 - x_1)\,, \\ 
\frac{d}{dt}x_2 = x_1(r - x_3) - x_2\,, \\ 
\frac{d}{dt}x_3 = x_1 x_2 - \beta x_3\,,
\end{cases}
\]
with three unknown parameters $\sigma, r, \beta$. After a spin-up time $T_{\text{spin}}$, we collect averaged observations over an interval $T_{\text{ob}}$:
\[
y_{i,j} = \int_{T+jT_{\text{ob}}}^{T+(j+1)T_{\text{ob}}} x_i(t) \, dt + \zeta_i\,, 
\qquad \zeta_i \sim \mathcal{N}(0, \sigma_{\text{noise}}^2)\,,
\]
for $i = 1, 2, 3$ and $j = 0, \dots, n_{mea}$.  
The inverse problem is to recover $(\sigma, r, \beta)$ from noisy averaged trajectory data $\{y_i\}$. 

\begin{figure}[ht]
    \centering
     \begin{subfigure}[t]{0.28\textwidth}
        \centering
        \includegraphics[width=\textwidth]{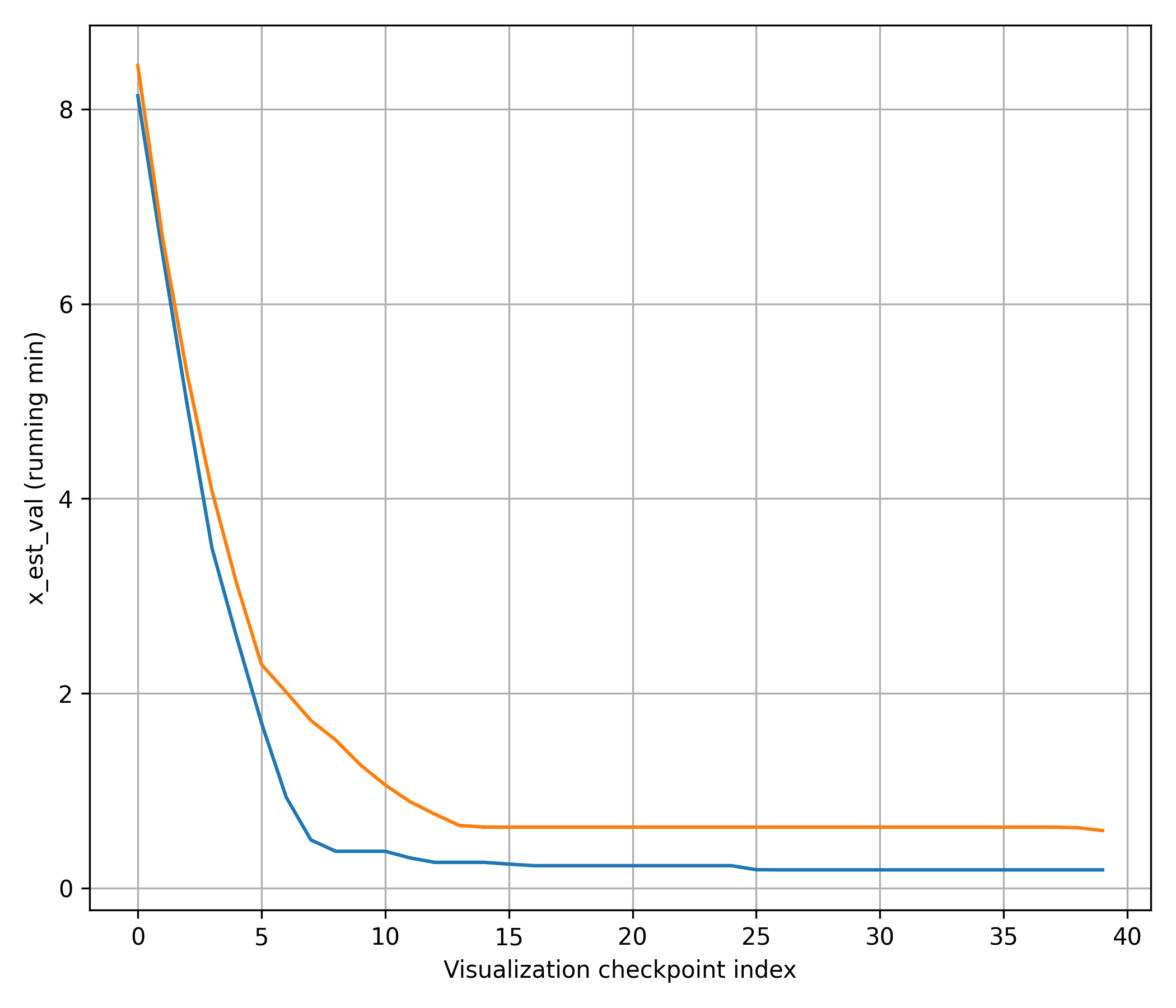}
        \caption{Validation loss trajectory.}
    \end{subfigure}
    \begin{subfigure}[t]{0.28\textwidth}
        \centering
        \includegraphics[width=\textwidth]{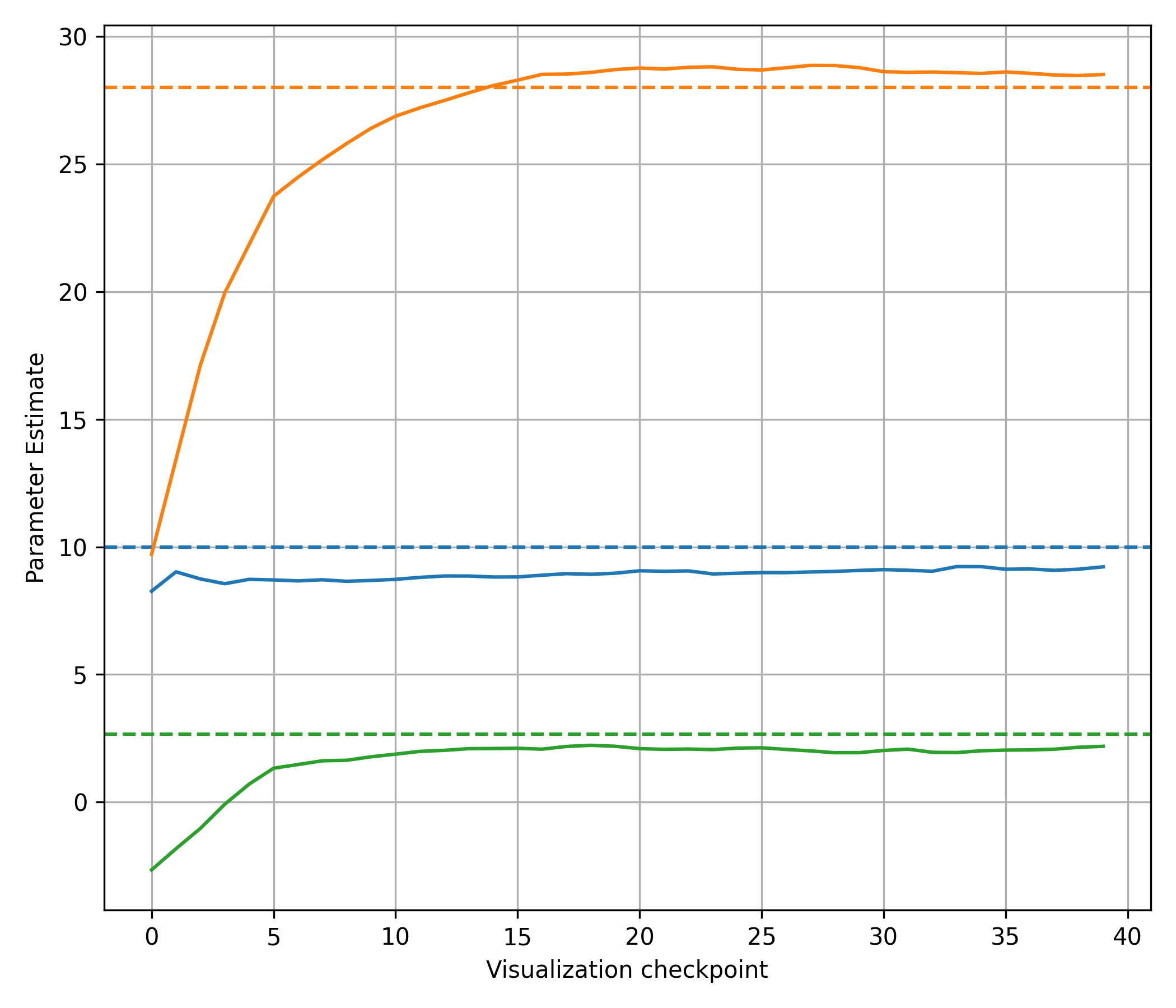}
        \caption{Without sparse transformer.}
    \end{subfigure}
    \begin{subfigure}[t]{0.28\textwidth}
        \centering
        \includegraphics[width=\textwidth]{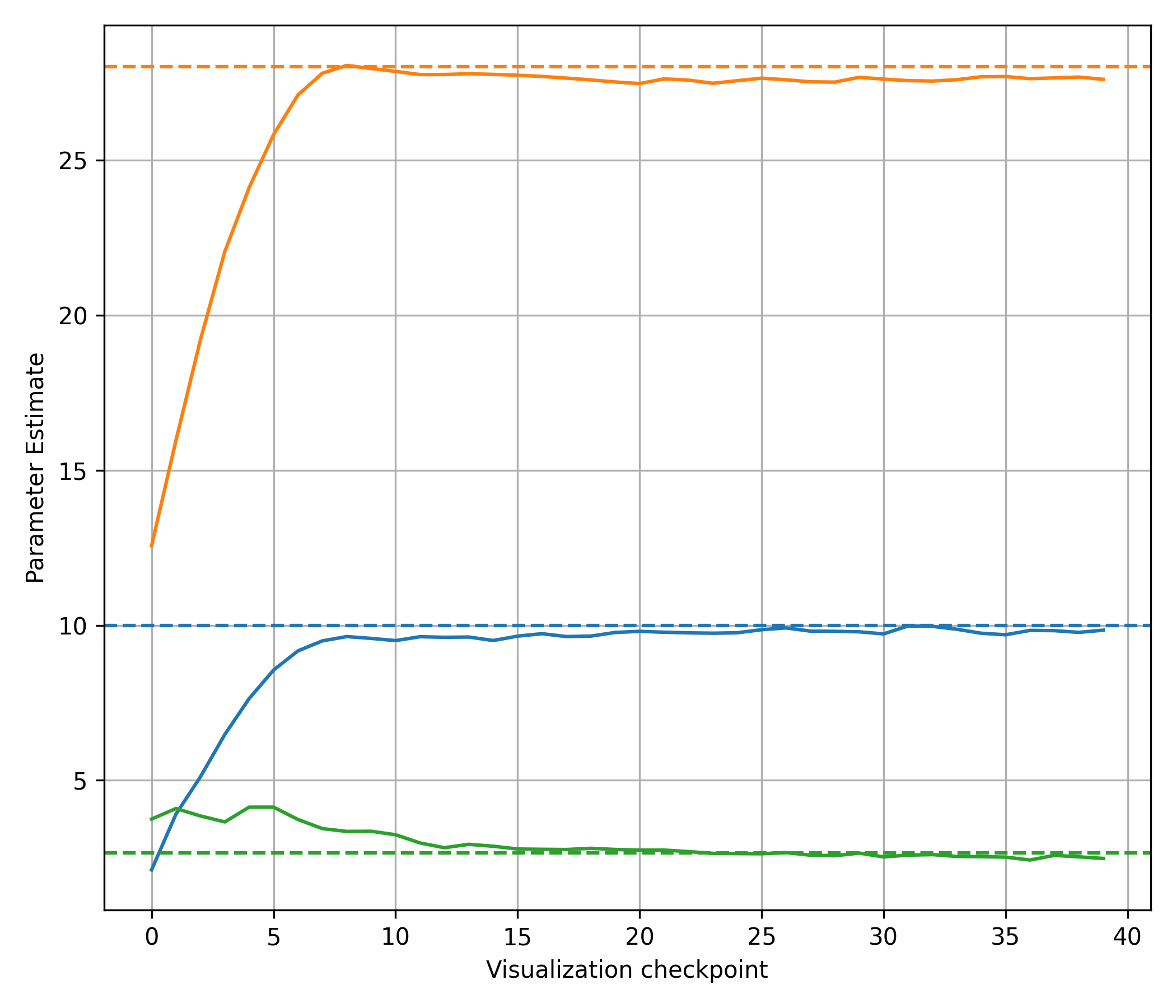}
        \caption{With sparse transformer.}
    \end{subfigure}
\caption{Lorenz-63 parameter inference. The first plot shows the validation error, where blue and orange correspond to with and without the sparse transformer, respectively. The second plot shows parameter estimates ($\sigma$: blue, $r$: orange, $\beta$: green) without the sparse transformer, and the third plot shows estimates with the sparse transformer; dashed lines indicate true values.  }
\label{fig:lorenz_experiments}
\end{figure}

From Fig.\,\ref{fig:lorenz_experiments}, we observe that the sparse transformer achieves more accurate parameter estimations while converging faster during the training process.

\item \textbf{Electrical impedance tomography.}  
Next, we consider the classical electrical impedance tomography (EIT) inverse problem governed by
\[
-\nabla \cdot (\sigma \nabla w) = 0 \quad \text{in } \Omega\,, 
\qquad 
\begin{cases}
    w = f\\ \frac{\partial w}{\partial \nu} = g
\end{cases}  \quad \text{on } \partial \Omega\,,
\]
where $\sigma(x)$ denotes the conductivity distribution to be recovered.  
For simplicity, we parameterize $\sigma$ using a Gaussian bump expansion
\[
\sigma(x) = \sum_{i=1}^N \sum_{j=1}^N 
c_{ij}\,\exp\!\left(-\tfrac{\|x - x_{ij}\|_2^2}{2\sigma^2}\right),
\]
with fixed grid points $x_{ij} \in [0,1]^2$ and unknown coefficients $\{c_{ij}\}$ to be inferred.  

We discretize and solve the forward problem using finite differences.  
For the inverse problem, we impose multiple boundary voltage patterns $f$ and record the corresponding Neumann data $g$, thereby generating datasets for conductivity recovery.  
Finally, we employ the proposed flow with the sparse transformer to approximate the posterior distribution $p(\sigma|g)$.

\begin{figure}[ht]
\centering 

\begin{subfigure}[t]{0.15\textwidth}
    \centering 
    \includegraphics[width=\textwidth]{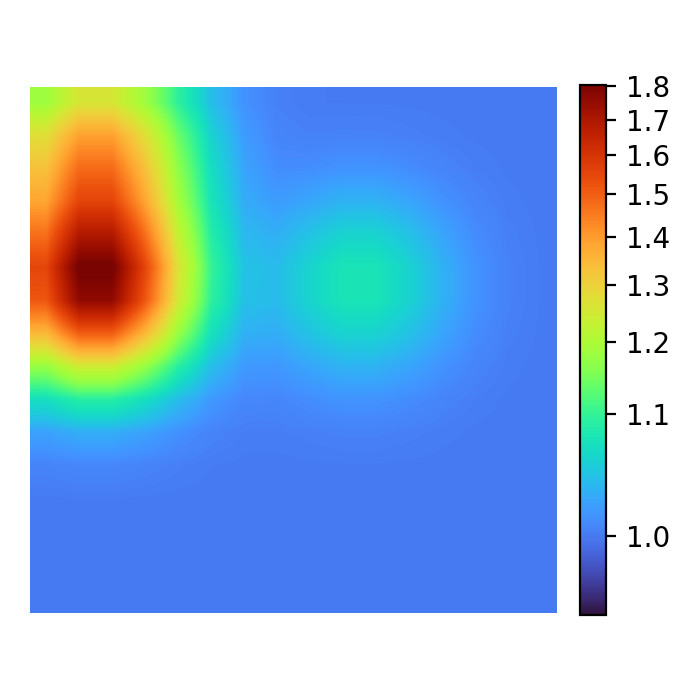}
\end{subfigure}
\hspace{0.02\textwidth}
\begin{subfigure}[t]{0.33\textwidth}
    \centering 
    \includegraphics[width=0.45\textwidth]{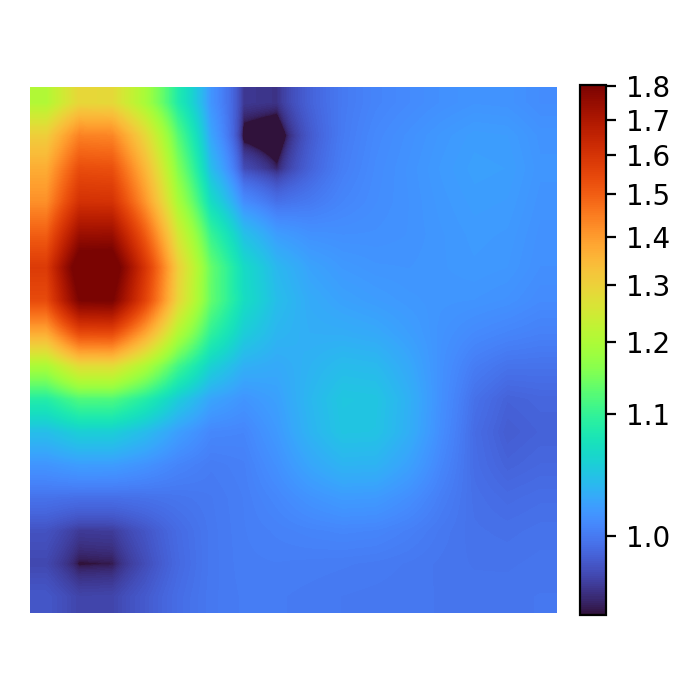}
    \centering 
    \includegraphics[width=0.45\textwidth]{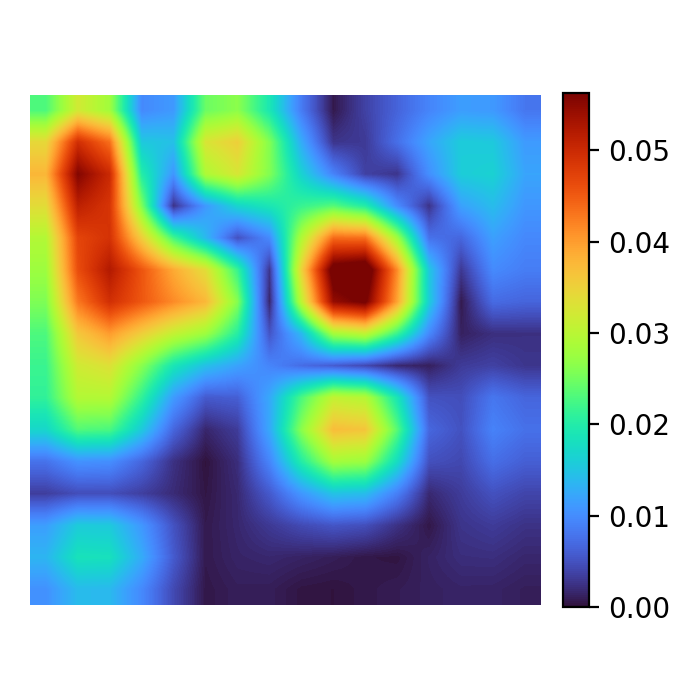}
\end{subfigure}
\hspace{0.02\textwidth}
\begin{subfigure}[t]{0.33\textwidth}
    \centering 
    \includegraphics[width=0.45\textwidth]{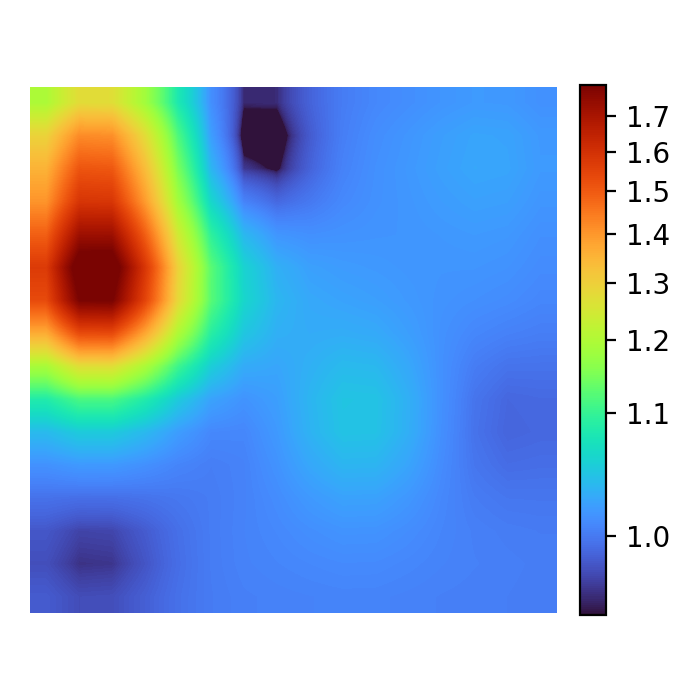}
    \centering 
    \includegraphics[width=0.45\textwidth]{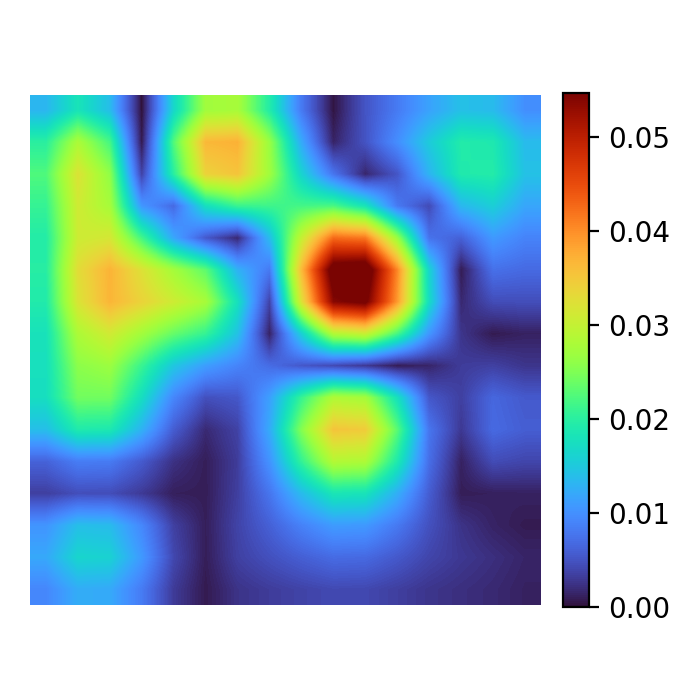}
\end{subfigure}

\begin{subfigure}[t]{0.15\textwidth}
    \centering 
    \includegraphics[width=\textwidth]{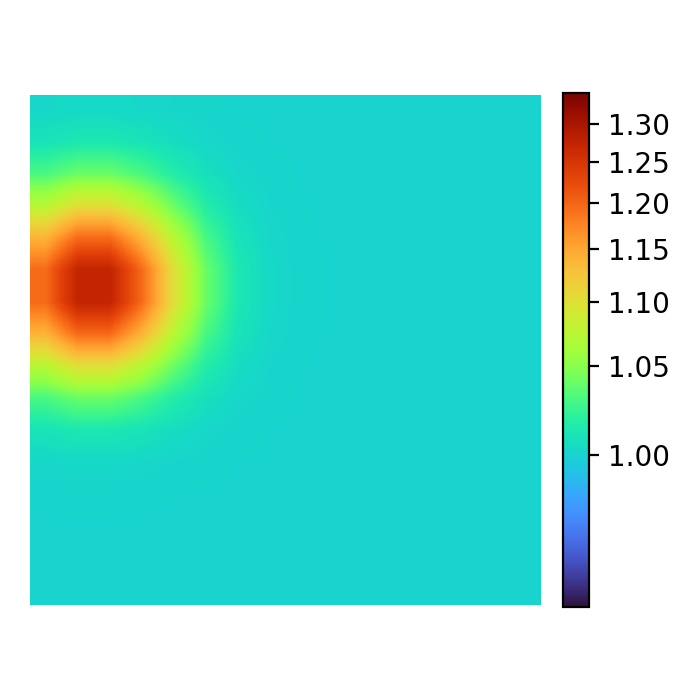}
\end{subfigure}
\hspace{0.02\textwidth}
\begin{subfigure}[t]{0.33\textwidth}
    \centering 
    \includegraphics[width=0.45\textwidth]{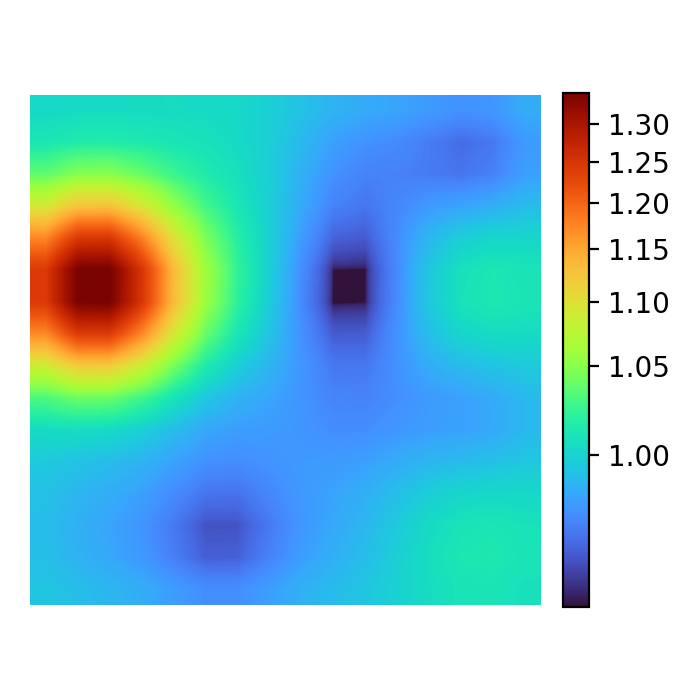}
    \centering 
    \includegraphics[width=0.45\textwidth]{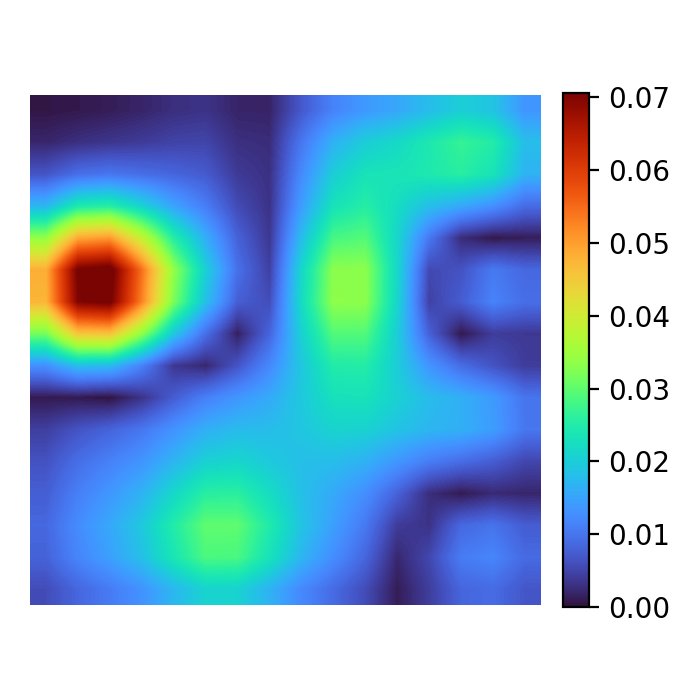}
\end{subfigure}
\hspace{0.02\textwidth}
\begin{subfigure}[t]{0.33\textwidth}
    \centering 
    \includegraphics[width=0.45\textwidth]{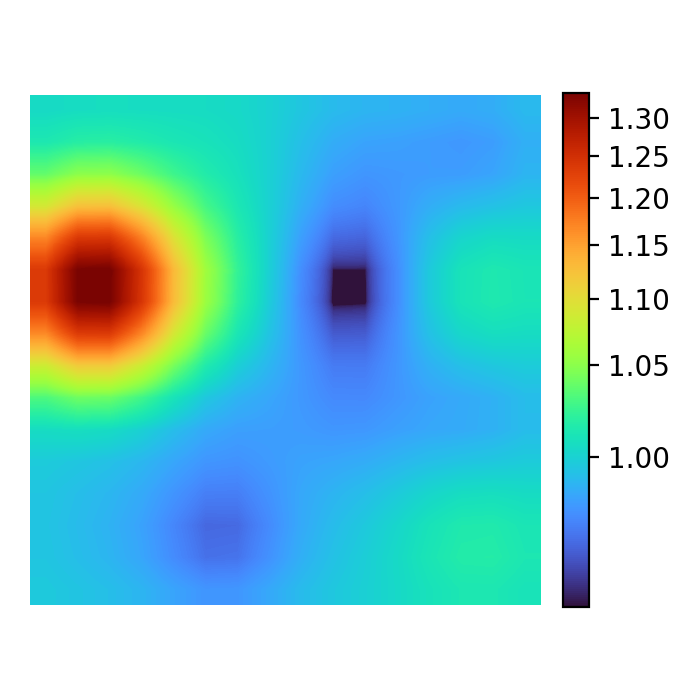}
    \centering 
    \includegraphics[width=0.45\textwidth]{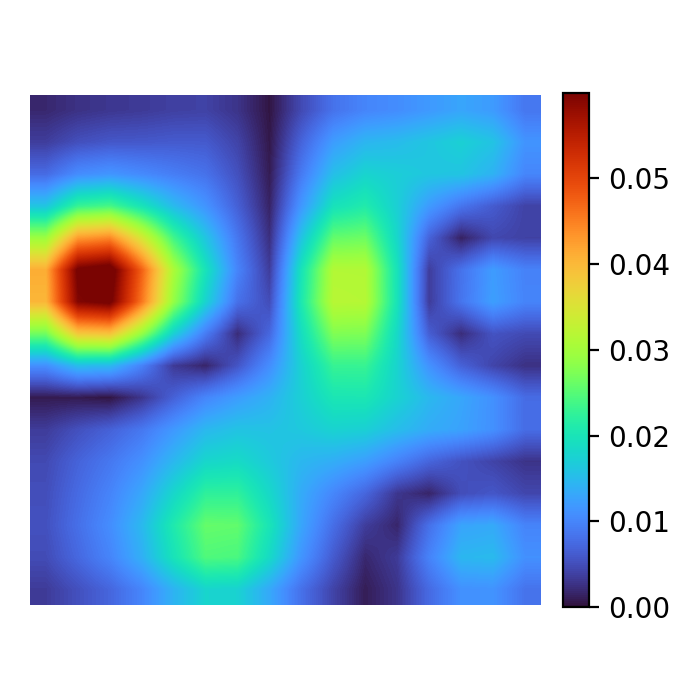}
\end{subfigure}

\begin{subfigure}[t]{0.15\textwidth}
    \centering 
    \includegraphics[width=\textwidth]{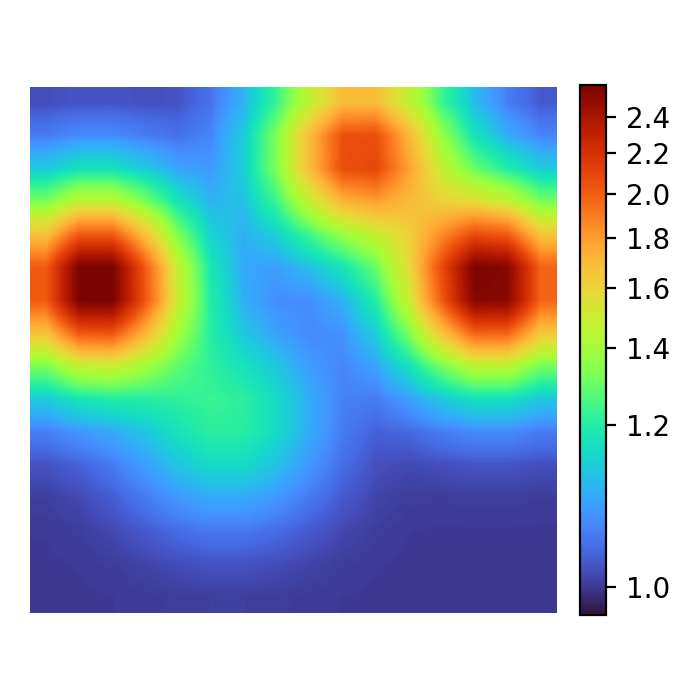}
    \caption*{True inclusion}
\end{subfigure}
\hspace{0.02\textwidth}
\begin{subfigure}[t]{0.33\textwidth}
    \centering 
    \includegraphics[width=0.45\textwidth]{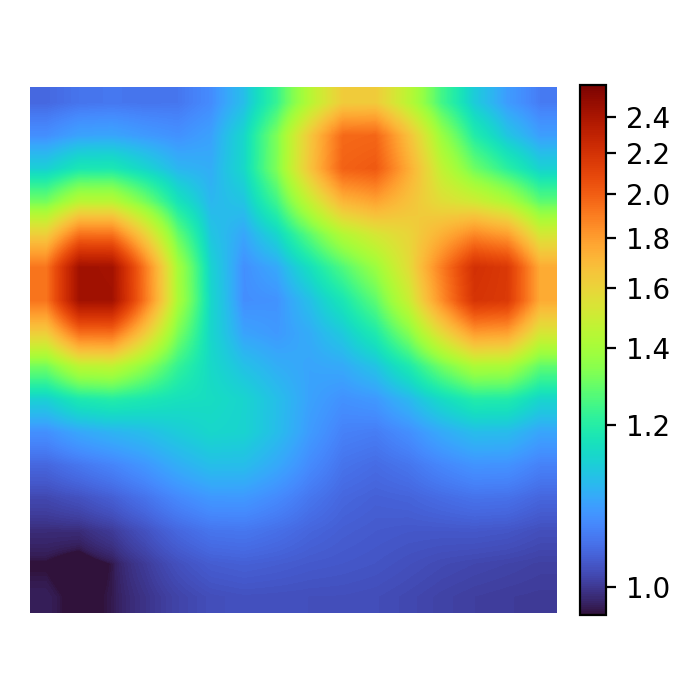}
    \centering 
    \includegraphics[width=0.45\textwidth]{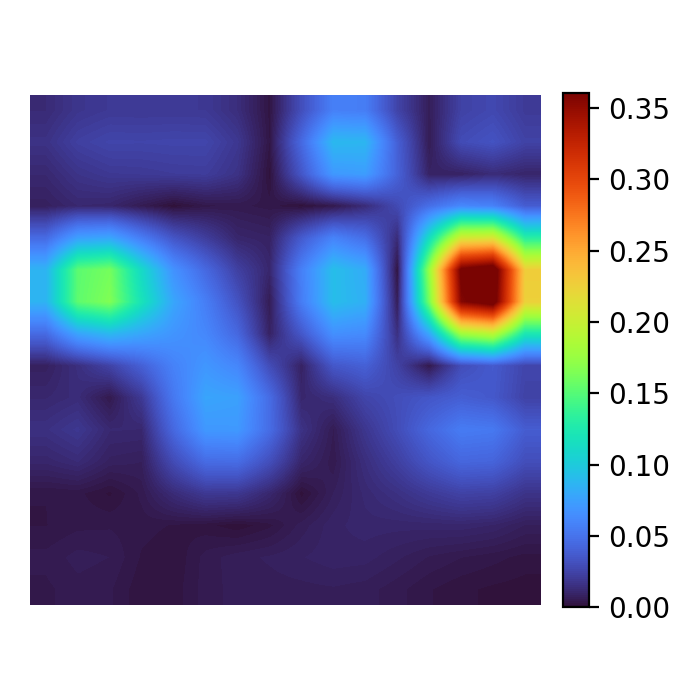}
    \caption*{Recovery and Error without sparse transformer}
\end{subfigure}
\hspace{0.02\textwidth}
\begin{subfigure}[t]{0.33\textwidth}
    \centering 
    \includegraphics[width=0.45\textwidth]{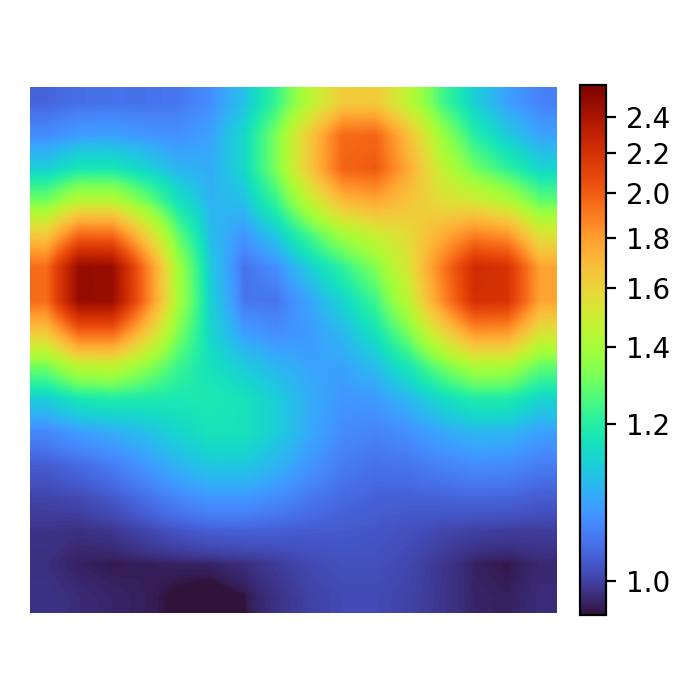}
    \centering 
    \includegraphics[width=0.45\textwidth]{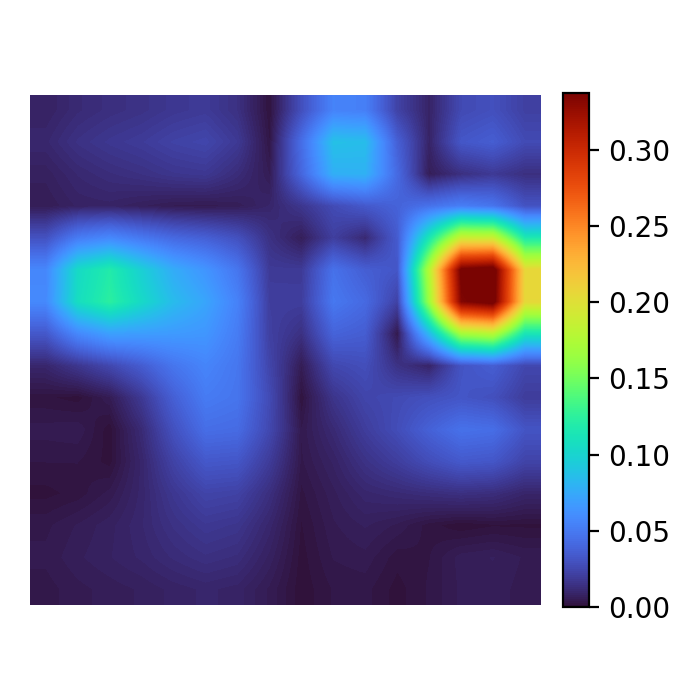}
    \caption*{Recovery and Error with sparse transformer}
\end{subfigure}

 \caption{EIT reconstructions for four different inclusions (rows). Each row shows the true conductivity (left), reconstructions and error maps without (middle two columns) and with sparse transformer (right two columns).}
    \label{fig:eit_lambda_comparison}
\end{figure}

For the three examples in Fig\,\ref{fig:eit_lambda_comparison}, the relative $L_1$ errors are as follows: 
\begin{itemize}
    \item Without sparse transformer: 
    $1.99\!\times\!10^{-2},\ 2.15\!\times\!10^{-2},\ 5.42\!\times\!10^{-2}$.
    \item With sparse transformer: 
    $1.70\!\times\!10^{-2},\ 1.87\!\times\!10^{-2},\ 4.86\!\times\!10^{-2}$.
\end{itemize}
These results show that the network with the sparse transformer consistently achieves more accurate reconstructions across various inclusions, as measured by the relative $L_1$ error.
\end{enumerate}

\subsection{Image generation for MNIST dataset.}  

We next evaluate the proposed model on the MNIST dataset. Following \cite{ot_flow}, we adopt an encoder–decoder architecture: the encoder $B:\mathbb{R}^{784}\to \mathbb{R}^d$ maps images to a latent representation, and the decoder $D:\mathbb{R}^d \to \mathbb{R}^{784}$ reconstructs images from this latent space. The encoder–decoder pair is trained independently to satisfy $D(B(x)) \approx x$. 

Our flow model is trained to transport the latent prior $\rho_0$ to the target distribution $\rho^*$. To generate new digit images, we sample from $\rho_0$, transport these samples to $\rho_T$ using the learned flow, and then decode them with $D$.

\begin{figure}[ht]
    \centering
    \includegraphics[width=0.2\linewidth,trim=0cm 0cm 0cm 0.8cm,clip]{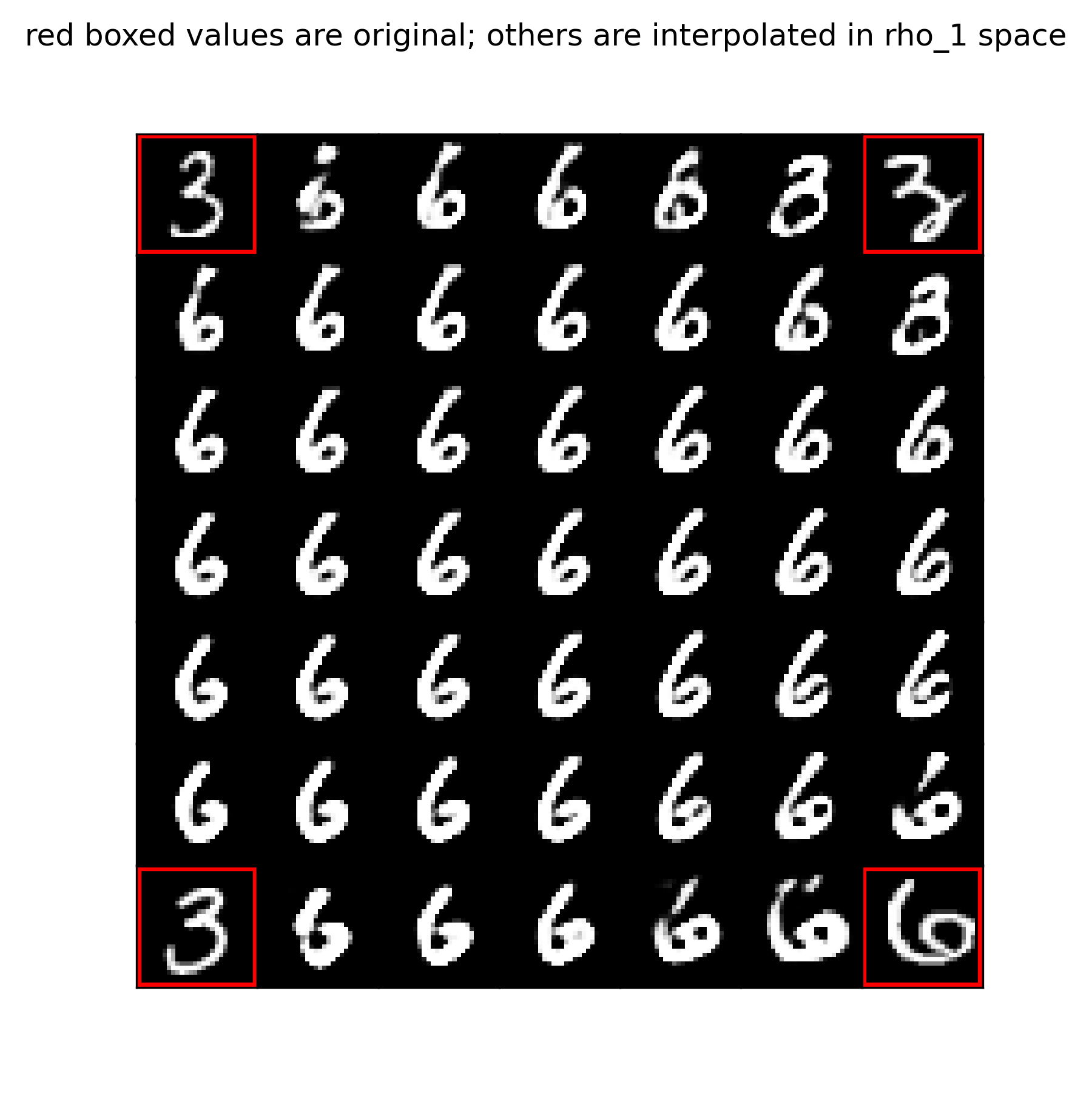}
    \includegraphics[width=0.2\linewidth,trim=0cm 0cm 0cm 0.8cm,clip]{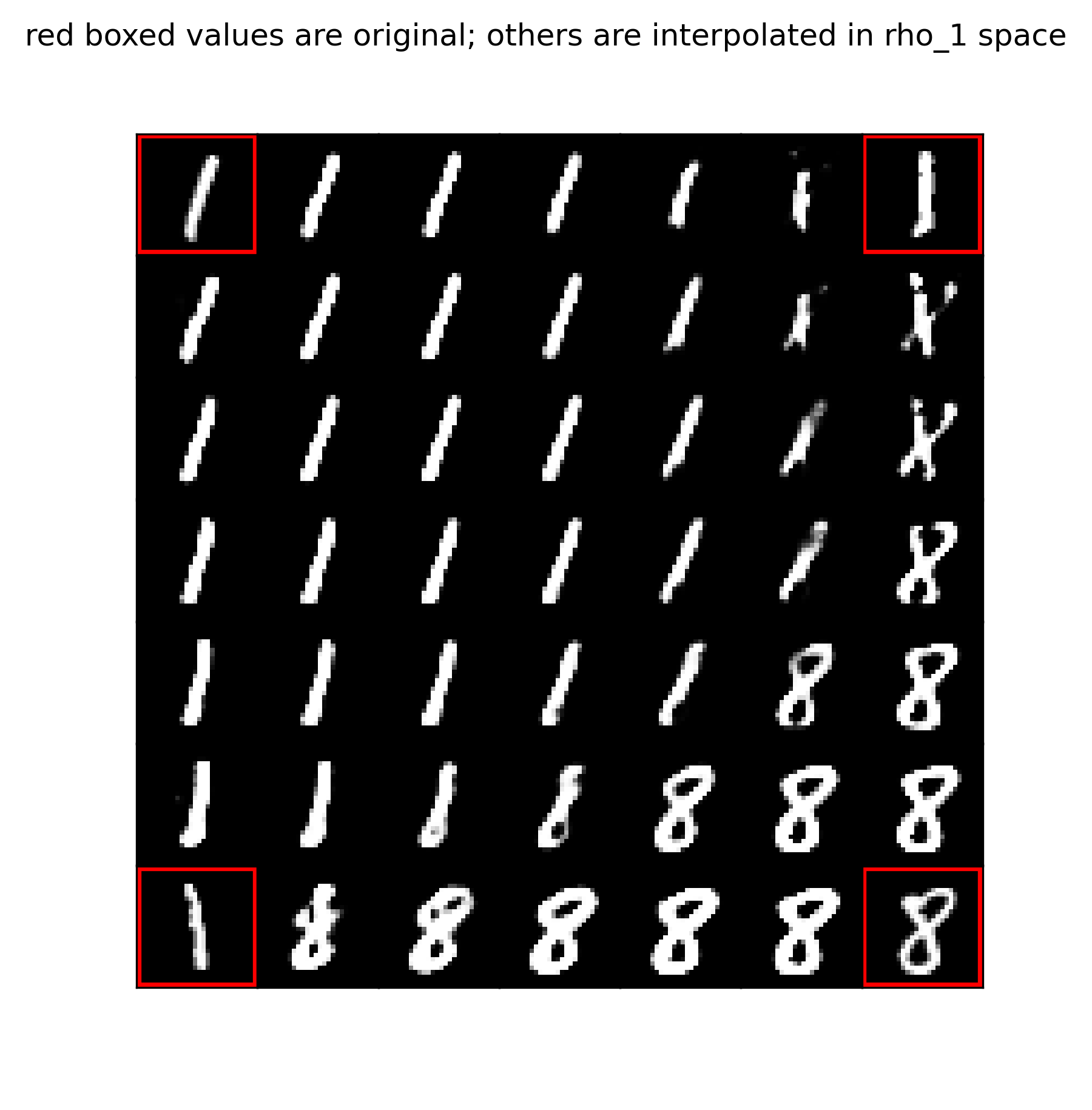}
    \includegraphics[width=0.2\linewidth,trim=0cm 0cm 0cm 0.8cm,clip]{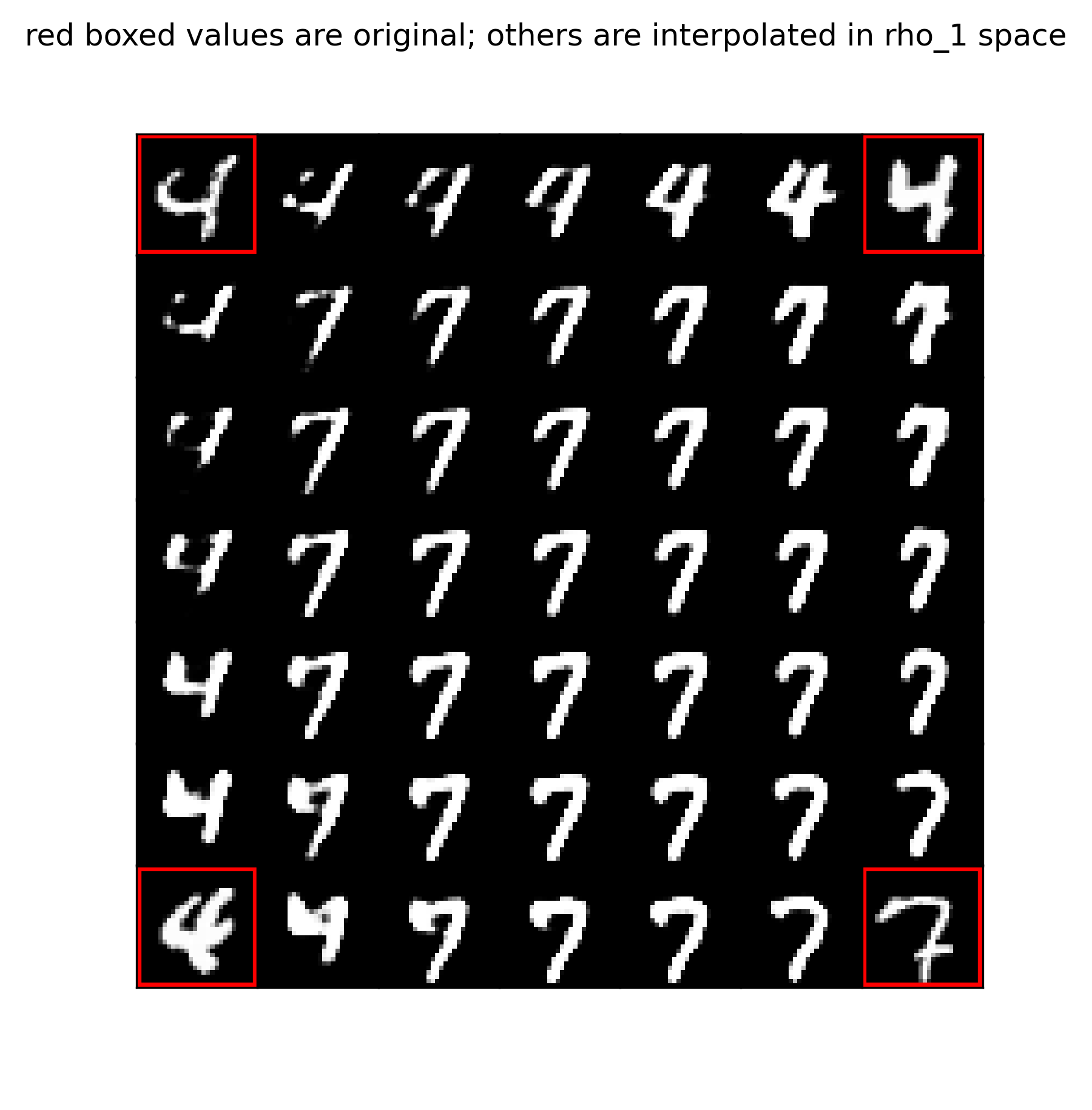}
    \includegraphics[width=0.2\linewidth,trim=0cm 0cm 0cm 0.8cm,clip]{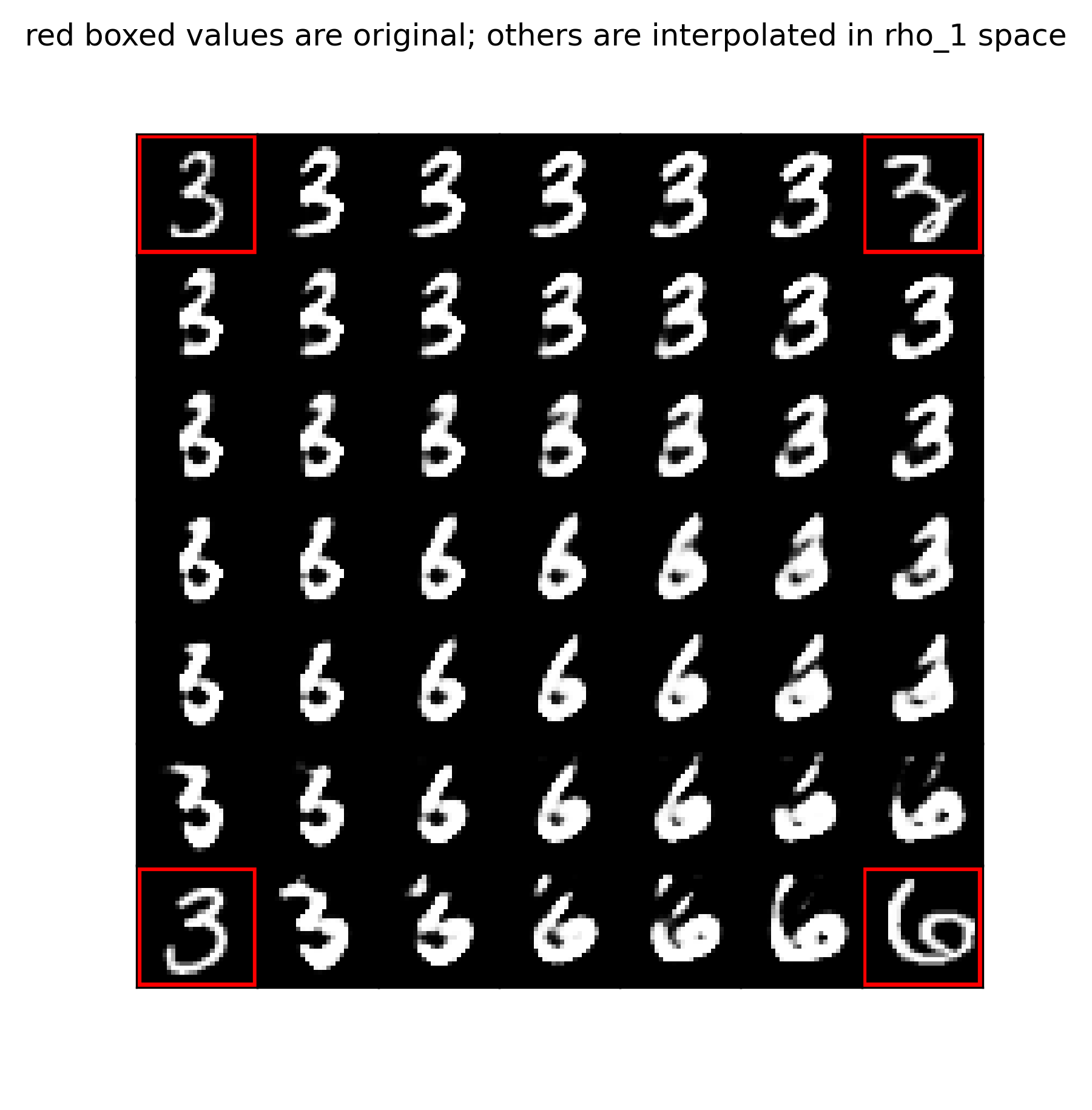}

    \caption{Digit interpolations on MNIST using the learned flow. The first example is generated without the sparse transformer, and the following three are generated with it.}
    \label{fig:mnist}
\end{figure}

Figure~\ref{fig:mnist} demonstrates that the sparse transformer effectively learns the MNIST distribution. Compared with the flow trained without the sparse transformer (first plot), which shows limited generalization, our model attains faster convergence and improved generalization quality.

\section{Summary and discussion}
In this work, we introduced a sparse transformer architecture derived from the regularized Wasserstein proximal operator with an $L_1$ prior distribution.
This formulation embeds problem-specific structural information, such as sparsity, directly into the network through the proximal mapping of the $L_1$ function, thereby improving both the accuracy and interpretability of the learned flow.

Theoretical analysis shows that the $L_1$ prior not only enforces sparsity in the learned representations but also accelerates convergence when the target distribution is sparse.
As established in Section~\ref{sec_long_time_analysis}, this mechanism yields faster KL decay and avoids gradient blow-up.
Experiments in generative modeling and Bayesian inverse problems further demonstrate that the proposed sparse transformer achieves more accurate recovery of target distributions and converges faster than standard neural ODE–based flows.

Beyond these results, our framework highlights how regularized Wasserstein proximal sampling methods serve as a unifying tool that bridges ideas from mean field games~\cite{ruthotto2020machine}, Wasserstein gradient flows~\cite{lee2024deep}, and optimal control formulations of transformer architectures~\cite{kan2025optimal}, thereby extending their reach within machine learning.
In future work, we plan to rigorously analyze the acceleration properties of RWPO-induced flows for more general target distributions under sparse or low-dimensional structures, and to extend this approach to other important priors, such as total variation or structured low-rank regularizations, within broader classes of generative AI and Bayesian inverse problems.

\begin{appendix}
\section{Postponed derivation and proof}
\label{sec_pf}
\begin{proof}[Proof of Lemma \ref{lem:rwpo}]
Introduce a Lagrange multiplier $\Phi(t,x)$ and consider the augmented Lagrangian
\begin{equation}
\label{Lagrang}
\mathcal{L} = \int_0^h \int \left( \tfrac{1}{2}\|v\|^2 \rho + \Phi\left( \partial_t \rho + \nabla\cdot(\rho v) - \beta^{-1}\Delta\rho \right)\right)\,dx\,dt + \int \psi(x) q(x)\,dx\,.
\end{equation}
The Euler-Lagrange equations yield
\begin{align}
\partial_t \rho + \nabla\cdot(\rho \nabla\Phi) &= \beta^{-1}\Delta\rho  \\
\partial_t \Phi + \tfrac{1}{2}\|\nabla \Phi\|^2 &= -\beta^{-1}\Delta \Phi\,,
\end{align}
with boundary conditions $\rho(0,x) = \rho_k(x)$ and $\Phi(h,x) = -\psi(x)$.  
Applying the Hopf-Cole transform
\[
\eta = \exp\!\left( \tfrac{\beta}{2}\Phi \right), \qquad \hat{\eta} = \frac{\rho}{\eta}\,,
\]
decouples the system into forward and backward heat equations with initial and terminal conditions $\eta(h,x) = \exp(-\tfrac{\beta}{2}\psi(x))$ and $\eta(0,x)\hat{\eta}(0,x) = \rho_k(x)$. Solving via convolution with the heat kernel leads directly to \eqref{closed_RWPO}.
\end{proof}

\begin{proof}[Proof of Proposition \ref{prop_KL_L1}]
Let $I(t)=I(\rho_t\|\rho^*)$. From \eqref{eq:KL-diss} and \eqref{KL-Fisher},
\begin{equation}
y'(t)= -\beta^{-1} I(t) - \lambda \mathcal D(\rho_t) \;\le\; -\beta^{-1} I(t) - \lambda\gamma \sqrt{I(t)}\,.
\end{equation}
By the log-Sobolev inequality, $I(t)\ge a\,y(t)$ with $a=2/C_{\mathrm{LS}}$. Hence
\begin{equation}
y'(t)\le -a y(t) - \lambda\gamma \sqrt{a}\,\sqrt{y(t)}\,.
\end{equation}
Consider the scalar comparison ODE $u'(t)=-a u(t) - \lambda\gamma \sqrt{a}\,\sqrt{u(t)}$, $u(0)=y(0)$. Setting $z(t)=\sqrt{u(t)}$ yields
$z'(t)=-(a/2)z(t)-(\lambda\gamma/2)\sqrt{a}$. Then, we have the explicit formula
\begin{equation}
z(t)=\Big(\sqrt{y(0)}+\tfrac{\lambda\gamma}{\sqrt{a}}\Big)e^{-\tfrac{a}{2}t}-\tfrac{\lambda\gamma}{\sqrt{a}}\,.
\end{equation}
Therefore $y(t)\le (z(t)_+)^2$, giving \eqref{eq:KL-upper}. 
\end{proof}

 \begin{proof}[Proof of Proposition \ref{prop:moment-ordering}]
Let $M_2^{\lambda}(t):=\int \|x\|_2^2 \rho_t^{\lambda}(x)\,dx$. 
Multiplying \eqref{eq:FP} by $\|x\|_2^2$ and integrating by parts gives
\begin{align}
\frac{d}{dt}M_2^{\lambda}(t)
&= \int \|x\|_2^2 \partial_t \rho_t^{\lambda}(x)\,dx \\
&= -2 \int x \cdot \nabla \phi(x)\, \rho_t^{\lambda}(x)\,dx
   - 2 \int x \cdot \nabla \psi^{\lambda}(x)\, \rho_t^{\lambda}(x)\,dx
   + 2 d \beta^{-1}\,.\notag
\end{align}
Since $\psi^{\lambda}(x)=\lambda\|x\|_1$ and $x\!\cdot\!\nabla\psi^{\lambda}(x)=\lambda\|x\|_1$, 
we obtain \eqref{eq:moment-evolution}, proving (i).

For (ii), consider $\nabla\phi(x)=-A x$ with $A\succeq0$. 
Then \eqref{eq:moment-evolution} becomes
\[
\frac{d}{dt} M_2^{\lambda}(t)
= -2\,\mathbb E\!\left[X_t^{\lambda\top} A X_t^{\lambda}\right]
  -2\lambda\,\mathbb E\!\left[\|X_t^{\lambda}\|_1\right]
  + 2 d \beta^{-1}.
\]
Under a synchronous coupling (shared noise and identical initial data), 
the drift $-\lambda\,\mathrm{sign}(X_t^{\lambda})$ exerts a stronger pull toward the origin for larger~$\lambda$, 
implying that $|X_t^{\lambda_1}|\le |X_t^{\lambda_2}|$ holds coordinatewise almost surely.
Taking expectations yields $M_2^{\lambda_1}(t)\le M_2^{\lambda_2}(t)$ for all $t\ge0$, 
which proves \eqref{eq:moment-ordering}.
\end{proof}

\noindent\textit{Derivation of \eqref{KKT_system}.} 
Let $\mathcal{I}$ denote the objective functional in \eqref{eqn_uncon_1}. 
The corresponding  KKT conditions are obtained by setting the first variations with respect to $\rho$, $\Phi$, and $\tilde{\phi}$ to zero:
\begin{equation}
\begin{cases}
\dfrac{\delta \mathcal{I}}{\delta \rho} 
= \tfrac{1}{2}\|\nabla \tilde{\phi}\|_2^2 
- \partial_t \Phi 
- \nabla \Phi \!\cdot\! \nabla \tilde{\phi}  
- \beta^{-1}\Delta \Phi = 0\,, \\ 
\dfrac{\delta \mathcal{I}}{\delta \Phi} 
= \partial_t \rho 
+ \nabla \!\cdot\! (\rho \nabla \tilde{\phi})  
- \beta^{-1}\Delta \rho = 0\,, \\
\dfrac{\delta \mathcal{I}}{\delta (\nabla \tilde{\phi})} 
= \rho(\nabla \tilde{\phi} - \nabla \Phi) = 0\,, \\\
\dfrac{\delta \mathcal{I}}{\delta \rho_T} 
= -\Phi_T - \tfrac{\rho^*}{\rho_T} = 0\,,
\end{cases}
\end{equation}
which leads to \eqref{KKT_system}.



\section{Details of numerical experiments}
\label{sec_detail}

We summarize the parameters used for model initialization and training in Table~\ref{tab:parameters}.  

{\begin{table}[ht]
\footnotesize
\centering
\caption{Model and training parameters for different examples.}
\begin{tabular}{|l|c|c|c|c|c|c|c|c|}
\hline
Example & $d$ & $n_t$ & $c$ & width & $n_{\mathrm{iters}}$ & batch size & learning rate & noise \\
\hline
Benchmark & 4 & 48 & 0.1 & 48 & 6001 & 4096 & $1\times 10^{-2} \to 1\times 10^{-6}$ & - \\
MNIST & 64 & 48 & 5 & 128 & 12001 & 1024 & $1\times 10^{-3} \to 1\times 10^{-6}$ & - \\
Elliptic PDE & 128 & 48 & 0.1 & 128 & 12001 & 1024 & $1\times 10^{-3} \to 1\times 10^{-6}$ & $1\times 10^{-2}$ \\
Lorenz system & 3 & 48 & 0.1 & 128 & 12001 & 1024 & $1\times 10^{-3} \to 1\times 10^{-6}$ & $1\times 10^{-2}$ \\
EIT & 32 & 48 & 0.1 & 512 & 12001 & 512 & $5\times 10^{-3} \to 1\times 10^{-6}$ & $1\times 10^{-3}$ \\
\hline
\end{tabular}
\label{tab:parameters}
\end{table}}

\noindent
\textbf{Explanation:}  
Here, $d$ is the problem dimension or latent space dimension; $n_t$ is the number of time steps for the token dynamics \eqref{eqn_particle_intro}; $c$ is the coefficients for the HJB regularizer in \eqref{objective}; {width} is the number of neurons per layer; $n_{\mathrm{iters}}$ is the total number of training iterations; {batch size} is the number of samples per iteration; the learning rate shows the initial and final values; {noise} indicates the noise level in the training data. Moreover, we set $\lambda = 2$ and $\beta = 1$ for the initialization.

\end{appendix}

\bibliographystyle{plain}
\bibliography{ref}
\end{document}